\documentclass[accepted]{uai2026} % after acceptance, for a revised version; 
\usepackage{titletoc}
\usepackage[american]{babel}
\usepackage{algorithm}
\usepackage{algpseudocode}
\usepackage{arydshln}
\usepackage{hyperref}

\usepackage{booktabs}
\usepackage{graphicx}

\newtheorem{lemma}{Lemma}
\newtheorem{theorem}{Theorem}
\newtheorem{assumption}{Assumption}

\newtheorem{corollary}{Corollary} % corollary numbered like 

\newcommand{\best}[1]{%
	{\textbf{\color[HTML]{D52815}#1}}%
}
\newcommand{\secondbest}[1]{%
	{\underline{\color[HTML]{00008A}#1}}%
}

\usepackage{natbib} % has a nice set of citation styles and commands
\usepackage{mathtools} % amsmath with fixes and additions
\usepackage{booktabs} % commands to create good-looking tables
\usepackage{tikz} % nice language for creating drawings and diagrams
\usepackage{array}

\usepackage{amsmath,amsfonts,bm}

\algnewcommand{\algorithmicswitch}{\textbf{switch}}
\algnewcommand{\algorithmiccase}{\textbf{case}}
\algnewcommand{\algorithmicendcase}{}
\algnewcommand{\algorithmicendswitch}{\textbf{end switch}}

\algdef{SE}[SWITCH]{Switch}{EndSwitch}[1]{\algorithmicswitch\ #1}{\algorithmicendswitch}
\algdef{SE}[CASE]{Case}{EndCase}[1]{\algorithmiccase\ #1:}{\algorithmicendcase}

\newcommand{\qedbox}{\hfill\ensuremath{\square}}
\def\eqref#1{equation~\ref{#1}}
\def\1{\bm{1}}

\newcommand{\BigO}{\mathcal{O}}

\def\ve{{\bm{e}}}

\def\vg{{\bm{g}}}

\def\vn{{\bm{n}}}

\def\vu{{\bm{u}}}

\def\vx{{\bm{x}}}
\def\vy{{\bm{y}}}
\def\vz{{\bm{z}}}

\def\mA{{\bm{A}}}
\def\mB{{\bm{B}}}

\def\mF{{\bm{F}}}

\def\mI{{\bm{I}}}

\def\mP{{\bm{P}}}

\def\mR{{\bm{R}}}
\def\mS{{\bm{S}}}

\def\mZ{{\bm{Z}}}

\DeclareMathAlphabet{\mathsfit}{\encodingdefault}{\sfdefault}{m}{sl}
\SetMathAlphabet{\mathsfit}{bold}{\encodingdefault}{\sfdefault}{bx}{n}

\def\gF{{\mathcal{F}}}

\def\gN{{\mathcal{N}}}

\def\gS{{\mathcal{S}}}

\def\gU{{\mathcal{U}}}

\def\sN{{\mathbb{N}}}

\def\sP{{\mathbb{P}}}

\def\sR{{\mathbb{R}}}

\newcommand{\E}{\mathbb{E}}

\DeclareMathOperator*{\argmin}{arg\,min}

\title{Variance Reduction for Non-Log-Concave Sampling \\with Applications to Inverse Problems}

\author[1]{M. Berk Sahin}
\author[2]{Ahmet Ege Tanriverdi}
\author[1,3]{Behzad Sharif}
\author[1]{Abolfazl Hashemi}
\affil[1]{%
    School of Electrical and Computer Engineering\\
    Purdue University\\
    West Lafayette, Indiana, USA
}
\affil[2]{%
    School of Electrical and Computer Engineering\\
    University of Southern California\\
    California, USA
}
\affil[3]{%
    School of Biomedical Engineering\\
    Purdue University\\
    West Lafayette, Indiana, USA
}
  
\begin{document}
\maketitle
\begin{abstract}
    Sampling from high-dimensional, non-log-concave distributions with unnormalized densities is a fundamental challenge in machine learning, particularly when the exact gradient of the potential is unavailable and must be approximated via stochastic gradients that exhibit high variance under a fixed budget of gradient computations per iteration. Although variance reduction techniques such as SGD with momentum, STORM, and PAGE have demonstrated improved convergence properties in non-convex optimization, their implications for sampling from non-log-concave distributions remain largely unexplored. In this work, we develop the first unified analysis of these estimators for sampling from non-log-concave distributions. We establish improved non-asymptotic convergence rates in $\varepsilon$-relative Fisher information and, under a Poincar\'e inequality assumption, in squared total variation distance, and further prove weak convergence to the target distribution. We extend our analysis to solving inverse problems with score-based generative priors. We empirically validate our theory and demonstrate that, under a fixed gradient computations per iteration, variance-reduction techniques consistently improve sample quality in two standard imaging applications. 
\end{abstract}
\section{Introduction}\label{sec:intro}
One of the central themes in machine learning is the use of stochasticity to design procedures that are both computationally efficient and statistically consistent. This idea is shared by both Bayesian and frequentist perspectives. From a Bayesian perspective, posterior distributions provide a principled characterization of uncertainty in estimates, and stochasticity is used to represent those distributions through Monte Carlo (MC) samples. On the frequentist side, stochasticity is used to develop computationally efficient gradient-based optimization methods for obtaining point estimates by replacing intractable or expensive exact gradients with unbiased stochastic estimates. Although they are two different frameworks, there are overlapping methodological challenges. In particular, MC sampling must drive an out-of-equilibrium state towards the steady-state posterior distribution, while achieving fast convergence; consequently, optimization ideas are relevant. On the other hand, frequentist inference often depends on sampling and resampling procedures, which makes MC methods relevant.

Variance reduction has emerged as one of the key tools to address the challenges shared by the two frameworks. In the context of stochastic optimization, stochastic gradient descent (SGD) replaces exact gradients with unbiased stochastic estimates to reduce the computational burden of large-scale learning problems. However, this computational advantage comes at the cost of additional variance, which can slow convergence and lead to a suboptimal stationary point. This raises the question of whether there is a principled way to manage this trade-off. This question has been answered positively in a seminal line of work on stochastic optimization~\citep{johnson2013accelerating, schmidt2017minimizing, fang2018spider, cutkosky2019momentum, li2021page}. These methods achieve improved convergence rates compared to SGD while relying on stochastic gradients with lower gradient complexity.

In parallel, a related line of research has developed in the Bayesian framework based on MC sampling. Stochastic gradient Langevin diffusion (SGLD) has been proposed to approximate the Langevin dynamics via unbiased stochastic gradients~\citep{welling2011bayesian} and several theoretical results have established mixing time bounds for both well behaved (e.g., log-concave) distributions~\citep{cheng2018convergence, dalalyan2019user, crespo2024stochastic} and general non-log-concave settings~\citep{raginsky2017non, cheng2020stochastic, balasubramanian2022towards}. These developments have in turn motivated variance reduction techniques for SGLD~\citep{dubey2016variance, zou2018subsampled, chatterji2018theory, ding2020variance}. However, these studies rely on the restrictive assumption that the target distribution is strongly log-concave. Although~\citet{zou2019sampling} establish non-asymptotic convergence rates for SAGA- and SVRG-based Langevin diffusion, their analysis relies on a finite-sum structure of the log density together with a dissipativity assumption, which does not hold for non-log-concave distributions in general.

In this work, we develop a unified framework that adapts variance reduction techniques from non-convex optimization to sampling from non-log-concave distributions under relaxed assumptions. In particular, we propose two estimators that provide unified treatments of SGD with momentum and its probabilistic counterpart, and of STORM~\citep{cutkosky2019momentum} and PAGE~\citep{li2021page}. We refer to these estimators as \textit{mean-Lipschitz variance-reduced Langevin diffusion (ML-VRLD)} and \textit{sample-Lipschitz variance-reduced Langevin diffusion (SL-VRLD)}, respectively. This terminology reflects the underlying regularity conditions: ML-VRLD assumes Lipschitz continuity of the exact gradients, whereas SL-VRLD assumes Lipschitz continuity of unbiased stochastic gradients in the mean-squared sense. The proposed methods improve upon the non-log-concave analysis of SGLD~\citep{balasubramanian2022towards}, which establishes $\varepsilon$-relative Fisher information convergence with $\BigO(Ld\sigma^2/\varepsilon)$ stochastic gradient computations per iteration and a total computational complexity of $\BigO(L^3d^3\sigma^2/\varepsilon^3)$ (see Theorem~\hyperref[thm:sgld-batch-size]{B.1} in Appendix~\ref{appendix:theorems}). Here, $d$ denotes the dimension, $L$ the gradient Lipschitz constant, and $\sigma^2$ an upper bound on the variance of the stochastic gradients. In contrast to SGLD, both ML-VRLD and SL-VRLD achieve the same notion of stationarity with $\BigO(1)$ stochastic gradient computations per iteration, providing memory-efficient algorithms under different assumptions. Moreover, SL-VRLD improves the total computational complexity to $\BigO(L^2d^2 \sigma^2/\varepsilon^3)$. We additionally establish convergence rates in squared total variation distance under a Poincar\'e inequality assumption and prove weak convergence to the target distribution for both algorithms. 

One of the most important practical applications of sampling is solving inverse problems, where score-based generative models serve as implicit priors for modeling complex high-dimensional distributions~\citep{song2021scorebased, chung2022diffusion}. Recently, several works have explored high-quality image generation as a solution to inverse problems across diverse applications, including image deblurring, MRI and CT reconstruction~\citep{chung2022score, rout2023solving, wu2024principled, chen2025cross}. However, most existing posterior sampling methods that employ SGM priors work in a \textit{full-batch} setting, where the entire set of measurements is used at every iteration. This effectively limits their applicability to large datasets~\citep{bottou2007tradeoffs} common in three-dimensional imaging (3D) or in dynamic imaging~\citep{kamilov2016optical, ong2020extreme}. To the best of our knowledge, existing posterior sampling methods with SGM priors do not address this limitation. Related approaches have instead been proposed in other settings~\citep{sun2019online, tang2020fast, liu2022online}. However, these methods rely on mini-batch gradient estimates without variance-reduction mechanisms, resulting in high-variance updates that necessitate larger batch sizes in high-noise regimes. Our second contribution is that we propose two principled, variance-reduced posterior sampling algorithms for inverse problems based on annealed Langevin Monte Carlo (LMC) sampling with an SGM prior. Assuming bounded SGM estimation error, we establish convergence in relative FI, squared TV distance under Poincar\'e inequality assumption, and prove weak convergence to the target distribution, with $\BigO(1)$ gradient computations per iteration. We corroborate our theoretical findings with extensive experiments. The code is provided in the \href{https://github.com/mberk-sahin/variance-reduced-sampling.git}{\textcolor{blue}{project page}.} 

Our contributions are summarized as follows:
\begin{itemize}[leftmargin=*]
    \item We propose a unified framework built on two estimators ML-VRLD and SL-VRLD for analyzing variance-reduction methods, including SGD with momentum, STORM, and PAGE, in non-log-concave sampling.
    \item We establish convergence in $\varepsilon$-relative FI with $\BigO(1)$ gradient computations per iteration, improving upon the $\BigO(Ld\sigma^2/\varepsilon)$ complexity of SGLD. Moreover, SL-VRLD improves the total computational complexity from $\BigO(L^3d^3\sigma^2/\varepsilon^3)$ to $\BigO(L^2d^2\sigma^2/\varepsilon^3)$. We further prove weak convergence to the target distribution and, under a Poincar\'e inequality assumption, convergence in squared TV distance. 
    \item We extend our framework to posterior sampling via annealed LMC sampling with score-based generative models (SGMs) and establish the same convergence guarantees under bounded SGM estimation error.
    \item Through synthetic Gaussian mixture and inverse problem experiments, we verify our theoretical results, and obtain more accurate posterior statistics than SGLD. On MRI and sparse-angle CT reconstruction tasks, we achieve superior sample quality compared to the baselines using the same number of gradient computations per iteration. 
\end{itemize}
\section{Preliminaries}\label{sec:background}

\subsection{Sampling Analogue of Stationary Point Analysis}\label{sec:prelim:fisher-information}

While optimization and sampling are distinct frameworks, their theoretical analysis share deep structural connections. In particular, tools from non-convex optimization theory can be leveraged to characterize the behavior of sampling algorithms~\citep{balasubramanian2022towards}. Consider the minimization of \textit{Kullback-Leibler (KL)} divergence over the space of probability distributions equipped with the optimal transport geometry:
\begin{equation*}
    \argmin_\mu \mathrm{KL}(\mu\|\pi),
    %\!\!\!\quad\!\text{where}\!\!\!\!\quad \mathrm{KL}(\mu\|\pi)\!\!\coloneqq\!\!\!\int_{\sR^d}\!\!\!\!\!\mu(\vx)\log \frac{\mu(\vx)}{\pi(\vx)}\,d\vx.
\end{equation*}where $\mathrm{KL}(\mu\|\pi)\coloneqq\int_{\sR^d}\mu(\vx)\log \frac{\mu(\vx)}{\pi(\vx)}\,d\vx$. Here, $\pi$ denotes the target distribution, and $\mu$ denotes the sampling distribution induced by the generated samples. In this setting, Langevin diffusion can be interpreted as the \textit{exact} gradient flow of the $\mathrm{KL}$ divergence~\citep{jordan1998variational}. For example, the Wasserstein gradient of $\mathrm{KL}(\cdot\|\pi)$ at $\mu$ is $\nabla \log (\mu/\pi)$, and its expected squared norm is known as the \textit{relative Fisher information (FI)}
\begin{equation*}
    \mathrm{FI}(\mu\|\pi)\coloneqq \int_{\sR^d}\mu(\vx)\left\|\nabla \log \frac{\mu(\vx)}{\pi(\vx)}\right\|^2_2\,d\vx.
\end{equation*}If $\mu_t$ denotes the law of the Langevin diffusion~(\ref{eq:langevin-diffusion}) at time $t$, then the calculation rules for gradient flows imply that $\frac{d}{dt} \mathrm{KL}(\mu_t\|\pi) =- \mathrm{FI}(\mu_t\|\pi)$~\citep{villani2009ot,ambrosio2008gradient,santambrogio2015-us}. From a non-convex optimization point of view, $\mathrm{FI}(\mu_t\|\pi)$ is analogous to the $\ell$-2 norm of the gradient in $\sR^d$~\citep{balasubramanian2022towards}. Using this analogy, we analyze the convergence of $\mathrm{FI}(\mu_t\|\pi)$ under a \textit{linear interpolation} of the distributions obtained by the LMC sampling algorithms. Different than non-convex optimization, if $\mu$ and $\pi$ have positive and smooth densities, $\mathrm{FI}(\mu\|\pi) = 0$ implies $\mu=\pi$, which we use to establish weak convergence guarantees. We refer to~\citet{balasubramanian2022towards} for further discussion on this topic.

\subsection{Langevin Diffusion}\label{sec:prelim:langevin-diffusion}

Langevin diffusion is defined as the solution to the stochastic differential equation
\begin{equation}\label{eq:langevin-diffusion}
    d\vx_t = -\nabla f(\vx_t)dt + \sqrt{2}d\mB_t,
\end{equation}and admits $\pi\propto e^{-f}$ as its unique stationary distribution. Under mild conditions, the process converges to $\pi$ as $t \to \infty$. Here, $(\mB_t)_{t\geq 0}$ denotes a standard $d$-dimensional Brownian motion. Discretizing this process with step size $\gamma\!>\!0$ yields the standard Langevin Monte Carlo (LMC) sampling algorithm
\begin{equation}\label{eq:lmc-formulation}
    \vx_{(k+1)\gamma}= \vx_{k\gamma} - \gamma \nabla f(\vx_{k\gamma}) + \sqrt{2\gamma}\mZ_{k}, 
\end{equation}where $\mZ_k\sim \gN(0,I)$. In many practical settings, the exact gradient of the potential $\nabla f(\vx_{k\gamma})$ is unavailable. Thus, it is replaced by its stochastic estimate $\nabla f(\vx_{k\gamma},\xi_k)$, where $\xi_k$ denotes a random variable capturing the external randomness at iteration $k$ and is independent of all other sources of randomness. However, as shown in prior work~\citep{balasubramanian2022towards}, these iterates fail to achieve arbitrarily small $\varepsilon$-relative FI. A standard remedy is to replace the single-sample gradient with a mini-batch estimator, leading to the SGLD iterates
\begin{equation}\label{def:lmc-formulation-batch}
    \vx_{(k+1)\gamma}
    =
    \vx_{k\gamma}
    - \frac{\gamma}{b}\sum_{i=1}^{b}\nabla f(\vx_{k\gamma}, \xi_k^i)
    + \sqrt{2\gamma}\mZ_{k},
\end{equation}
where $b$ denotes the batch size and $(\xi_k^i)_{i=1}^b$ are independent and identically distributed. While this reduces the estimator variance, $\varepsilon$-relative FI is guaranteed by $b\!=\!\BigO(L d \sigma^2 / \varepsilon)$ gradient computations per iteration (see Theorem~\hyperref[thm:sgld-batch-size]{B.1}), leading to substantial per-iteration cost in high-dimensional and high-variance settings. Consequently, SGLD requires very large batch sizes, resulting in both high computational cost and increased memory requirements. In contrast, the proposed ML-VRLD and SL-VRLD estimators achieve $\varepsilon$-relative FI error with $\BigO(1)$ gradient computations per iteration. 

\subsection{Posterior Sampling with SGM Prior}\label{sec:prelim:posterior-sampling}

For the posterior sampling part of our work, we consider a general inverse problem setting modeled as
\begin{equation}\label{def:inverse_problem}
    \vy = \mA(\vx) + \vn, 
\end{equation} where the objective is to recover $\vx\in\sR^d$ from measurements $\vy\in\sR^m$. The measurement operator $\mA: \sR^d \rightarrow \sR^m$ characterizes the imaging system's response, and $\vn\in \sR^m$ denotes the measurement noise, generally modeled as additive Gaussian noise (AWGN). The mapping $\vx\!\rightarrow\!\vy$ is not one-to-one, making the inverse problem ill-posed since $\vx$ cannot be uniquely recovered from $\vy$. In Bayesian framework, uncertainty about $\vx$ is modeled through a \textit{prior} $p(\vx)\propto e^{-h(\vx)}$, and inference is performed by sampling from the \textit{posterior} $\pi(\vx|\vy)\propto p(\vy | \vx) p(\vx)$ obtained via Bayes' rule, where $p(\vy|\vx)\propto e^{-f(\vx)}$ indicates the \textit{likelihood}. Here, $f$ and $h$ denote the potential functions associated with the likelihood and prior, respectively, and are central to our analysis. One standard approach is to sample from the posterior to solve the inverse problem. Using Bayes’ rule and the SGLD update rule in~(\ref{def:lmc-formulation-batch}), we obtain
\begin{align}
    \vx_{(k+1)\gamma}
        & =
        \vx_{k\gamma}
        - \gamma\left(\frac{1}{b}\sum_{i=1}^{b}\nabla f(\vx_{k\gamma}, \xi_k^i) +\nabla h(\vx_{k\gamma})\right)
        \notag \\
        &\quad\quad\quad\quad\quad\quad\quad\quad\quad\quad\quad\quad\quad\quad + \sqrt{2\gamma}\mZ_{k}. \label{def:posterior-lmc}
\end{align}Since $\nabla h(\vx_{k\gamma})$ is generally intractable for complex, high-dimensional priors, score-based generative models (SGMs), parameterized by a neural network, have been proposed to provide computable approximations of this quantity. This approach is commonly referred to as \textit{score matching}~\citep{hyvarinen2005estimation, vincent2011connection}. One elegant approximation replaces the intractable score $-\nabla h(\vx_{k\gamma})$ by smoothed score $-\nabla h_{\sigma}(\vx_{k\gamma})$, which admits a closed form expression by Tweedie's formula~\citep{efron2011tweedie}. In particular, let $\vz\!\sim\! p(\vz)$ be a random sample from the prior, $\ve\!\sim\!\gN(0,\sigma^2 I)$ the AWGN, and $\vx = \vz + \ve$ the noisy image. Then, $\nabla h_\sigma(\vx)\!=\!(\vx-\E[\vz|\vx])/\sigma^2$, where $p_\sigma(\vx)\!\propto\!e^{-h_\sigma(\vx)}$ is the distribution of noisy image with potential function $h_\sigma$, and $p_\sigma(\vx) \coloneqq \int_{\sR^d}p(\vz)\gN(\vx|\vz, \sigma^2 I) \,d\vz$. SGMs approximate the smoothed score by minimizing the denoising score matching (DSM) loss derived from Tweedie’s formula~\citep{vincent2011connection}.

\begin{equation}\label{def:dsm-loss}
    \mathrm{DSM}(\theta) \coloneqq \E\left[\left\|\frac{\vz - \vx}{\sigma^2} - \gS_{\theta}(\vx, \sigma)\right\|^2_2\right],
\end{equation}where $\gS_{\theta}(\vx,\sigma)$ denotes SGM parametrized by $\theta$. Assuming enough data and model capacity, minimization of DSM loss satisfies $\gS_{\theta^*}(\vx, \sigma) = -\nabla h_\sigma(\vx)$ almost everywhere~\citep{song2019generative}. Thus, for sufficiently small $\sigma>0$, we have $\gS_{\theta^*}(\vx_{k\gamma},\sigma)\approx-\nabla h(\vx_{k\gamma})$. 

Standard LMC algorithms exhibit slow convergence and mode collapse when targeting high-dimensional, multimodal distributions. To mitigate these issues, we consider a more general LMC formulation incorporating \textit{weighted annealing}~\citep{sun2024provable}. Specifically, we define a sequence of intermediate target distributions 
\begin{equation*}
    \pi_{\sigma_k}^{(\alpha_k)}(\vx|\vy) \propto p(\vy|\vx)\, p_{\sigma_k}^{\alpha_k}(\vx),
\end{equation*}where $(\alpha_k)_{k=0}^{N-1}$ and $(\sigma_k)_{k=0}^{N-1}$ denote non-increasing annealing and smoothing schedules, respectively. This formulation corresponds to sampling from a sequence of weighted posterior distributions. Early in the sampling process, the smoothed prior $p_{\sigma_k}$ dominates, yielding a smoother energy landscape that facilitates exploration. As the iterations progress, the influence of the likelihood increases while the smoothed prior gradually approaches the true prior $p$, guiding the iterates toward the target posterior. Incorporating this weighted annealing strategy yields the following annealed SGLD iterates with an SGM prior:
\begin{align}
    \vx_{(k+1)\gamma}\!&=\!\vx_{k\gamma}\!-\!\gamma \Big(\frac{1}{b}\sum_{i=1}^b\nabla f(\vx_{k\gamma}, \xi_k^i) \notag \\ 
    & \quad\quad\quad\quad\quad\quad\quad - \alpha_k \gS_{\theta}(\vx_{k\gamma}, \sigma_k)\Big) + \sqrt{2\gamma}\mZ_k. \label{def:annealed-posterior-lmc}
\end{align}

\begin{table}[t]
\caption{Comparison of the total and per-iteration computational complexities of Langevin Monte Carlo sampling algorithms for generating samples with distributions that are $\varepsilon$-close to the target distribution in relative Fisher information.}
\resizebox{\columnwidth}{!}{%
\begin{tabular}{@{}lcc@{}}
\toprule
Algorithm & Total Computation & Per-Iteration Computation \\ \midrule
SGLD      & $\BigO(L^3d^3\sigma^2/\varepsilon^3)$   &           $\BigO(Ld\sigma^2/\varepsilon)$               \\
ML-VRLD   &            $\BigO(L^2d^2\sigma^4/\varepsilon^4)$             &             $\BigO(1)$             \\
SL-VRLD   &             $\BigO(L^2d^2\sigma^2/\varepsilon^3)$            &              $\BigO(1)$            \\ \bottomrule
\end{tabular}%
}
\label{tab:theoretical-results}
\end{table}

\section{Variance Reduction Techniques}\label{sec:methods}

In \citet{balasubramanian2022towards}, SGLD~(\ref{def:lmc-formulation-batch}) achieves $\varepsilon$-relative FI with $\BigO(L d \sigma^2 / \varepsilon)$ gradient computations per iteration. Such a requirement is computationally prohibitive in high-dimensional and high-variance settings. To address this, we propose two unified estimators, ML-VRLD and SL-VRLD, that generalize popular variance reduction techniques from optimization, and establish convergence guarantees with $\BigO(1)$ gradient computations per iteration. Table~\ref{tab:theoretical-results} summarizes the computational complexities. 
\subsection{ML-VRLD}\label{sec:methods:ml-vrld}
Our first estimator unifies SGD with momentum and its stochastic counterpart, both of which exhibit lower variance than the SGD estimator. We make the following standard assumptions on the target distribution $\pi \propto e^{-f}$ and the stochastic gradients $\nabla f(\vx,\xi)$.
\begin{assumption}\label{ass:potential-lipschitz}
    A function $f:\sR^d\rightarrow \sR$ has $L$-Lipschitz continuous gradient: $\|\nabla f(\vx_1) - \nabla f(\vx_2)\|_2\leq L\|\vx_1-\vx_2\|_2$, $\forall \vx_1,\vx_2\in \sR^d$ and $\exists L>0$. 
\end{assumption}
\begin{assumption}\label{ass:sfo}
The stochastic gradient satisfies $\E[\nabla f(\vx,\xi)\mid\vx]=\nabla f(\vx)$ and $\E[\|\nabla f(\vx,\xi)-\nabla f(\vx)\|_2^2\mid\vx]\leq \sigma^2$, $\forall \vx\in\sR^d$ and $\exists\sigma>0$.
\end{assumption}

We define the \textit{mean-Lipschitz variance-reduced Langevin diffusion (ML-VRLD)} unified estimator as a replacement for the mini-batch SGD estimator in~(\ref{def:lmc-formulation-batch}):
\begin{equation}\label{def:ml-vrld:estimator}
    \vg_k \coloneqq
    \begin{cases}
    \displaystyle \frac{1}{b}\sum^{b}_{i=1}\nabla f(\vx_{k\gamma}, \xi_k^i), & \!\!\text{with prob. }p \\
    \beta\vg_{k-1} + (1-\beta)\nabla f(\vx_{k\gamma},\xi_k), & \!\!\text{with prob.  } 1-p
    \end{cases}
\end{equation}where $(\beta,p)\in (0,1]^2 \setminus \{(1,1)\}$, $k\ge 1$ is the iteration index, and $\vg_0 \coloneqq \frac{1}{b}\sum_{i=1}^b\nabla f(\vx_0, \xi_0^i)$ is the initial estimate. For the special case of $p=0$, ML-VRLD reduces to SGD with momentum. When $\beta=1$, we obtain its stochastic counterpart, which is equivalent to SGD with momentum in expectation and $1-p$ becomes the momentum parameter. We combine these estimators to develop a unified analysis of both algorithms, while preserving the convergence guarantees obtained by analyzing each algorithm separately. To intuitively see that ML-VRLD estimator has lower variance than the SGLD estimator~(\ref{def:lmc-formulation-batch}), consider the setting when $p=0$. The estimator is constructed as a weighted average of a stochastic gradient and past gradient estimates. By reusing past gradients, stochastic noise is averaged across iterations and, under Assumption~\ref{ass:potential-lipschitz}, this yields an accurate, lower-variance estimate with fixed batch size. To analyze the resulting dynamics, we consider the following interpolation of ML-VRLD:
\begin{equation}\label{def:ml-vrld:interpolation}
    \vx_t =\vx_{k\gamma} - (t-k\gamma)\vg_k + \sqrt{2}(\mB_t - \mB_{k\gamma})
\end{equation}for $t\!\in\![k\gamma, (k+1)\gamma]$. We are now ready to state our first result. 
\begin{theorem}(Informal)\label{thm:ml-vrld-ficonv}
    Let $\pi\propto e^{-f}$ be the target distribution, where the potential function $f$ satisfies Assumptions~\ref{ass:potential-lipschitz} and~\ref{ass:sfo}. Let $N \ge 1$ denote the total number of iterations, and let $(\mu_t)_{t\geq 0}$ denote the law of continuous-time interpolation~(\ref{def:ml-vrld:interpolation}) with estimator~(\ref{def:ml-vrld:estimator}). Then, for suitable choices of $\gamma, b, p$, and $\beta$, the time-averaged law $\bar\mu_{N\gamma} = \int_0^{N\gamma}\mu_t\,dt$ satisfies $\mathrm{FI}(\Bar{\mu}_{N\gamma}\|\pi) \leq \varepsilon$ with $\BigO (\sigma^4L^2d^2/\varepsilon^4)$ iterations and $\BigO(1)$ gradient computations per iteration.
\end{theorem}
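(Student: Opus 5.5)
The plan is to follow the interpolation argument of \citet{balasubramanian2022towards} and replace the batch-variance term by a recursive bound on the estimator error. On each interval $t\in[k\gamma,(k+1)\gamma]$, the process (\ref{def:ml-vrld:interpolation}) has constant drift $-\vg_k$. Conditioning on $(\vx_{k\gamma},\vg_k)$, the marginal law $\mu_t$ solves a Fokker--Planck equation whose drift is the conditional expectation $\E[\vg_k\mid\vx_t]$. Differentiating the KL divergence and applying Young's inequality gives
\begin{equation*}
\frac{d}{dt}\mathrm{KL}(\mu_t\|\pi)\le -\frac{3}{4}\mathrm{FI}(\mu_t\|\pi)+\E\|\vg_k-\nabla f(\vx_t)\|_2^2 .
\end{equation*}
We split the error term as
\begin{equation*}
\E\|\vg_k-\nabla f(\vx_t)\|^2\le 2\E\|\vg_k-\nabla f(\vx_{k\gamma})\|^2+2L^2\E\|\vx_t-\vx_{k\gamma}\|^2 .
\end{equation*}
The displacement satisfies $\E\|\vx_t-\vx_{k\gamma}\|^2\le 2\gamma^2\E\|\vg_k\|^2+4d\gamma$. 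We then integrate over $[0,N\gamma]$, use $\mathrm{KL}(\mu_{N\gamma}\|\pi)\ge0$, and use convexity of $\mathrm{FI}(\cdot\|\pi)$ to pass to the time average $\bar\mu_{N\gamma}$ (normalized by $1/(N\gamma)$). This yields
\begin{equation*}
\mathrm{FI}(\bar\mu_{N\gamma}\|\pi)\lesssim \frac{\mathrm{KL}(\mu_0\|\pi)}{N\gamma}+L^2d\gamma+\frac{1}{N}\sum_{k}\Big(e_k+L^2\gamma^2\E\|\vg_k\|^2\Big),
\end{equation*}
where $e_k:=\E\|\vg_k-\nabla f(\vx_{k\gamma})\|^2$.

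The key step, and the one I expect to be hardest, is a recursion for $e_k$.
\begin{itemize}
\item With probability $p$, $e_k\le\sigma^2/b$.
\item Otherwise, $\vg_k-\nabla f(\vx_{k\gamma})=\beta(\vg_{k-1}-\nabla f(\vx_{(k-1)\gamma}))+\beta(\nabla f(\vx_{(k-1)\gamma})-\nabla f(\vx_{k\gamma}))+(1-\beta)(\nabla f(\vx_{k\gamma},\xi_k)-\nabla f(\vx_{k\gamma}))$. The last term is conditionally mean zero by Assumption~\ref{ass:sfo}, so its cross terms vanish. Applying Young's inequality with parameter $(1-\beta)$ to the first two terms and Assumption~\ref{ass:potential-lipschitz} gives
\begin{equation*}
e_k\le \beta e_{k-1}+\frac{\beta^2L^2}{1-\beta}\E\|\vx_{k\gamma}-\vx_{(k-1)\gamma}\|^2+(1-\beta)^2\sigma^2 .
\end{equation*}
\item In the case $\beta=1$ with $p>0$, $\vg_k=\vg_{k-1}$, and the same bound holds with contraction factor $(1-p)(1+\cdot)$ from the mixture.
\end{itemize}
Combining the cases, the effective contraction is $q:=1-p(1-\beta)$-type, roughly $\rho=1-\min\{p,1-\beta\}$. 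Unrolling and summing over $k$, $\frac1N\sum_k e_k$ is bounded by three contributions: the initial term $e_0/(N(1-\rho))$, the noise term $(1-\beta)\sigma^2+p\sigma^2/b$, and $\frac{L^2}{(1-\rho)^2}(\gamma^2\E\|\vg\|^2+d\gamma)$. The $\E\|\vg_k\|^2$ terms are absorbed by writing $\E\|\vg_k\|^2\le 2e_k+2\E\|\nabla f(\vx_{k\gamma})\|^2$. The second piece is further controlled through $\E\|\nabla f(\vx_t)\|^2\le 2\mathrm{FI}(\mu_t\|\pi)+4Ld$, using $\E_\pi\|\nabla f\|^2\le Ld$ and a change of measure. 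This last part is delicate, and I would first try taking $\gamma\lesssim (1-\beta)/L$ so that these terms are absorbed into the left-hand side.

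Finally, we choose the parameters to balance the terms. We take $1-\beta\asymp\varepsilon/\sigma^2$ (or $p\asymp\varepsilon/\sigma^2$ with $b=1$), which makes the noise contribution $\lesssim\varepsilon$. The discretization term $L^2d\gamma/(1-\beta)^2$ dominates and forces $\gamma\asymp\varepsilon^3/(L^2d\sigma^4)$, up to the weaker requirement $L^2d\gamma\le\varepsilon$. This step size is much smaller than $\varepsilon/(L^2d)$, so we must check that the $1/(1-\rho)^2$ factor really appears; a sharper two-step Young argument may give only $1/(1-\rho)$. Under that improvement, $\gamma\asymp\varepsilon^2/(L^2d\sigma^2)$, and requiring $\mathrm{KL}(\mu_0\|\pi)/(N\gamma)\le\varepsilon$ gives $N\asymp \mathrm{KL}_0 L^2d\sigma^2/\varepsilon^3$. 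Taking $\mathrm{KL}_0=\BigO(d)$ for a standard initialization and including the $\sigma^2/\varepsilon$ factor from the initial-error term, we aim for the stated $\BigO(\sigma^4L^2d^2/\varepsilon^4)$ iteration count with $b=\BigO(1)$. The exact exponent bookkeeping is the point I would verify last.
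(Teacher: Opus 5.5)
Your overall route is the paper's: Lemma~\ref{lemma:sampling_first}, the split into $2e_k+2L^2\E\|\vx_t-\vx_{k\gamma}\|^2$, a one-step Young recursion for $e_k$, the displacement bound through $\E\|\vg_k\|^2$, absorption of $\E\|\nabla f(\vx_t)\|^2$ via $\E_\mu\|\nabla f\|^2\le\mathrm{FI}(\mu\|\pi)+2dL$, and $\gamma\lesssim(1-(1-p)\beta)/L$. The bound on $\E_\mu\|\nabla f\|^2$ is Lemma~\ref{lemma:sampling_final}; it comes from integration by parts, not a change of measure. The paper telescopes the Lyapunov function $\mathrm{KL}(\mu_{k\gamma}\|\pi)+c\gamma e_k^2$ instead of unrolling, which is equivalent. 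However, your bookkeeping has errors, and as written the last step does not reach the claim.

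First, your own two cases give $e_k\le q\,e_{k-1}+\cdots$ with $q=(1-p)\beta$, and $1-q\ge\max\{p,1-\beta\}$. The contraction you state, $1-p(1-\beta)$ or $1-\min\{p,1-\beta\}$, equals $1$ at $p=0$. That means no contraction in exactly the momentum case you end up using. For $\beta=1$, choose a Young parameter not tied to $1-\beta$; the paper uses $\rho_0=\frac{1-q}{2q}$ throughout.

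Second, the noise contribution is $\sigma^2\frac{p/b+(1-p)(1-\beta)^2}{1-q}$, not $(1-\beta)\sigma^2+p\sigma^2/b$. At $\beta=1$ it equals $\sigma^2/b$, so ``$p\asymp\varepsilon/\sigma^2$ with $b=1$'' leaves an $\BigO(\sigma^2)$ floor. You need $b\asymp1/p$ (the paper's $b=\lceil1/p\rceil$), which still costs $pb+(1-p)=\BigO(1)$ gradients per iteration in expectation.

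Third, your closing exponent argument is invalid and also unnecessary. The $1/(1-q)^2$ factor is what this argument produces; the paper has $\phi(p,\beta)\le4/(1-q)^2$. Keep it rather than hoping for a sharper Young step. If a $1/(1-q)$ factor were available under Assumption~\ref{ass:potential-lipschitz} alone, ML-VRLD would inherit the SL-VRLD rate, which the paper gets only under Assumption~\ref{ass:stochastic-potential-lipschitz}.

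The branch you abandoned already proves the theorem. Take $p=0$, $1-\beta\asymp\varepsilon/\sigma^2$ and $\gamma\asymp\varepsilon^3/(L^2d\sigma^4)$. Then the noise and discretization terms are $\BigO(\varepsilon)$. The requirement $\mathrm{KL}(\mu_0\|\pi)/(N\gamma)\le\varepsilon$ with $\mathrm{KL}(\mu_0\|\pi)=\BigO(d)$ gives $N=\BigO(\sigma^4L^2d^2/\varepsilon^4)$. The step-size constraint $\gamma\lesssim(1-\beta)/L$ holds once $\varepsilon^2\lesssim Ld\sigma^2$. The initial-error term $e_0/(N(1-\beta))$ only adds the requirement $N\gtrsim e_0\sigma^2/\varepsilon^2$.

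These match the paper's $\gamma=\sqrt d/(\sigma_mN^{3/4}\sqrt L)$ and $1-\beta=\sqrt{Ld}/(\sigma_mN^{1/4})\asymp\varepsilon/\sigma^2$; the paper also sets $p=1-\beta$, which changes only constants. By contrast, deriving $N\asymp L^2d^2\sigma^2/\varepsilon^3$ and then multiplying by a ``$\sigma^2/\varepsilon$ factor from the initial-error term'' is not a valid step. That term imposes an additive lower bound on $N$, not a multiplicative one.
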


\textbf{Remark 1.} To sample from $\bar\mu_{N\gamma}$ in practice, one first draws $t$ uniformly from $[0,N\gamma]$ and identifies the unique integer $k$ such that $t\in[k\gamma,(k+1)\gamma)$. The sample $\vx_t$ is then obtained by evaluating the interpolation~(\ref{def:ml-vrld:interpolation}) at time $t$.

Theorem~1 (see Appendix~\ref{proof:ml-vrld-ficonv} for the formal statement and proof) establishes the first non-asymptotic convergence guarantee for LMC with stochastic gradients in the non-log-concave setting achieving $\BigO(1)$ gradient computations per iteration. The dependence on $L$ and $d$ in iteration complexity arises from the discretization error of LMC and matches that of LMC with exact gradients, whose convergence rate scales as $\BigO(L^2 d^2 / \varepsilon^2)$~\citep{balasubramanian2022towards}. Leveraging the result of Theorem~\ref{thm:ml-vrld-ficonv}, we show that if the target distribution satisfies Poincar\'e inequality, we obtain stronger convergence guarantees in squared TV distance. 
\begin{assumption}\label{ass:poincare-inequality}
    A distribution $\pi$ satisfies a Poincar\'e inequality if $\exists C_{\mathrm{PI}}>0$ such that, for all compactly supported functions $\phi:\sR^d\to\sR$, $\mathrm{Var}_{\vx \sim \pi} \phi(\vx) \leq C_{\mathrm{PI}}\E_{\vx \sim \pi}[\|\nabla \phi(\vx)\|_2^2]$. 
\end{assumption}
\begin{corollary}(Informal)\label{corr:ml-vrld-tvconv}
    Let $\pi\propto e^{-f}$ be the target distribution, where the potential function $f$ satisfies Assumptions~\ref{ass:potential-lipschitz},~\ref{ass:sfo}, and~\ref{ass:poincare-inequality}. Let $N \ge 1$ denote the total number of iterations, and let $(\mu_t)_{t\geq 0}$ denote the law of continuous-time interpolation~(\ref{def:ml-vrld:interpolation}) with estimator~(\ref{def:ml-vrld:estimator}). Then, for suitable choices of $\gamma, b, p$, and $\beta$, the time-averaged law $\bar\mu_{N\gamma} = \int_0^{N\gamma}\mu_t\,dt$ satisfies $\|\bar\mu_{N\gamma} - \pi\|_\mathrm{TV}^2 \leq \varepsilon$ with $\BigO (C_\mathrm{PI}^4\sigma^4L^2d^2/\varepsilon^4)$ iterations and $\BigO(1)$ gradient computations per iteration.
\end{corollary}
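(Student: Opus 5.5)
The plan is to obtain Corollary~\ref{corr:ml-vrld-tvconv} from Theorem~\ref{thm:ml-vrld-ficonv} through a functional inequality linking total variation to relative Fisher information under a Poincar\'e inequality. The natural tool is the bound that follows from Assumption~\ref{ass:poincare-inequality} applied to $\phi=\sqrt{\mu/\pi}$ (after a standard truncation/approximation argument, since the assumption is stated only for compactly supported $\phi$). This gives
\begin{equation*}
\mathrm{Var}_\pi\Big(\sqrt{\mu/\pi}\Big) \le C_{\mathrm{PI}}\,\E_\pi\Big[\big\|\nabla\sqrt{\mu/\pi}\big\|_2^2\Big] = \frac{C_{\mathrm{PI}}}{4}\,\mathrm{FI}(\mu\|\pi).
\end{equation*}
Writing $\mathrm{Var}_\pi(\sqrt{\mu/\pi}) = 1-(\int\sqrt{\mu\pi})^2$, the left-hand side is comparable to the squared Hellinger distance. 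Concretely, $1-(1-H^2)^2 \ge H^2$ with $H^2=1-\int\sqrt{\mu\pi}$. Combining this with $\|\mu-\pi\|_{\mathrm{TV}}^2\le 2H^2$ (or a similar constant, depending on the normalization) yields
\begin{equation*}
\|\mu-\pi\|_{\mathrm{TV}}^2 \le c\,C_{\mathrm{PI}}\,\mathrm{FI}(\mu\|\pi)
\end{equation*}
for an absolute constant $c$. This is the known ``FI controls TV under Poincar\'e'' statement used in \citet{balasubramanian2022towards}, and I would quote it if convenient.

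Next I apply this with $\mu=\bar\mu_{N\gamma}$. Here I would first check that $\bar\mu_{N\gamma}$ has a positive smooth density with enough integrability for the truncation step. This holds because each $\mu_t$ for $t>0$ is a Gaussian convolution arising from the Brownian increment in (\ref{def:ml-vrld:interpolation}), and $\mu_0$ contributes only a null set in the time average. To reach $\|\bar\mu_{N\gamma}-\pi\|_{\mathrm{TV}}^2\le\varepsilon$, it suffices to have $\mathrm{FI}(\bar\mu_{N\gamma}\|\pi)\le \varepsilon' := \varepsilon/(cC_{\mathrm{PI}})$. I then invoke the formal version of Theorem~\ref{thm:ml-vrld-ficonv} with target accuracy $\varepsilon'$, using the same choices of $\gamma,b,p,\beta$ with $\varepsilon$ replaced by $\varepsilon'$. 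The batch size and per-iteration gradient cost remain $\BigO(1)$. The iteration count becomes $\BigO(\sigma^4L^2d^2/\varepsilon'^4)=\BigO(C_{\mathrm{PI}}^4\sigma^4L^2d^2/\varepsilon^4)$, which is the claimed bound.

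I expect the main obstacle to be the rigorous justification of the Poincar\'e-to-TV inequality for $\bar\mu_{N\gamma}$. The assumption applies only to compactly supported test functions, so I would approximate $\sqrt{\bar\mu_{N\gamma}/\pi}$ by smooth cutoffs $\chi_R\sqrt{\bar\mu_{N\gamma}/\pi}$ and pass to the limit $R\to\infty$. This requires $\mathrm{FI}(\bar\mu_{N\gamma}\|\pi)<\infty$, which the theorem provides, together with dominated convergence. If the FI bound in the theorem is stated with $\bar\mu$ normalized by $1/(N\gamma)$, the argument is unchanged, since only the density ratio matters. Everything else is direct substitution into the rate.
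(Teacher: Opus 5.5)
Your proposal is correct and follows the paper's route. The paper also feeds the FI bound of Theorem~\ref{thm:ml-vrld-ficonv} into a Poincar\'e-based inequality, namely Lemma~\ref{lemma:tv_lemma}, $\|\mu-\pi\|_{\mathrm{TV}}^2\le 4C_{\mathrm{PI}}\,\mathrm{FI}(\mu\|\pi)$; it cites this rather than deriving it via Hellinger as you do, though your derivation is valid and gives a smaller constant. It then re-solves for $N$ with the same $N$-dependent choices of $\gamma,p,\beta,b$, which produces the extra $C_{\mathrm{PI}}^4$ factor.

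One small correction: with $b=\lceil 1/p\rceil$ the batch size itself grows like $N^{1/4}$. What is $\BigO(1)$ is the expected per-iteration cost $pb+(1-p)\le 2$, not $b$.
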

To the best of our knowledge, only two prior works~\citep{balasubramanian2022towards, chewi2024analysis} establish sampling guarantees for smooth potentials satisfying a Poincar\'e inequality, both requiring access to exact gradients. In contrast, our results accommodate stochastic
gradient estimators and hold under $\BigO(1)$ gradient computations per iteration. Moreover, Theorem~\ref{thm:ml-vrld-ficonv} implies weak convergence to the target distribution under decaying $\gamma_k$ and $p_k$, and increasing $\beta_k$ and $b_k$.   
\begin{theorem}(Informal)\label{thm:ml-vrld-weakconv}
    Let $\pi\propto e^{-f}$ be the target distribution, where the potential function $f$ satisfies Assumptions~\ref{ass:potential-lipschitz} and~\ref{ass:sfo}. Let $(\mu_t)_{t\geq  0}$ denote the law of continuous-time interpolation~(\ref{def:ml-vrld:interpolation}) with estimator~(\ref{def:ml-vrld:estimator}), and decreasing step size $\gamma_k$ and probability $p_k$, and increasing momentum $\beta_k$ and batch size $b_k$. Then, the time-averaged law $\bar\mu_{\tau_n}=\frac{1}{\tau_n}\int_0^{\tau_n}\mu_t\,dt$, where $\tau_n=\sum_{k=1}^n \gamma_k$ converges weakly to $\pi$. 
\end{theorem}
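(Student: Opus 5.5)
The plan is to follow the weak-convergence argument of \citet{balasubramanian2022towards}: first obtain a non-asymptotic bound on $\mathrm{FI}(\bar\mu_{\tau_n}\|\pi)$ under time-varying parameters, then show it tends to zero, and finally convert vanishing relative Fisher information into weak convergence through tightness and lower semicontinuity.

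\textbf{Step 1 (time-varying version of Theorem~\ref{thm:ml-vrld-ficonv}).} I would redo the interpolation argument behind Theorem~\ref{thm:ml-vrld-ficonv} with step sizes $\gamma_k$ and parameters $(b_k,p_k,\beta_k)$. On each interval $[\tau_{k-1},\tau_k]$ the interpolation~(\ref{def:ml-vrld:interpolation}) gives
\begin{equation*}
\frac{d}{dt}\mathrm{KL}(\mu_t\|\pi)\le -\tfrac12\mathrm{FI}(\mu_t\|\pi)+\E\|\nabla f(\vx_t)-\vg_k\|_2^2 .
\end{equation*}
I would then split $\E\|\nabla f(\vx_t)-\vg_k\|_2^2\le 2L^2\E\|\vx_t-\vx_{\tau_{k-1}}\|_2^2+2\E\|\nabla f(\vx_{\tau_{k-1}})-\vg_k\|_2^2$. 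The first term is the discretization error and is of order $L^2(\gamma_k^2\E\|\vg_k\|^2+d\gamma_k)$. For the second term I need a recursion on $e_k:=\E\|\vg_k-\nabla f(\vx_{k})\|^2$. In the branch taken with probability $p_k$ it equals $\sigma^2/b_k$. In the momentum branch it is at most $\beta_k^2(1+\eta)e_{k-1}+\beta_k^2(1+1/\eta)L^2\E\|\vx_k-\vx_{k-1}\|^2+(1-\beta_k)^2\sigma^2$, where the cross term vanishes by unbiasedness. Summing the recursion gives $\sum_k\gamma_k e_k$ in terms of sums of $\gamma_k\sigma^2/b_k$, $\gamma_k(1-\beta_k)\sigma^2$, and $L^2\gamma_k^2 d/(1-\beta_k)$-type terms. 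Integrating over $[0,\tau_n]$ and using $\mathrm{KL}\ge0$ together with convexity of $\mathrm{FI}(\cdot\|\pi)$ in its first argument, I expect
\begin{equation*}
\mathrm{FI}(\bar\mu_{\tau_n}\|\pi)\lesssim \frac{\mathrm{KL}(\mu_0\|\pi)+\sum_{k\le n}\gamma_k\,\delta_k}{\tau_n},
\end{equation*}
with $\delta_k\to0$ whenever $\gamma_k\to0$, $\sigma^2/b_k\to0$, $\beta_k\to1$, and $\gamma_k/(1-\beta_k)\to0$. The term $\E\|\vg_k\|^2$ is controlled by $\mathrm{FI}$ plus the error, and it is absorbed for small $\gamma_k$.

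\textbf{Step 2 (vanishing FI).} With $\sum\gamma_k=\infty$ and $\delta_k\to0$, a Ces\`aro/Stolz argument gives $\frac{1}{\tau_n}\sum_{k\le n}\gamma_k\delta_k\to0$, so $\mathrm{FI}(\bar\mu_{\tau_n}\|\pi)\to0$. The schedules must therefore be coupled, for example $1-\beta_k\to0$ more slowly than $\gamma_k$. I would state this coupling explicitly as part of the "suitable" monotone schedules in the formal version. The main obstacle is the error recursion: with $\beta_k$ and $p_k$ varying, one must prevent the factor $\beta_k^2(1+\eta)$ from exceeding $1$. I would take $\eta=(1-\beta_k)/2$, which keeps the contraction factor at $1-\Theta(1-\beta_k)$. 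The $p_k$ branch only helps, since it resets $e_k$ to $\sigma^2/b_k$.

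\textbf{Step 3 (weak convergence).} I would first prove tightness of $(\bar\mu_{\tau_n})$: since $\mathrm{KL}(\mu_t\|\pi)$ stays bounded by $\mathrm{KL}(\mu_0\|\pi)+\sum\gamma_k\delta_k$, a bounded second moment follows from Lipschitz gradients. Alternatively, I would bound $\E\|\vx_t\|^2$ directly by the argument used in \citet{balasubramanian2022towards}. By Prokhorov's theorem every subsequence has a weakly convergent further subsequence with limit $\mu^\ast$. Lower semicontinuity of $\mathrm{FI}(\cdot\|\pi)$ under weak convergence gives $\mathrm{FI}(\mu^\ast\|\pi)=0$. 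Since $\pi$ has a positive smooth density, $\sqrt{\mu^\ast/\pi}$ is then constant, so $\mu^\ast=\pi$. Every subsequential limit is $\pi$, and hence $\bar\mu_{\tau_n}\to\pi$ weakly.
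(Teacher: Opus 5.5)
Your plan follows the same route as the paper's proof, which in turn follows Balasubramanian et al. The steps are: a time-varying KL--FI inequality with a recursion for the estimator error $e_k$, a bound $\mathrm{FI}(\bar\mu_{\tau_n}\|\pi)\le C/\tau_n$, tightness from uniformly bounded KL, lower semicontinuity of FI, and $\mathrm{FI}=0\Rightarrow\mu=\pi$. On one point your plan is more careful than the paper's proof: you require $\sum_k\gamma_k=\infty$. The paper's formal schedule $\gamma_k=C_\gamma k^{-3/2}$ gives $\tau_n\le C_\gamma\sum_{k\ge1}k^{-3/2}<\infty$, so its step ``$\tau_n\to\infty$, hence $\mathrm{FI}(\bar\mu_{\tau_n}\|\pi)\to0$'' fails as written. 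A choice such as $\gamma_k\propto 1/k$ and $p_k=1-\beta_k\propto k^{-c}$ with $0<c<1/2$ keeps the paper's error sums (II) and (III) finite while $\tau_n\to\infty$.

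Your own argument still does not close as written, because Steps 2 and 3 need incompatible things. Stolz--Ces\`aro with only $\delta_k\to0$ is needed exactly when $\sum_k\gamma_k\delta_k=\infty$. In that case your Step 3 bound $\mathrm{KL}(\mu_t\|\pi)\le\mathrm{KL}(\mu_0\|\pi)+\sum_k\gamma_k\delta_k$ is vacuous, and you have no tightness.

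The fallback does not rescue it. Lipschitz gradients give no uniform moment bounds without a dissipativity assumption. The target $\pi(x)\propto(1+x^2)^{-1}$ on $\mathbb{R}$ has Lipschitz $f'$ and infinite second moment, so uniformly bounded second moments are incompatible with $\bar\mu_{\tau_n}\to\pi$. Vanishing FI alone also does not prevent mass from escaping: for the same $\pi$, smooth cutoffs give $\mu_n$ supported in $[n,\infty)$ with $\mathrm{FI}(\mu_n\|\pi)=\mathcal{O}(n^{-2})$.

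You therefore need to impose $\sum_k\gamma_k\delta_k<\infty$, which is what the paper's bounds on (II) and (III) encode. With it, tightness follows from weak compactness of KL sublevel sets. Step 2 then follows directly from $\int_0^{\tau_n}\mathrm{FI}(\mu_t\|\pi)\,dt\le C$ and $\tau_n\to\infty$, with no need for Stolz--Ces\`aro.

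Two bookkeeping points in Step 1 also change which schedules qualify.

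\begin{itemize}
\item With your $\eta=(1-\beta_k)/2$, the factor $1+1/\eta$ contributes one power of $1/(1-\beta_k)$, and dividing by the contraction gap contributes another. When $p_k>0$, replace $1-\beta_k$ by $1-(1-p_k)\beta_k$. So the discretization term is of order $L^2d\gamma_k^2/(1-\beta_k)^2$ per step, not $L^2d\gamma_k^2/(1-\beta_k)$; this matches the paper's $\phi(p,\beta)\le4/(1-(1-p)\beta)^2$. The coupling you need is $\sum_k\gamma_k^2/(1-\beta_k)^2<\infty$, and $\gamma_k/(1-\beta_k)\to0$ is not enough for this argument.
\item Summing the error recursion with time-varying weights is an Abel summation. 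It requires the coefficients multiplying $e_k^2-e_{k-1}^2$ (essentially $\gamma_k/(1-(1-p_k)\beta_k)$) to be nonincreasing, unless you unroll the recursion explicitly. This is the role of the paper's monotonicity argument for $c_k$, and you must check your schedule against it.
\end{itemize}
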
To our knowledge, existing weak convergence results for LMC either assume Lyapunov-type conditions~\citep{LambertonPages2002, PagesPanloup2012} or having access to exact gradient~\citep{balasubramanian2022towards}. In contrast, our result holds when the gradient is replaced by a variance-reduced stochastic gradient estimator, while maintaining constant per-iteration cost. Our analysis builds on the observation that the coefficient of the discrete error difference forms a nonnegative and decreasing sequence, ensuring stability of the resulting error recursion. Combined with the fact that $\mathrm{FI}(\mu\|\pi)=0$ implies $\mu=\pi$, this establishes weak convergence.

\subsection{SL-VRLD}
To improve the $\sigma^2$-dependence of the iteration complexity, we propose a gradient estimator that unifies PAGE~\citep{li2021page} and STORM~\citep{cutkosky2019momentum} under the following assumption, standard in variance-reduced stochastic optimization.
\begin{assumption}\label{ass:stochastic-potential-lipschitz}
A function $f:\sR^d\to \sR$ has $L$-Lipschitz continuous stochastic gradients in the mean-squared sense: $\E_\xi\left[\|\nabla f(\vx_1,\xi)-\nabla f(\vx_2,\xi)\|_2^2\right]
\leq L^2\|\vx_1-\vx_2\|_2^2$, $\forall \vx_1,\vx_2\in\sR^d$ and $\exists L>0$.
\end{assumption}

We define the \textit{sample-Lipschitz variance-reduced Langevin diffusion (SL-VRLD)} unified estimator as a replacement for the mini-batch SGD estimator in~(\ref{def:lmc-formulation-batch}):
\begin{equation}\label{def:sl-vrld:estimator}
    \vg_k \coloneq
    \begin{cases}
    \displaystyle \frac{1}{b}\sum_{i=1}^b\nabla f(\vx_{k\gamma}, \xi_k^i), & \text{with prob. }p \\
    \nabla f(\vx_{k\gamma}, \xi_k) + \beta (\vg_{k-1} \\
    \quad\quad\quad- \nabla f(\vx_{(k-1)\gamma},\xi_k)), & \text{with prob.  } 1-p
    \end{cases}
\end{equation}where $(\beta,p)\in (0,1]^2 \setminus \{(1,1)\}$, $k\geq 1$ is the iteration index, and $\vg_0\coloneqq \frac{1}{b}\sum_{i=1}^b\nabla f(\vx_0,\xi_0^i)$ is the initial estimate. Setting $\beta=1$ recovers PAGE, while setting $p=0$ recovers STORM. Moreover, when $\beta=1$, taking expectation with respect to $p$ shows that PAGE is equivalent to STORM with momentum parameter $1-p$, and $p<1$ yields a lower per-iteration cost than mini-batch SGD. To highlight the advantage of SL-VRLD over ML-VRLD, consider the case $\beta=1$. With probability $1-p$, ML-VRLD relies solely on previous estimates, whereas SL-VRLD incorporates the correction term $\nabla f(\vx_{k\gamma},\xi_k)-\nabla f(\vx_{(k-1)\gamma},\xi_k)$, which, under Assumption~\ref{ass:stochastic-potential-lipschitz}, improves estimation accuracy and leads to faster convergence with improved FI and squared TV rates. 
\begin{theorem}(Informal)\label{thm:sl-vrld-ficonv}
    Let $\pi\propto e^{-f}$ be the target distribution, where the potential function $f$ satisfies Assumptions~\ref{ass:sfo} and~\ref{ass:stochastic-potential-lipschitz}. Let $N \ge 1$ denote the total number of iterations, and let $(\mu_t)_{t\geq 0}$ denote the law of continuous-time interpolation~(\ref{def:ml-vrld:interpolation}) with estimator~(\ref{def:sl-vrld:estimator}). Then, for suitable choices of $\gamma, b, p$, and $\beta$, the time-averaged law $\bar\mu_{N\gamma} = \int_0^{N\gamma}\mu_t\,dt$ satisfies $\mathrm{FI}(\Bar{\mu}_{N\gamma}\|\pi) \leq \varepsilon$ with $\BigO (\sigma^2L^2d^2/\varepsilon^3)$ iterations and $\BigO(1)$ gradient computations per iteration.
\end{theorem}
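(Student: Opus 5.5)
We follow the interpolation argument of \citet{balasubramanian2022towards}, adapted to the SL-VRLD estimator. First we derive a one-step KL descent inequality for a fixed $k$. On $t\in[k\gamma,(k+1)\gamma]$ the interpolation~(\ref{def:ml-vrld:interpolation}) has law $\mu_t$ satisfying a Fokker--Planck equation with drift $-\E[\vg_k\mid \vx_t]$. Differentiating $\mathrm{KL}(\mu_t\|\pi)$ and applying Young's inequality gives
\begin{equation*}
\frac{d}{dt}\mathrm{KL}(\mu_t\|\pi)\le -\tfrac{3}{4}\mathrm{FI}(\mu_t\|\pi) + \E\|\vg_k-\nabla f(\vx_t)\|_2^2 .
\end{equation*}
We then split the error as $\vg_k-\nabla f(\vx_t)=(\vg_k-\nabla f(\vx_{k\gamma}))+(\nabla f(\vx_{k\gamma})-\nabla f(\vx_t))$. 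The second piece is the usual discretization error. Assumption~\ref{ass:stochastic-potential-lipschitz} implies that $\nabla f$ is $L$-Lipschitz by Jensen's inequality, so this piece is bounded by $2L^2\big((t-k\gamma)^2\E\|\vg_k\|^2+2d(t-k\gamma)\big)$. We control $\E\|\vg_k\|^2\le 2\E\|\nabla f(\vx_{k\gamma})\|^2+2\delta_k$, where $\delta_k\coloneqq\E\|\vg_k-\nabla f(\vx_{k\gamma})\|^2$. For the term $\E\|\nabla f(\vx_{k\gamma})\|^2$ we use the bound of Balasubramanian et al.\ relating it to $\mathrm{FI}(\mu_t\|\pi)$ plus $O(Ld)$, which is absorbed when $\gamma\lesssim 1/L$.

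The second step is a variance recursion for $\delta_k$, which is the STORM/PAGE-type estimate. With probability $p$ we get $\delta_k\le\sigma^2/b$. With probability $1-p$,
\begin{equation*}
\vg_k-\nabla f(\vx_{k\gamma})=\beta(\vg_{k-1}-\nabla f(\vx_{(k-1)\gamma}))+ (1-\beta)(\nabla f(\vx_{k\gamma},\xi_k)-\nabla f(\vx_{k\gamma}))+\beta\,\zeta_k,
\end{equation*}
where $\zeta_k$ is a mean-zero difference-of-differences term. By conditional unbiasedness the cross terms vanish. Assumption~\ref{ass:stochastic-potential-lipschitz} then gives
\begin{equation*}
\delta_k\le p\frac{\sigma^2}{b}+(1-p)\Big(\beta^2\delta_{k-1}+2(1-\beta)^2\sigma^2+2\beta^2L^2\E\|\vx_{k\gamma}-\vx_{(k-1)\gamma}\|^2\Big).
\end{equation*}
Here $\E\|\vx_{k\gamma}-\vx_{(k-1)\gamma}\|^2\le\gamma^2\E\|\vg_{k-1}\|^2+2d\gamma$.

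Third, we combine the two steps through a Lyapunov function $\Phi_k=\mathrm{KL}(\mu_{k\gamma}\|\pi)+c\gamma\,\delta_k$, with $c\asymp 1/(1-(1-p)\beta^2)$ chosen so that the $\delta_{k-1}$ contributions telescope. We integrate the descent inequality over $[k\gamma,(k+1)\gamma]$, sum over $k$, and divide by $N\gamma$. Convexity of $\mathrm{FI}(\cdot\|\pi)$ in its first argument then yields
\begin{equation*}
\mathrm{FI}(\bar\mu_{N\gamma}\|\pi)\lesssim \frac{\mathrm{KL}(\mu_0\|\pi)+c\gamma\sigma^2/b}{N\gamma}+L^2d\gamma+\frac{p\sigma^2/b+(1-p)(1-\beta)^2\sigma^2}{1-(1-p)\beta^2}.
\end{equation*}
The last term collects the $\gamma L^2 d$ term from $\E\|\vx_{k\gamma}-\vx_{(k-1)\gamma}\|^2$ multiplied by $c\beta^2L^2$.

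The final step is parameter tuning, and it is the main obstacle. The $2d\gamma$ part of the step-to-step movement enters the variance with weight $c L^2\gamma d$, giving an error of order $L^2 d\gamma/(1-(1-p)\beta^2)$. Meanwhile the noise-driven error is of order $\sigma^2(1-\beta)$ in the STORM regime or $\sigma^2 p/b$ in the PAGE regime, each divided by $1-(1-p)\beta^2$. We first try $p\asymp\varepsilon/\sigma^2$ with $\beta=1$ and $b=O(1)$, or $1-\beta\asymp\varepsilon/\sigma^2$ with $p=0$, together with $\gamma\asymp\varepsilon/(L^2d)\cdot(1-(1-p)\beta^2)$. The goal is to balance the terms so that each is $O(\varepsilon)$ and $N\gamma\asymp 1/\varepsilon$, giving $N=O(\sigma^2L^2d^2/\varepsilon^3)$ once $\mathrm{KL}(\mu_0\|\pi)=O(d)$. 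Matching exactly this rate requires a careful treatment of the Brownian increment in $\E\|\vx_{k\gamma}-\vx_{(k-1)\gamma}\|^2$. If the naive bound loses a factor, we would try to use conditional independence of the Brownian increment from $\xi_k$ so that only its effect through $\nabla f(\cdot,\xi_k)$ differences contributes.
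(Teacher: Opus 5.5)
Your skeleton matches the paper's proof. It uses Lemma~\ref{lemma:sampling_first} for the KL derivative and a STORM-type recursion for $e_k^2=\E\|\vg_k-\nabla f(\vx_{k\gamma})\|^2$ driven by Assumption~\ref{ass:stochastic-potential-lipschitz}. It then uses a Lyapunov function $\mathrm{KL}(\mu_{k\gamma}\|\pi)+c\gamma e_k^2$ with $c\asymp 1/(1-(1-p)\beta)$, telescopes, and applies convexity of $\mathrm{FI}$. Your Jensen remark also fills a small hole in the paper: Assumptions~\ref{ass:sfo} and~\ref{ass:stochastic-potential-lipschitz} already make $\nabla f$ $L$-Lipschitz, yet the paper invokes Assumption~\ref{ass:potential-lipschitz} in this proof without listing it. The gap is in the final tuning step, which you leave open and partly get wrong.

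The PAGE branch fails as written. With $\beta=1$ you have $1-(1-p)\beta=p$, so the noise term in your own display equals $\sigma^2/b$, which is $\Theta(\sigma^2)$ for $b=\BigO(1)$. You need $b\asymp 1/p\asymp\sigma^2/\varepsilon$. The $\BigO(1)$ cost holds only in expectation, $pb+(1-p)=\BigO(1)$, which is why the paper sets $b=\lceil 1/p\rceil$.

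Conversely, your ``main obstacle'' does not exist. The paper uses exactly the naive step bound $\gamma^2\E\|\vg_k\|^2+2\gamma d$, which already gives
\[
\frac{1}{N\gamma}\int_0^{N\gamma}\mathrm{FI}(\mu_t\|\pi)\,dt\le \frac{2C_0}{N\gamma}+\BigO(\sigma^2 q)+\BigO\Big(\frac{\gamma L^2 d}{q}\Big),\qquad q\coloneqq 1-(1-p)\beta,
\]
with $C_0=\mathrm{KL}(\mu_0\|\pi)+\BigO(\gamma e_0^2/q)$. Your STORM choice $q\asymp\varepsilon/\sigma^2$, $\gamma\asymp\varepsilon q/(L^2 d)=\varepsilon^2/(\sigma^2L^2d)$, $N\asymp d/(\varepsilon\gamma)=\sigma^2L^2d^2/\varepsilon^3$ closes the argument without any refinement of the Brownian increment. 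The paper's choice $p=1-\beta=(d^2L^2/(\sigma_m^4N))^{1/3}$, $\gamma=(d/(\sigma_m^2N^2L^2))^{1/3}$ is the same balance.

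Two details remain to close the argument:
\begin{itemize}
\item The $\gamma^2\E\|\vg_k\|^2$ part of the step enters the variance recursion and is then multiplied by $c\asymp 1/q$. Absorbing it therefore needs $\gamma\le c'\sqrt{q}/L$ for a numerical $c'$, not just $\gamma=\BigO(1/L)$; this is the paper's $\gamma\le 1/(L\sqrt{27\phi(p,\beta)})$ with $\phi\le 2/q$. Your tuning satisfies this once $\varepsilon$ is small, but you should check it.
\item To apply Lemma~\ref{lemma:sampling_final} at $\mu_t$ rather than $\mu_{k\gamma}$, decompose $\vg_k$ through $\nabla f(\vx_t)$. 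Then move the resulting $L^2\E\|\vx_t-\vx_{k\gamma}\|^2$ term to the left-hand side, as the paper does.
\end{itemize}
Finally, the $L^2d\gamma$ term in your displayed bound should carry the factor $1/q$, as you note afterwards.
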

%\begin{corollary}(Informal)\label{corr:sl-vrld-tvconv}
%    Let $(\mu_t)_{t\geq 0}$ denote the law of interpolation~(\ref{def:ml-vrld:interpolation}) with estimator~(\ref{def:sl-vrld:estimator}), and let $\pi\propto\mathrm{exp}(-f)$ be the target distribution, where the potential function $f$ satisfies Assumptions~\ref{ass:potential-lipschitz},~\ref{ass:sfo},~\ref{ass:poincare-inequality}, and~\ref{ass:stochastic-potential-lipschitz}. For a sufficiently small step size $\gamma$, the distribution $\Bar{\mu}_{N\gamma}$ of an iterate chosen uniformly at random from the algorithmic trajectory satisfies $\|\Bar{\mu}_{N\gamma} - \pi\|_{\mathrm{TV}}^2 \leq \varepsilon$ after at least $\BigO (C_\mathrm{PI}^3\sigma^2L^2d^2/\varepsilon^3)$ stochastic gradient evaluations, with $\BigO (1)$ computational cost per iteration. 
%\end{corollary}
Theorem~\ref{thm:sl-vrld-ficonv} (with a formal statement and proof provided in Appendix~\ref{proof:sl-vrld-ficonv}) establishes an improved
FI convergence rate over that of ML-VRLD, reducing the iteration complexity from
$\BigO(\sigma^4 L^2 d^2 / \varepsilon^4)$ to
$\BigO(\sigma^2 L^2 d^2 / \varepsilon^3)$. A similar improvement holds for the squared TV distance, as stated in Corollary~\ref{proofs:theorem-and-lemmas}.1 in Appendix~\ref{proof:sl-vrld-tvconv}, with the complexity reduced from
$\BigO(C_{\mathrm{PI}}^4 \sigma^4 L^2 d^2 / \varepsilon^4)$ to
$\BigO(C_{\mathrm{PI}}^3 \sigma^2 L^2 d^2 / \varepsilon^3)$. In addition, our analysis of FI convergence implies weak convergence of SL-VRLD to the target distribution $\pi$; see Theorem~\ref{proofs:theorem-and-lemmas}.1 in Appendix~\ref{proof:sl-vrld-weakconv}.

%\begin{theorem}(Informal)\label{thm:sl-vrld-weakconv}
%    Let $(\mu_t)_{t\geq  0}$ denote the law of interpolation~(\ref{def:ml-vrld:interpolation}) with estimator~(\ref{def:sl-vrld:estimator}) generated with decreasing time-varying step size $\gamma_k$, probability $p_k$, momentum parameter $\beta_k$, and increasing batch size $b_k$. Let $\pi\propto \mathrm{exp}(-f)$ be the target distribution whose potential function $f$ satisfies Assumptions~\ref{ass:potential-lipschitz},~\ref{ass:sfo}, and~\ref{ass:stochastic-potential-lipschitz}. Then, the distribution $\Bar{\mu}_{\tau_n}$, where $\tau_n\coloneqq\sum_{k=0}^n \gamma_k$, of an iterate chosen uniformly at random from the algorithmic trajectory converges weakly to $\pi$ as $n\rightarrow\infty$. 
%\end{theorem}

\section{Variance Reduction with SGM Prior}\label{sec:methods-2}
Existing posterior sampling algorithms with SGM priors operate in full-batch regime by using the entire measurement set at each iteration. This limits their applicability to large datasets~\citep{bottou2007tradeoffs,ong2020extreme}. While mini-batch strategies may appear computationally attractive, accurate sampling typically necessitates large batch sizes. To address this limitation, we develop two principled variance-reduced posterior sampling algorithms by extending our theory in Section~\ref{sec:methods}, and instantiating our unified ML-VRLD and SL-VRLD estimators within the SGM-prior framework. In particular, we replace the mini-batch SGD estimator in~(\ref{def:annealed-posterior-lmc}) by our proposed unified estimators and define the resulting interpolation for posterior sampling
\begin{align}
    \vx_{t} \coloneqq \vx_{k\gamma} - (t-k\gamma)\Big(\vg_k - \alpha_k \gS_\theta(\vx_{k\gamma}, \sigma_k)\Big)\notag \\
    + \sqrt{2}(\mB_t - \mB_{k\gamma}), \label{def:SGM-prior-interpolation}
\end{align}
where $\vg_k$ is instantiated as either ML-VRLD~(\ref{def:ml-vrld:estimator}) or SL-VRLD~(\ref{def:sl-vrld:estimator}). We provide theoretical guarantees for both choices with the annealing schedule ${\alpha_k}$ and smoothing schedule ${\sigma_k}$ defined as follows
\begin{equation}\label{def:annealing-smoothing-schedules}
    \alpha_k \coloneqq \max\{\alpha_0\rho_1^k, 1\} \quad \text{and} \quad \sigma_k \coloneqq \max\{\sigma_0 \rho_2^k, \sigma_{\mathrm{min}}\}
\end{equation}for $k\geq 0$, where $\rho_1, \rho_2\in(0,1)$ denote decay rates, $\sigma_0\ge \sigma_{\mathrm{min}}$ and $\alpha_0\ge 1$ are initial values, and $\sigma_{\mathrm{min}}>0$ is the minimum noise level. These parameters are selected using the principled techniques of~\citep{song2020improved} such that there exists indices $K_\alpha$, $K_\sigma< N-1$ satisfying $\alpha_k=1$, $\forall k\ge K_\alpha$ and $\sigma_k = \sigma_{\mathrm{min}}$, $\forall k \ge K_\sigma$~\citep{sun2024provable, sahin2026zero}. Our analysis builds on these schedules together with the interpolation~(\ref{def:SGM-prior-interpolation}). Moreover, since the SGM estimates the true prior score, i.e., $\gS_\theta(\vx,\sigma_{\mathrm{min}}) \approx -\nabla h(\vx)$, it is necessary to quantify the estimation error. To this end, we adopt the following assumptions from prior work~\citep{lee2022convergence, lee2023convergence, sun2024provable, sahin2026zero}. 
\begin{assumption}\label{ass:smoothing-err-bounded}
Let $p_{\sigma_k}(\vx)\!\!\coloneqq\!\!\int p(\vz)\gN(\vx|\vz, \sigma_k^2 I)\,d\vz$, where $h(\vx)$ and $h_{\sigma_k}(\vx)$ denote the potential function of $p(\vx)$ and $p_{\sigma_k}(\vx)$, respectively. Then, $\exists C_1>0$ such that $\|\nabla h_{\sigma_k}(\vx)-\nabla h(\vx)\|_2\le C_1\sigma_k, \forall\vx\in\sR^d$ and $\forall\sigma_k>0$. 
\end{assumption}
\begin{assumption}\label{ass:score-network}
    The score network satisfies $\|\gS_\theta(\vx, \sigma_k) + \nabla h_{\sigma_k}(\vx)\|_2 \leq \varepsilon_{\sigma_k}$ and  $\|\gS_\theta(\vx, \sigma_k)\|_2\leq C_2\sigma_k^{-1}$ for $\forall\vx\in\sR^d$, $\forall \sigma_k>0$, and $\exists C_2>0$.
\end{assumption}
Assumption~\ref{ass:smoothing-err-bounded} bounds the prior smoothing error. When $p(\vx)$ is Gaussian, this error admits a closed-form expression but does not hold for general distributions. Assumption~\ref{ass:score-network} bounds the SGM estimation error, thereby relaxing the idealized population setting in which, given sufficient model capacity and data, minimizing the DSM loss~(\ref{def:dsm-loss}) recovers the exact score, i.e., $\gS_{\theta^*}(\vx,\sigma_k)=-\nabla h_{\sigma_k}(\vx)$ with probability $1$. The SGM norm bound is supported by empirical observations in~\citet{song2019generative,song2020improved}. Unlike prior work~\citep{sun2024provable}, we do not assume Lipschitz continuity of the SGM. Further discussion of these assumptions is provided in Appendix~\ref{appendix:assumptions-discussion}. Under these assumptions, we establish the following convergence result.

\begin{theorem}(Informal)\label{thm:ficonv-SGM-prior}
Let $\pi(\vx|\vy) \propto p(\vy|\vx)p(\vx)$ be the target posterior, where
$p(\vy|\vx)\propto e^{-f(\vx)}$ and $p(\vx)\propto e^{-h(\vx)}$.
Assume the likelihood potential $f$ satisfies Assumption~\ref{ass:sfo},
the prior potential $h$ satisfies Assumptions~\ref{ass:potential-lipschitz} and~\ref{ass:smoothing-err-bounded},
and the SGM satisfies Assumption~\ref{ass:score-network} with error
$\varepsilon_{\sigma_k}=\mathcal{O}(k^{-1/2})$. Define $L_\pi = L_f + L_h$. Let $N\ge 1$ denote the total number of iterations, and let $(\mu_t)_{t\geq 0}$ denote the law of continuous-time interpolation~(\ref{def:SGM-prior-interpolation})
with schedules~(\ref{def:annealing-smoothing-schedules}). Then, $\mathrm{FI}(\Bar{\mu}_{N\gamma}\|\pi)\le \varepsilon$ is achieved with $\BigO(1)$ gradient computations per iteration, and with the following iteration complexities:
\begin{enumerate}[label=(\alph*)]
\item \textbf{(ML-VRLD)} $\BigO\!\left(\sigma^4 L_\pi^2 d^2/\varepsilon^4\right)$ if $\vg_k$ is defined as in~(\ref{def:ml-vrld:estimator}) and $f$ satisfies Assumption~\ref{ass:potential-lipschitz}.
\item \textbf{(SL-VRLD)} $\BigO\!\left(\sigma^2 L_\pi^2 d^2/\varepsilon^3\right)$ if $\vg_k$ is defined as in~(\ref{def:sl-vrld:estimator}) and
$f$ satisfies Assumption~\ref{ass:stochastic-potential-lipschitz}. 
\end{enumerate}
\end{theorem}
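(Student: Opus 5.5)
We follow the interpolation argument of \citet{balasubramanian2022towards}, as for Theorems~\ref{thm:ml-vrld-ficonv} and~\ref{thm:sl-vrld-ficonv}, and treat the annealed SGM drift as an extra bias term. The target is $\pi\propto e^{-(f+h)}$, so the true drift is $\nabla f+\nabla h$. On $[k\gamma,(k+1)\gamma]$ the interpolation~(\ref{def:SGM-prior-interpolation}) has the drift
\[
\vu_k=\vg_k-\alpha_k\gS_\theta(\vx_{k\gamma},\sigma_k),
\]
which is constant given $\vx_{k\gamma}$. Differentiating $\mathrm{KL}(\mu_t\|\pi)$ along the Fokker--Planck equation of the interpolation and using Young's inequality gives
\[
\frac{d}{dt}\mathrm{KL}(\mu_t\|\pi)\le -\tfrac{3}{4}\mathrm{FI}(\mu_t\|\pi)+\E\bigl\|\vu_k-\nabla(f+h)(\vx_t)\bigr\|_2^2 .
\]
Integrating over $[0,N\gamma]$ and applying convexity of $\mathrm{FI}(\cdot\|\pi)$ to the time average, we obtain
\[
\mathrm{FI}(\bar\mu_{N\gamma}\|\pi)\lesssim \frac{\mathrm{KL}(\mu_0\|\pi)}{N\gamma}+\frac{1}{N\gamma}\sum_{k}\int_{k\gamma}^{(k+1)\gamma}\E\|\vu_k-\nabla(f+h)(\vx_t)\|^2\,dt .
\]

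We split the error term into four pieces:
\begin{align*}
\vu_k-\nabla(f+h)(\vx_t) &= \bigl(\vg_k-\nabla f(\vx_{k\gamma})\bigr)+\bigl(\nabla(f+h)(\vx_{k\gamma})-\nabla(f+h)(\vx_t)\bigr)\\
&\quad -\alpha_k\bigl(\gS_\theta(\vx_{k\gamma},\sigma_k)+\nabla h(\vx_{k\gamma})\bigr)+(1-\alpha_k)\nabla h(\vx_{k\gamma}).
\end{align*}
The first piece is the estimator error. For it we reuse the recursions behind Theorems~\ref{thm:ml-vrld-ficonv} and~\ref{thm:sl-vrld-ficonv}. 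For ML-VRLD, $\E\|\vg_k-\nabla f(\vx_{k\gamma})\|^2$ contracts by a factor of order $(1-p)\beta^2$ plus $(1-p)(1-\beta)^2\sigma^2+p\sigma^2/b$, plus a drift term $L_f^2\E\|\vx_{k\gamma}-\vx_{(k-1)\gamma}\|^2$. For SL-VRLD the drift term enters through Assumption~\ref{ass:stochastic-potential-lipschitz} with a smaller variance contribution. The second piece is the discretization error. It is bounded by $L_\pi^2\E\|\vx_t-\vx_{k\gamma}\|^2\le L_\pi^2\bigl(\gamma^2\E\|\vu_k\|^2+2d\gamma\bigr)$. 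The third piece is bounded by $\alpha_k^2(\varepsilon_{\sigma_k}+C_1\sigma_k)^2$ using Assumptions~\ref{ass:smoothing-err-bounded} and~\ref{ass:score-network}. The fourth piece vanishes for $k\ge K_\alpha$.

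Both the drift terms and $\E\|\vu_k\|^2$ require a bound on $\E\|\vu_k\|^2$, and this creates the main obstacle. In the estimator recursion, $\E\|\vx_{k\gamma}-\vx_{(k-1)\gamma}\|^2$ involves $\gamma^2\E\|\vg_{k-1}\|^2$, and now also $\gamma^2\alpha_{k-1}^2\|\gS_\theta\|^2$. We control the score part by $\alpha_0^2C_2^2\sigma_{\min}^{-2}$ from Assumption~\ref{ass:score-network}. We control $\E\|\vg_k\|^2$ by $2\E\|\vg_k-\nabla f\|^2+2\E\|\nabla f(\vx_{k\gamma})\|^2$. The term $\E_{\mu}\|\nabla(f+h)\|^2$ is bounded in the standard way by $\mathrm{FI}(\mu\|\pi)+2L_\pi d$, after writing $\nabla(f+h)=\nabla\log(\mu/\pi)-\nabla\log\mu$ and integrating by parts. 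For $\gamma\lesssim 1/L_\pi$ this absorbs a fraction of $\mathrm{FI}$ into the left side. The mixed $\nabla f$ versus $\nabla(f+h)$ issue is handled by also bounding $\|\nabla h\|\le\|\gS_\theta\|+\varepsilon_{\sigma_k}+C_1\sigma_k$.

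To sum the estimator errors, we use the Lyapunov function $\Phi_k=\mathrm{KL}(\mu_{k\gamma}\|\pi)+c\gamma\,\E\|\vg_k-\nabla f(\vx_{k\gamma})\|^2$ with $c\sim 1/(1-(1-p)\beta)$. This telescopes as in the earlier theorems once $\gamma^2L_\pi^2 c\lesssim 1-(1-p)\beta$. The annealing bias contributes $\frac{1}{N}\sum_k\alpha_k^2(\varepsilon_{\sigma_k}+C_1\sigma_k)^2$ plus the finite burn-in over $k<\max(K_\alpha,K_\sigma)$. Taking $\varepsilon_{\sigma_k}=\BigO(k^{-1/2})$ makes this $\BigO(\log N/N)$ plus $\BigO(1/N)$ plus $C_1^2\sigma_{\min}^2$, where the last term is assumed small or absorbed by the choice of $\sigma_{\min}$. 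These contributions are lower order than the main terms.

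The final bound has the same form as before, with $L$ replaced by $L_\pi$. We then choose $\gamma\asymp\varepsilon/(L_\pi^2d)$, set $1-\beta$ (or $p$) so that the variance floor is at most $\varepsilon$, and take $b=\BigO(1)$. This yields $N=\BigO(\sigma^4L_\pi^2d^2/\varepsilon^4)$ for ML-VRLD and $N=\BigO(\sigma^2L_\pi^2d^2/\varepsilon^3)$ for SL-VRLD, as stated in parts (a) and (b).
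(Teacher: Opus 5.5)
Your skeleton matches the paper's proof. It uses Lemma~\ref{lemma:sampling_first}, the same split of the drift error into estimator, discretization, smoothing/score and annealing parts, and the recursions of Theorems~\ref{thm:ml-vrld-ficonv} and~\ref{thm:sl-vrld-ficonv} with $L_f\le L_\pi$. It also applies Lemma~\ref{lemma:sampling_final} to $f+h$, telescopes a Lyapunov function $\mathrm{KL}+c\gamma e_k^2$, and sums the schedules with $\sigma_{\min}$ tied to $N$. Two minor points. Your fourth piece should be $(\alpha_k-1)\nabla h(\vx_{k\gamma})$, not $(1-\alpha_k)\nabla h(\vx_{k\gamma})$. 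And bounding $\|\alpha_k\gS_\theta\|^2$ and $\|\nabla h\|^2$ by $\sigma_{\min}^{-2}$-size constants at every $k$ is wasteful: it adds a term of order $\gamma^2L_\pi^2\phi\,\sigma_{\min}^{-2}$ at every step, which you would then have to balance against $C_1^2\sigma_{\min}^2$, and you do not check this. The paper instead writes $-\alpha_k\gS_\theta=\nabla h-(\nabla h+\gS_\theta)-(\alpha_k-1)\gS_\theta$. Then $\sigma_k^{-1}$ appears only while $\alpha_k>1$, and $\nabla h$ merges into $\nabla\log\pi(\vx_t)$, which Lemma~\ref{lemma:sampling_final} absorbs.

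The genuine gap is your final paragraph, which is where the stated complexities are actually decided. Write $q\coloneqq 1-(1-p)\beta$. Up to constants and bias terms, the bound you inherit is
\[
\frac{\mathrm{KL}(\mu_0\|\pi)}{N\gamma}+\sigma^2 q+\gamma L_\pi^2 d\,\phi(p,\beta),\qquad \phi\asymp q^{-2}\ \text{(ML-VRLD)},\quad \phi\asymp q^{-1}\ \text{(SL-VRLD)}.
\]
The factor $\phi$ is there because the drift $L^2\Delta_k\approx 2L^2\gamma d$ entering the estimator recursion is amplified by the estimator's memory; your own constant $c\sim 1/q$ reflects this.

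\begin{itemize}
\item Your choice $\gamma\asymp\varepsilon/(L_\pi^2 d)$ drops $\phi$. The condition $\sigma^2q\le\varepsilon$ forces $q\asymp\varepsilon/\sigma^2$. The last term is then of order $\sigma^4/\varepsilon$ for ML-VRLD and $\sigma^2$ for SL-VRLD, so the bound never reaches $\varepsilon$. Your own telescoping condition $\gamma\le q/L_\pi$ can also fail, whenever $\sigma^2$ exceeds $L_\pi d$.
\item With this $\gamma$, the initialization term only requires $N$ of order $L_\pi^2d^2/\varepsilon^2$. A single step size cannot produce two different rates, so the $\sigma^4/\varepsilon^4$ and $\sigma^2/\varepsilon^3$ complexities are asserted, not derived.
\item The balancing that works is $\gamma\asymp\varepsilon q^2/(L_\pi^2d)\asymp\varepsilon^3/(\sigma^4L_\pi^2d)$ for ML-VRLD and $\gamma\asymp\varepsilon q/(L_\pi^2d)\asymp\varepsilon^2/(\sigma^2L_\pi^2d)$ for SL-VRLD. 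With $\mathrm{KL}(\mu_0\|\pi)=\BigO(d)$ this gives exactly the claimed $N$. It is what the paper's choices amount to: $\gamma=\sqrt d/(\sigma_mN^{3/4}\sqrt{L_\pi})$, respectively $\gamma=(d/(\sigma_m^2N^2L_\pi^2))^{1/3}$, with $p=1-\beta$.
\end{itemize}

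Separately, ``$b=\BigO(1)$'' is incompatible with controlling the variance floor through $p$. The term $p\sigma^2/(bq)$ equals $\sigma^2/b$ when $\beta=1$, and is about $\sigma^2/(2b)$ when $p=1-\beta$. Neither vanishes for bounded $b$. The paper takes $b=\lceil 1/p\rceil$, so that $p/b\le p^2$. The $\BigO(1)$ cost then holds only in expectation, $pb+(1-p)=\BigO(1)$. With bounded $b$, only the pure-momentum branch $p=0$ is valid.
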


\begin{figure}[t]
    \centering
    \includegraphics[width=0.9\linewidth]{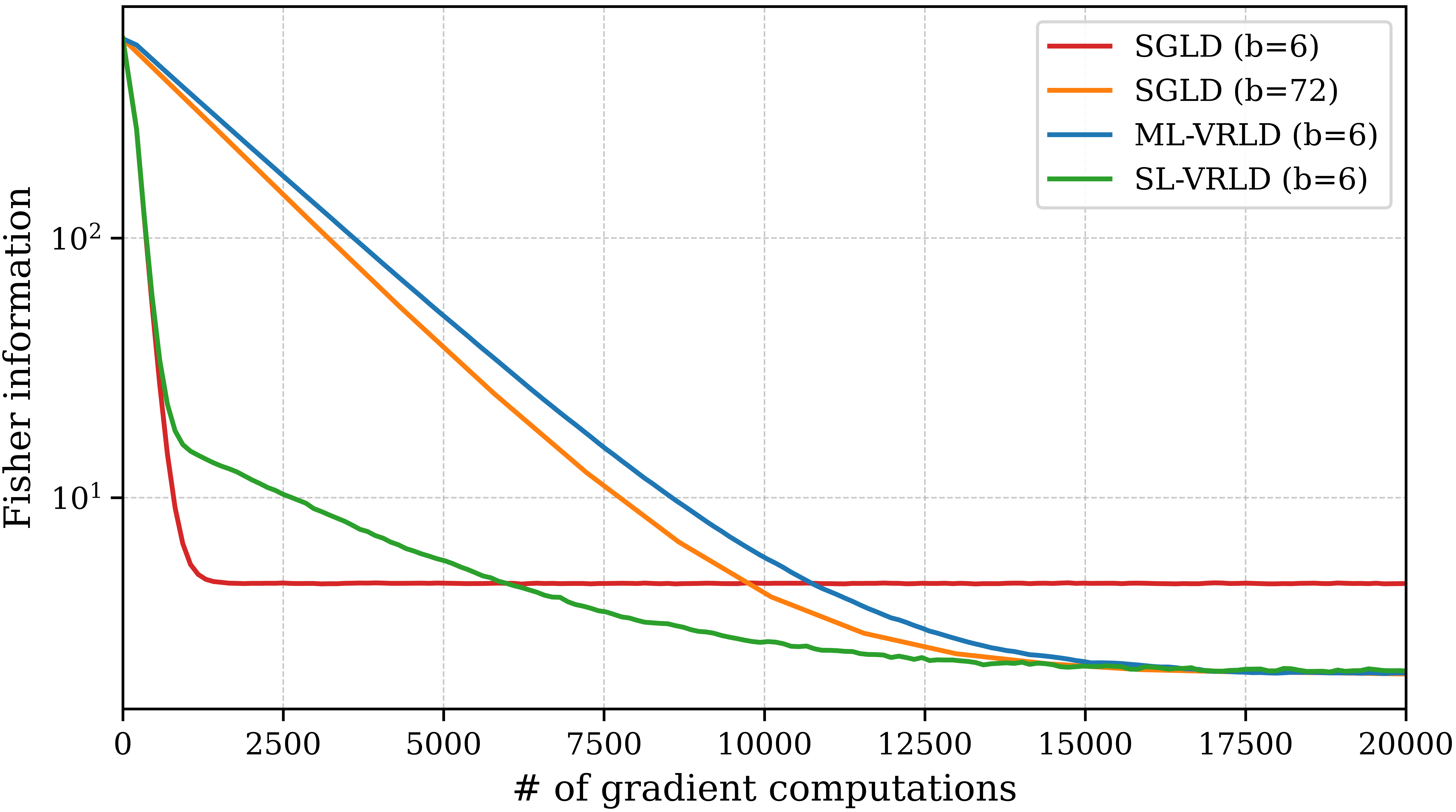}
    \caption{Results on a synthetic 2D Gaussian mixture model (GMM) sampling problem showing the relative FI convergence of each method, where $b$ denotes the batch size. For a fixed batch size, SGLD converges to a suboptimal FI level, whereas ML-VRLD and SL-VRLD achieve substantially lower FI values, consistent with our theoretical results.}
    %\caption{Results on a synthetic 2D inverse problem showing FI convergence for each method with batch size $b\!=\!6$. SGLD converges to a suboptimal solution, whereas ML-VRLD and SL-VRLD attain lower FI, consistent with our theory.}
    \label{fig:fi-convergence-gmm}
    %\label{fig:fi-convergence-synthetic-problem}
\end{figure}

Formal statements and proofs are provided in Appendices~\ref{proof:ml-vrld:ficonv-SGM-prior} and~\ref{proof:sl-vrld:ficonv-SGM-prior}. To the best of our knowledge, this result provides the first non-asymptotic $\varepsilon$-accurate convergence guarantee to the target posterior for inverse problems using $\BigO(1)$ stochastic gradient computations per iteration. Prior work~\citep{sun2019online, tang2020fast, liu2022online} do not establish such guarantees. Our results further imply weak convergence to the target posterior. 
\begin{theorem}(Informal)
\label{thm:weakconv-SGM-prior}
Let $\pi(\vx|\vy) \propto p(\vy|\vx)p(\vx)$ be the target posterior, where
$p(\vy|\vx)\propto e^{-f(\vx)}$ and $p(\vx)\propto e^{-h(\vx)}$.
Assume the likelihood potential $f$ satisfies Assumption~\ref{ass:sfo},
the prior potential $h$ satisfies Assumptions~\ref{ass:potential-lipschitz} and~\ref{ass:smoothing-err-bounded},
and the SGM satisfies Assumption~\ref{ass:score-network} with error
$\varepsilon_{\sigma_k}=\mathcal{O}(k^{-1/2})$. Let $(\mu_t)_{t\ge 0}$ denote the law of continuous-time interpolation~(\ref{def:SGM-prior-interpolation})
with schedules~(\ref{def:annealing-smoothing-schedules}), and decreasing step size $\gamma_k$ and probability $p_k$, and increasing momentum $\beta_k$ and batch size $b_k$. Then, the time-averaged law  $\Bar{\mu}_{\tau_n}=\frac{1}{\tau_n}\int_0^{\tau_n}\mu_t\,dt$, where $\tau_n \coloneqq \sum_{k=1}^n \gamma_k$, converges weakly to $\pi$, provided that one of the following holds:

\begin{enumerate}[label=(\alph*)]
\item \textbf{(ML-VRLD)} $\vg_k$ is defined as in~(\ref{def:ml-vrld:estimator}) and $f$ satisfies Assumption~\ref{ass:potential-lipschitz}.
\item \textbf{(SL-VRLD)} $\vg_k$ is defined as in~(\ref{def:sl-vrld:estimator}) and $f$ satisfies
Assumption~\ref{ass:stochastic-potential-lipschitz}.
\end{enumerate}
\end{theorem}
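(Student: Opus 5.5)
We will derive the weak convergence from the time-varying version of the FI bound behind Theorem~\ref{thm:ficonv-SGM-prior}, following the route of Theorem~\ref{thm:ml-vrld-weakconv}. The first step is to redo the Girsanov/interpolation argument of \citet{balasubramanian2022towards} for the interpolation~(\ref{def:SGM-prior-interpolation}). We use step sizes $\gamma_k$ and the weighted target $\pi$. Differentiating $\mathrm{KL}(\mu_t\|\pi)$ along the interpolation on $[\tau_k,\tau_{k+1}]$ gives
\begin{equation*}
\frac{d}{dt}\mathrm{KL}(\mu_t\|\pi)\le -\tfrac34\mathrm{FI}(\mu_t\|\pi) + \E\big\|\nabla(f+h)(\vx_t) - \vg_k + \alpha_k\gS_\theta(\vx_{\tau_k},\sigma_k)\big\|_2^2 .
\end{equation*}
We split the last term into four pieces. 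The discretization piece is $L_\pi^2\E\|\vx_t-\vx_{\tau_k}\|^2\lesssim L_\pi^2(\gamma_k^2\E\|\cdot\|^2+d\gamma_k)$. The estimator error $e_k=\E\|\vg_k-\nabla f(\vx_{\tau_k})\|^2$ is the second piece. The score error is bounded by $\alpha_k(\varepsilon_{\sigma_k}+C_1\sigma_k)$ via Assumptions~\ref{ass:smoothing-err-bounded} and~\ref{ass:score-network}. The annealing mismatch $(\alpha_k-1)\|\gS_\theta\|\le(\alpha_k-1)C_2\sigma_k^{-1}$ vanishes for $k\ge K_\alpha$.

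The second step is the estimator recursion with time-varying parameters. For ML-VRLD,
\begin{equation*}
e_{k+1}\le (1-p_k)\big[\beta_k(1+\eta) e_k + c\,\beta_k L^2\E\|\vx_{\tau_{k+1}}-\vx_{\tau_k}\|^2/\eta + (1-\beta_k)^2\sigma^2\big] + p_k\sigma^2/b_k .
\end{equation*}
For SL-VRLD the analogous bound has $(1-\beta_k)^2\sigma^2$ and an $L^2\E\|\Delta\vx\|^2$ term without the $1/\eta$ loss. Here we use the observation stated after Theorem~\ref{thm:ml-vrld-weakconv}: the coefficients multiplying $e_k$ (after Lyapunov weighting $\Phi_k=\mathrm{KL}(\mu_{\tau_k}\|\pi)+c_k e_k$) form a nonnegative nonincreasing sequence $c_k$. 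This holds once $\gamma_k,p_k$ decrease and $\beta_k,b_k$ increase. Telescoping $\Phi_k$ therefore gives
\begin{equation*}
\frac{1}{\tau_n}\int_0^{\tau_n}\mathrm{FI}(\mu_t\|\pi)\,dt \lesssim \frac{\Phi_0 + \sum_{k\le n}\gamma_k r_k}{\tau_n},
\end{equation*}
where $r_k$ collects $L_\pi^2 d\gamma_k$, $\sigma^2/b_k$, $(1-\beta_k)\sigma^2$, $\varepsilon_{\sigma_k}^2$, and $\sigma_{\min}^2$-type terms. The $\sigma_k$ terms for $k<K_\sigma$ and the annealing terms for $k<K_\alpha$ are finitely many, hence $\BigO(1)$. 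With $\tau_n\to\infty$ and $r_k\to0$, Cesàro averaging sends the right side to the residual smoothing bias.

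The main obstacle is this residual. Because $\sigma_k\to\sigma_{\min}>0$, the term $C_1\sigma_{\min}$ does not vanish, so to get exact weak convergence to $\pi$ we must interpret the statement with $\sigma_{\min}$ absorbed into the score error, or with $\sigma_{\min}\downarrow0$ along the sequence. We also need $\varepsilon_{\sigma_k}^2=\BigO(1/k)$, which gives $\sum\gamma_k\varepsilon_{\sigma_k}^2/\tau_n\to0$. Granting this, convexity of FI gives $\mathrm{FI}(\bar\mu_{\tau_n}\|\pi)\to0$.

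To conclude we argue as in Theorem~\ref{thm:ml-vrld-weakconv}. The bound on $\mathrm{KL}(\bar\mu_{\tau_n}\|\pi)$ from the telescoping yields tightness. Lower semicontinuity of FI under weak convergence implies any limit point $\mu$ has $\mathrm{FI}(\mu\|\pi)=0$. Since $\pi$ has positive smooth density, this forces $\mu=\pi$, so the whole sequence converges weakly to $\pi$.
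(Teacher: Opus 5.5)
Your diagnosis of the $\sigma_{\mathrm{min}}$ residual is correct, and the paper's proof does not actually get around it. The paper takes $\gamma_k=C_\gamma k^{-3/2}$, $p_k=1-\beta_k=C_pk^{-1/2}$, $b_k=\lceil 1/p_k\rceil$. With these choices every bias sum is finite simply because $\sum_k\gamma_k<\infty$ (e.g.\ $\sum_k\gamma_k\sigma_k^2\le\sigma_0^2\sum_k\gamma_k$). This yields $\mathrm{FI}(\bar\mu_{\tau_n}\|\pi)\le\mathrm{const}/\tau_n$, and the proof concludes by asserting $\tau_n\to\infty$. But $\tau_n\le C_\gamma\sum_{k\ge1}k^{-3/2}<\infty$, so that last step fails. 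Once $\sum_k\gamma_k=\infty$, which your Ces\`aro step needs, a fixed $\sigma_{\mathrm{min}}>0$ contributes a term of order $C_1^2\sigma_{\mathrm{min}}^2\tau_n$ to the bound. Moreover, the network is queried at the single level $\sigma_{\mathrm{min}}$ for all $k\ge K_\sigma$, so the hypothesis $\varepsilon_{\sigma_k}\to0$ forces $\gS_\theta(\cdot,\sigma_{\mathrm{min}})=-\nabla h_{\sigma_{\mathrm{min}}}$. The eventual drift is therefore that of $e^{-f-h_{\sigma_{\mathrm{min}}}}$, not of $\pi$. Proving the corrected version with $\sigma_k\to0$ is the right goal, but your argument does not yet prove it.

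The gap is at tightness. Your telescoping gives $\mathrm{KL}(\mu_{\tau_n}\|\pi)\le\Phi_0+\sum_{k\le n}\gamma_kr_k$, which is bounded uniformly in $n$ only if $\sum_k\gamma_kr_k<\infty$. The facts $r_k\to0$ and $\tau_n\to\infty$ give only $\mathrm{FI}(\bar\mu_{\tau_n}\|\pi)\to0$. Without a Poincar\'e inequality, vanishing relative Fisher information does not prevent mass from escaping to infinity. So you need a uniform KL bound, including at intermediate times $t\in[\tau_n,\tau_{n+1}]$.

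Two further points feed into this. First, your $r_k$ omits the momentum-lag amplification. The discretization contribution is $L_\pi^2d\,\gamma_k\phi(p_k,\beta_k)$, with $\phi\asymp(1-(1-p_k)\beta_k)^{-2}$ for ML-VRLD and $\phi\asymp(1-(1-p_k)\beta_k)^{-1}$ for SL-VRLD. Every step must also satisfy $\gamma_k\le(L_\pi\sqrt{52\phi(p_k,\beta_k)})^{-1}$. Second, monotonicity of $c_k\propto\gamma_k(1-(1-p_k)\beta_k)^{-1}(\cdots)$ does not follow from $\gamma_k,p_k$ decreasing and $\beta_k$ increasing, since it is a ratio of two decreasing sequences.

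All of this is fixed by committing to rates, for example:
\begin{itemize}
\item $\gamma_k=C_\gamma/k$ with $C_\gamma$ small;
\item $p_k=1-\beta_k=C_pk^{-1/3}$ and $b_k=\lceil 1/p_k\rceil$;
\item $\sigma_{\mathrm{min}}=0$, or any $\sigma_k\to0$ with $\sum_k\gamma_k\sigma_k^2<\infty$.
\end{itemize}
Then $\gamma_k=(C_\gamma/C_p^3)p_k^3$, exactly as for the paper's choice. Both factors of $c_k$ are therefore the same increasing functions of $r=p_k$ that the paper analyzes, and $c_k$ is nonincreasing. The sums $\sum_k\gamma_kp_k$, $\sum_k\gamma_k^2/p_k^2$ and $\sum_k\gamma_k\varepsilon_{\sigma_k}^2$ all converge. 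The annealing term has finitely many nonzero summands, and $\tau_n\to\infty$. With these choices $\mathrm{KL}(\mu_t\|\pi)$ is uniformly bounded, and your closing argument (tightness, lower semicontinuity of FI, positivity of $\pi$) goes through.
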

\section{Experiments}

\begin{figure}[t]
    \centering
    \includegraphics[width=0.9\linewidth]{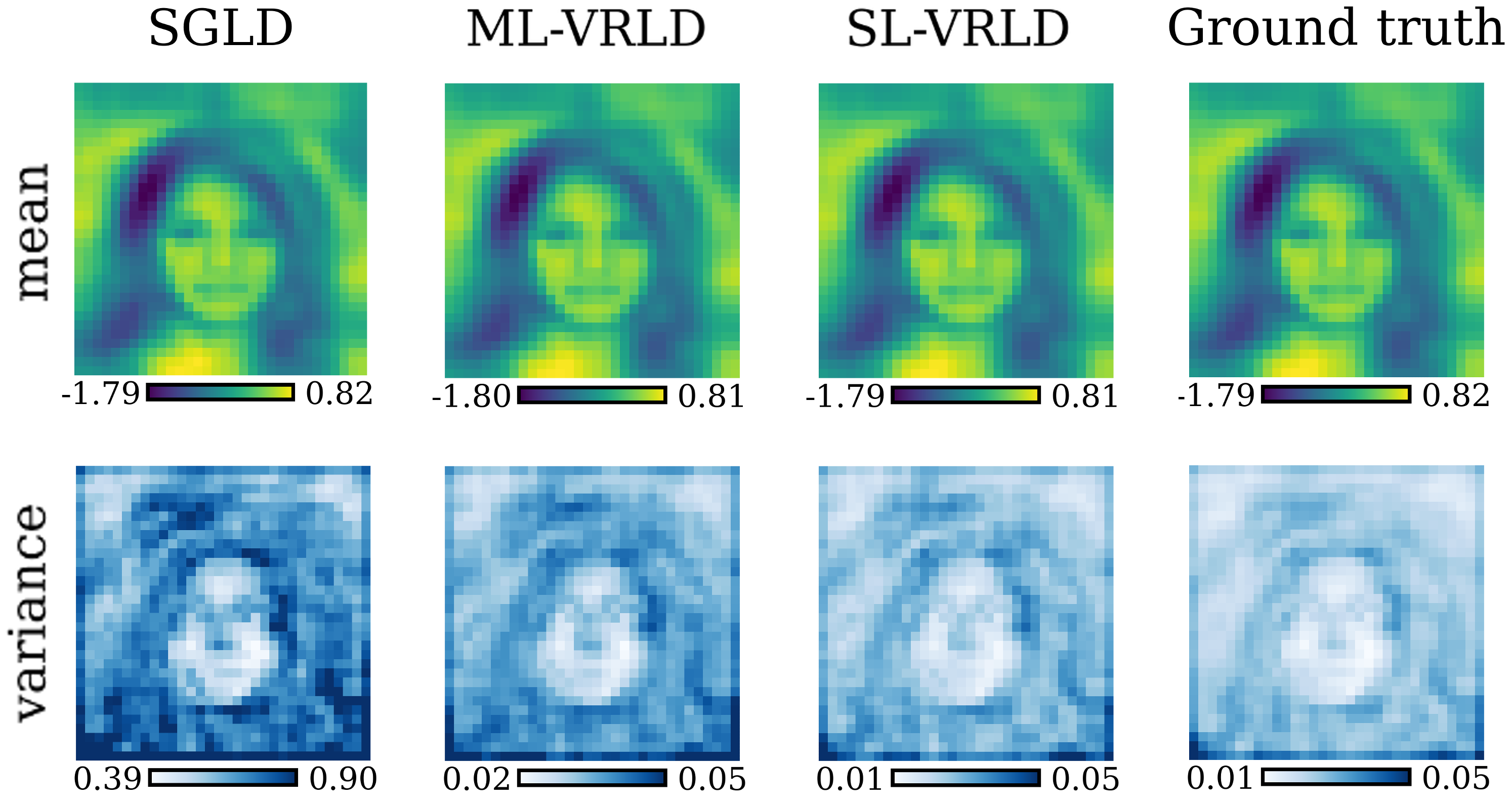}
    \caption{Results on a synthetic inverse problem with known posterior mean and variance. ML-VRLD and SL-VRLD provide more accurate variance estimates than SGLD, which generates samples with significantly higher variance. }
    \label{fig:statistical-validation}
\end{figure}

\subsection{Synthetic Experiments}

\textbf{Numerical Validation.} We validate the theoretical results for ML-VRLD and SL-VRLD summarized in Table~\ref{tab:theoretical-results}. To this end, we consider two synthetic problems: sampling from a Gaussian mixture model (GMM) and a compressed sensing inverse problem.

For the GMM experiments, we construct an 8-mode target distribution and simulate stochastic gradients by adding Gaussian noise with std. dev. of 30 to the gradient of the potential function. We compare SGLD, ML-VRLD with $\beta=0.9$, and SL-VRLD with $\beta=0.999$ over $N=2000$ iterations using 20 different initializations with $p=0$. Fig.~\ref{fig:fi-convergence-gmm} reports the mean relative FI as a function of the total number of gradient computations. With a fixed batch size of $b=6$, SGLD converges to a suboptimal relative FI, suggesting that a larger batch size is needed to achieve lower error. Increasing the batch size to $b=72$ allows SGLD to achieve similar relative FI as ML-VRLD and SL-VRLD, albeit at the cost of substantially more gradient computations per iteration and memory usage. In contrast, SL-VRLD reaches the same error faster than both SGLD and ML-VRLD. These observations are consistent with the theoretical results in Table~\ref{tab:theoretical-results}. Visualizations of the generated samples and additional experimental details are provided in Appendix~\ref{appendix:synthetic-experiments}.

\begin{figure*}[t]
    \centering
    \includegraphics[width=0.95\linewidth]{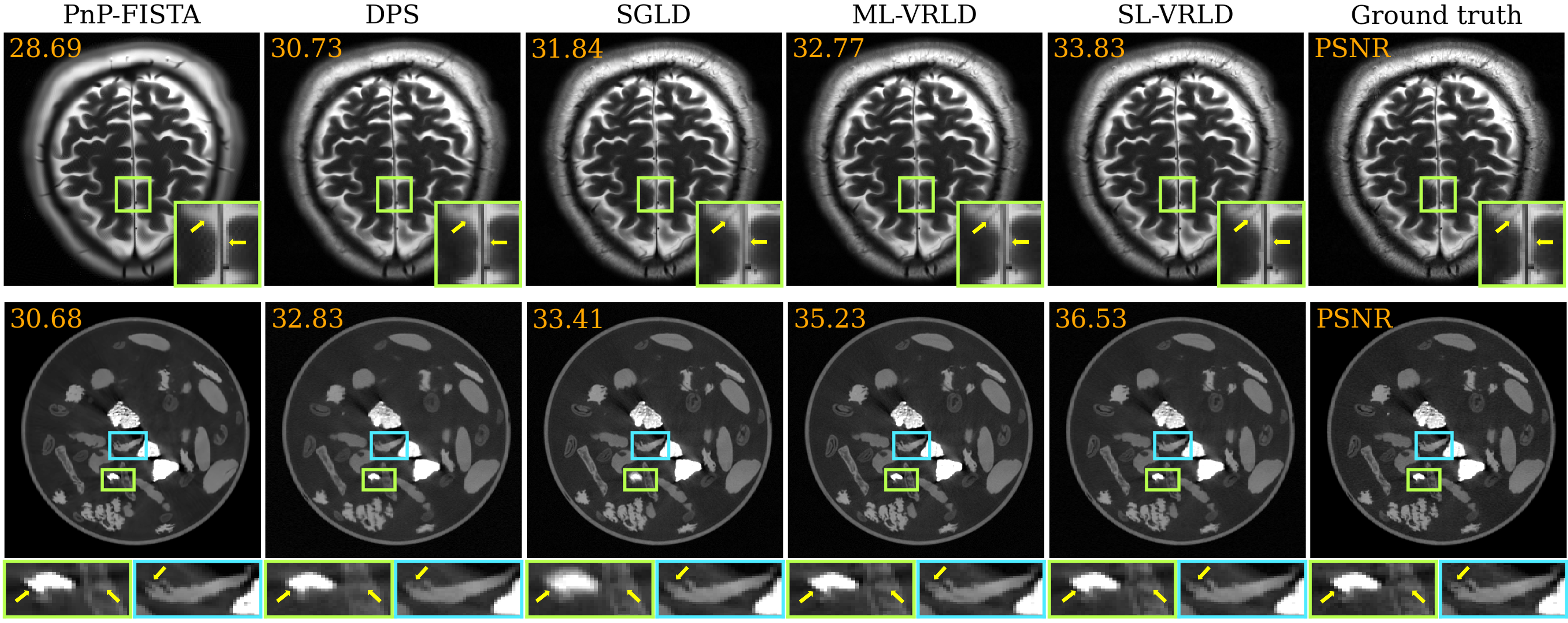}
    \caption{Visual comparison of ML-VRLD, SL-VRLD, and baseline methods for MRI and sparse-angle CT reconstruction. The first row shows MRI results with an acceleration factor of 8 under radial subsampling, using batch size of 32 coils out of 96 coils at each iteration. The second row shows sparse-angle CT reconstructions with a batch size of 30 out of 180 projection views. PSNR values are displayed in the top-left corner of each image. Zoomed-in regions with yellow arrows highlight reconstruction differences. In both tasks, ML-VRLD and SL-VRLD produce sharper structures and fewer artifacts than the baseline methods. }
    \label{fig:visual-comparison}
\end{figure*}

For the compressed sensing experiment, we consider a 2D posterior distribution defined by the Gaussian likelihood in~(\ref{def:inverse_problem}) and a bimodal Gaussian mixture prior. We simulate stochastic likelihood gradients by adding Gaussian noise with std. dev. of 10 to the likelihood score. We compare SGLD, ML-VRLD, and SL-VRLD with fixed batch size $b\!=\!6$ over 20 test posteriors generated using different realizations of $\mA$, and report the mean relative FI in Fig.~\ref{fig:fi-convergence} (Appendix~\ref{appendix:synthetic-experiments}). Consistent with our theory, SL-VRLD converges faster than ML-VRLD, while both achieve lower relative FI than SGLD, which converges to a suboptimal level.

\textbf{Statistical Validation. } We validate ML-VRLD and SL-VRLD for sampling from a posterior distribution over images. We construct a unimodal Gaussian prior obtained from female images in CelebA~\citep{liu2018large}. Each image is rescaled to $32\!\times\!32$ and normalized to $[-1,1]$. The forward model $\mA\in \sR^{115 \times 1024}$ is a Gaussian matrix, and the measurements $\vy$ are obtained according to~(\ref{def:inverse_problem}), with AWGN of std. dev. 0.1. We run SGLD, ML-VRLD, and SR-VRLD with batch size $b=6$ to generate 1000 samples and compute the sample mean and per-pixel variance of the posterior for each method. We compare these sample statistics to the exact posterior statistics, which are available in closed form, as shown in Fig.~\ref{fig:statistical-validation}. SGLD exhibits substantially larger sample variance than ML-VRLD, SL-VRLD, and the ground truth statistics. Both ML-VRLD and SL-VRLD significantly reduce this variance, with SL-VRLD achieving the closest match to the ground truth variance. These results support our theoretical findings. Under a fixed batch size, SGLD fails to recover the correct posterior variance and therefore does not converge to the target posterior, whereas ML-VRLD and SL-VRLD achieve substantially lower estimation error in posterior statistics. Additional results and experimental details are provided in Appendix~\ref{appendix:synthetic-experiments}.

\subsection{Accelerated Parallel MRI Reconstruction}

We simulate a multi-coil MRI acquisition with a radial subsampling pattern~\citep{lustig2008compressed}. The measurement operator $\mA$ is constructed from $K$ receiver coil sensitivity maps $\{\mS_i\}_{i=1}^K$~\citep{pruessmann1999sense}, where the measurement operator for the $i$-th coil is given by $\mA_i = \mP \mF \mS_i$. Here, $\mF$ denotes the Fourier transform and $\mP$ is a diagonal matrix with radial subsampling mask. We consider two acceleration factors $R\!\in\!\{8, 12\}$, representing highly undersampled regimes used in fast and volumetric MRI acquisitions~\citep{zhang2020acceleration}. To simulate a multi-coil MRI system~\citep{lustig2008compressed}, we synthesize $K=96$ coil sensitivity maps using $\texttt{SigPy}$~\citep{ong2019sigpy}.

\begin{table*}[!h]
\centering
\renewcommand\arraystretch{1.1}
\setlength{\tabcolsep}{3pt}

% left block: Method + (8x,12x)
% separator
% right block: (8x,12x) copy for now
\begin{tabular}[t]{lccc ccc @{\hspace{16pt}} ccc ccc}
\toprule
& \multicolumn{6}{c}{\textbf{MRI}}
& \multicolumn{6}{c}{\textbf{CT}} \\

\cmidrule(r{20pt}){2-7}
\cmidrule(r){8-13}

& \multicolumn{3}{c}{Accel. = $8\times$} 
& \multicolumn{3}{c}{Accel. = $12\times$}
& \multicolumn{3}{c}{$360$ views} 
& \multicolumn{3}{c}{$180$ views} \\

\cmidrule(r){2-4}
\cmidrule(r{20pt}){5-7}
\cmidrule(r){8-10}
\cmidrule(r){11-13}

\textbf{Method}
& PSNR & SSIM & MSE
& PSNR & SSIM & MSE
& PSNR & SSIM & MSE
& PSNR & SSIM & MSE \\
\midrule

Zero-filled/FBP
& 20.01 & 0.709 & 10.5e-3
& 17.67 & 0.664 & 17.7e-3
& 25.05 & 0.727 & 31.25e-4
& 22.31 & 0.598 & 58.94e-4 \\

TV
& 25.12 & 0.916 & 32.4e-4
& 22.12 & 0.861 & 65.5e-4
& 28.58 & 0.938 & 15.33e-4
& 28.25 & 0.932 & 16.35e-4 \\

RED
& 30.36 & 0.915 & 9.3e-4
& 29.09 & 0.908 & 12.5e-4
& 29.07 & 0.944 & 13.56e-4
& 28.93 & 0.943 & 13.96e-4 \\

PnP-FISTA
& 30.56 & 0.942 & 9.0e-4
& 29.04 & 0.935 & 12.8e-4
& 29.97 & 0.954 & 10.81e-4
& 29.39 & 0.951 & 12.29e-4 \\

DPS
& 33.22 & 0.968 & 5.0e-4
& 31.27 & 0.959 & 7.8e-4
& 31.8 & 0.961 & 7.17e-4
& 31.64 & 0.961 & 7.43e-4 \\

PnP-DM
& 33.46 & 0.969 & 4.6e-4
& \secondbest{31.53} & 0.961 & \secondbest{7.3e-4}
& 33.59 & 0.971 & 4.53e-4
& 31.66 & 0.959 & 7.02e-4 \\

SGLD
& 33.84 & 0.975 & 4.4e-4
& 30.81 & 0.966 & 9.1e-4
& 33.12 & 0.976 & 5.36e-4
& 32.6 & 0.974 & 6.06e-4 \\

\midrule
ML-VRLD
& \secondbest{34.63} & \secondbest{0.977} & \secondbest{3.66e-4}
& 31.42 & \secondbest{0.968} & 7.99e-4
& \secondbest{35.02} & \secondbest{0.98} & \secondbest{3.41e-4}
& \secondbest{34.35} & \secondbest{0.978} & \secondbest{4.06e-4} \\

SL-VRLD
& \best{35.34} & \best{0.978} & \best{3.02e-4}
& \best{33.11} & \best{0.971} & \best{5.06e-4}
& \best{35.52} & \best{0.981} & \best{2.97e-4}
& \best{34.54} & \best{0.979} & \best{3.72e-4} \\
\bottomrule
\end{tabular}

\caption{Quantitative comparison of ML-VRLD and SL-VRLD against baseline methods for MRI reconstruction on the fastMRI brain dataset~\citep{zbontar1811fastmri} and sparse-angle CT reconstruction on the 2DeteCT dataset~\citep{kiss20232detect}. The \best{best} and \secondbest{second-best} results are highlighted. For MRI reconstruction, acceleration factors of 8 and 12 are used with a batch size of 32 coils (out of 96 total coils) across all methods. For sparse-angle CT reconstruction, batch sizes of 60 and 30 (out of 360 and 180 projection views, respectively) are used for all methods. See Table~\ref{tab:mem-results} (Appendix~\ref{appendix:additional-experiments}) for memory usage.}
\label{tab:quantitative-comparison}
\end{table*}

We compare SGLD, ML-VRLD, and SL-VRLD, implemented as annealed LMC sampling with likelihood score estimates defined as in~(\ref{def:annealed-posterior-lmc}), ~(\ref{def:ml-vrld:estimator}), and~(\ref{def:sl-vrld:estimator}), respectively. The prior score is approximated using the pretrained SGM from~\citep{sun2024provable}. As diffusion-based baselines, we evaluate DPS~\citep{chung2022diffusion} and PnP-DM~\citep{wu2024principled}, which adopt different sampling strategies for solving inverse problems, and additionally include denoiser-based methods, namely PnP-FISTA~\citep{kamilov2018plug, sun2019online} and RED~\citep{romano2017little}. We use U-Net architecture~\citep{dhariwal2021diffusion} as a denoiser. We further include classical baselines, including TV regularization~\citep{rudin1992nonlinear} and zero-filled reconstruction, which is obtained by setting unsampled points to zero followed by inverse Fourier transform. We use brain MRI data from the fastMRI dataset~\citep{zbontar1811fastmri, knoll2020fastmri} to train the denoiser and diffusion models used in the baselines, and to construct the test set. The training and test sets consist of 70,684 and 100 slices, respectively, where each slice has a shape of $256\!\times\!256$. For the annealed LMC methods, we adopt the annealing schedule defined in~(\ref{def:annealing-smoothing-schedules}) and set $p\!=\!0$ for both ML-VRLD and SL-VRLD. The joint use of $\beta<1$ and $p>0$ is introduced for theoretical unification. In practice, one may instead set either $p=0$ or $\beta=1$. We tune the step size $\gamma$ and momentum parameter $\beta$ for ML-VRLD and SL-VRLD using \texttt{Optuna}~\citep{akiba2019optuna}. All methods use a fixed batch size of $b=32$ randomly selected coils at each iteration. Additional experimental details are provided in Appendix~\ref{appendix:mri-reconstruction}.

The first row in Fig.~\ref{fig:visual-comparison} shows representative reconstructions at an acceleration factor of $R=8$. SL-VRLD and ML-VRLD more effectively preserve fine anatomical details, particularly in the zoomed-in regions, whereas the baseline methods tend to oversmooth or introduce artifacts. Table~\ref{tab:quantitative-comparison} reports quantitative comparison of all methods for two different acceleration factors. SL-VRLD consistently achieves the best performance across all methods, with ML-VRLD yielding the second-best results. These findings demonstrate the advantage of incorporating variance reduction into annealed LMC posterior sampling.

\subsection{Sparse-Angle CT Reconstruction}

We consider a sparse-angle CT reconstruction task, which aims to recover high-quality images from a limited number of projection views, resulting in an ill-posed inverse problem due to incomplete angular sampling. We simulate parallel-beam CT measurements with 360 and 180 projection views uniformly distributed over a half circle using the implementation of the measurement operator $\mA$ in \texttt{DeepInverse}~\citep{tachella2025deepinverse}. We adopt the same baselines as in the MRI experiments, except that zero-filled reconstruction is replaced by filtered back-projection (FBP), which is more appropriate for CT reconstruction. For all methods, we use the same batch sizes of 60 and 30 out of 360 and 180 projection views, respectively, to estimate the likelihood score. We use abdominal CT scans of size $256\times 256$ from the 2DeteCT dataset~\citep{kiss20232detect} to construct the training and test sets. The training set comprises 4,500 slices and is used to train the denoiser, diffusion model, and SGM for their corresponding methods described in the previous section, while the test set consists of 100 slices and is used for evaluation. Additional details are provided in Appendix~\ref{appendix:ct-reconstruction}. 

The second row in Fig.~\ref{fig:visual-comparison} shows a visual comparison of reconstructions using a batch size of 30 out of 180 projection views. As highlighted in the zoomed-in regions, SGLD and the baseline methods exhibit a loss of structural sharpness, whereas ML-VRLD and SL-VRLD better preserve fine anatomical details. We provide quantitative comparison in Table~\ref{tab:quantitative-comparison}. ML-VRLD and SL-VRLD consistently outperforms the baseline methods at two different projection views, demonstrating the benefit of variance reduction. 

\section{Conclusion}
We proposed ML-VRLD and SL-VRLD, two unified estimators for non-log-concave Langevin sampling under different regularity conditions. These estimators encompass variance reduction techniques from non-convex optimization, including SGD with momentum, STORM, and PAGE. We showed that both methods achieve $\varepsilon$-relative Fisher information convergence with $\BigO(1)$ gradient computations per iteration, enabling convergence under fixed memory requirements. In contrast, SGLD requires a batch size scaling as $\BigO(Ld\sigma^2/\varepsilon)$, making it increasingly expensive in high-dimensional and high-variance settings. Moreover, SL-VRLD reduces the total computational complexity compared to ML-VRLD and SGLD. We further established weak convergence to the target distribution and, under a Poincar\'e inequality assumption, convergence in squared total variation distance. Motivated by the limitations of existing full-batch posterior sampling methods in large-scale inverse problems, we extended our framework to score-based generative priors and developed principled posterior sampling algorithms with the same convergence guarantees. Experiments on synthetic problems validated our theoretical findings, while MRI and sparse-angle CT reconstruction experiments demonstrated improved sample quality using the proposed estimators.

%\begin{contributions} % will be removed in pdf for initial submission 
					  % (without ‘accepted’ option in \documentclass)
                      % so you can already fill it to test with the
                      % ‘accepted’ class option
%    Briefly list author contributions. 
%    This is a nice way of making clear who did what and to give proper credit.
%    This section is optional.

%    H.~Q.~Bovik conceived the idea and wrote the paper.
%    Coauthor One created the code.
%    Coauthor Two created the Fig.s.
%\end{contributions}

\begin{acknowledgements} % will be removed in pdf for initial 
This work was partially supported by National Insititutes of Health grant NIH/NHLBI R01-HL153430 (PI: Sharif), and supported in part by NSF DMS-2502560 and CNS-2313109. 
\end{acknowledgements}

%\begin{table}[h]\small
%\centering
%\renewcommand\arraystretch{1.1}
%\setlength{\tabcolsep}{3pt}
%\begin{tabular}{lcccccc}
%\toprule
%Acceleration
% & \multicolumn{3}{c}{$8\times$} 
% & \multicolumn{3}{c}{$12\times$} \\ \hline
%Metrics 
% & PSNR & SSIM & MSE 
% & PSNR & SSIM & MSE \\ \hline

%Zero-filled 
% & 20.01 & 0.709 & 10.5e-3
% & 17.67 & 0.664 & 17.7e-3 \\

%TV 
% & 25.12 & 0.916 & 32.4e-4 
% & 22.12 & 0.861 & 65.5e-4 \\

%RED
% & 30.36 & 0.915 & 9.3e-4 
% & 29.09 & 0.908 & 12.5e-4 \\

%PnP-SGD
% & 30.56 & 0.942 & 9.0e-4 
% & 29.04 & 0.935 & 12.8e-4 \\

%DPS
% & 33.22 & 0.968 & 5.0e-4 
% & 31.27 & 0.959 & 7.8e-4 \\

%PnP-DM
% & 33.46 & 0.969 & 4.6e-4
% & \secondbest{31.53} & 0.961 & \secondbest{7.3e-4} \\

%SGLD 
% & 33.84 & 0.975 & 4.4e-4 
% & 30.81 & 0.966 & 9.1e-4 \\

%\midrule
%ML-VRLD
% & \secondbest{34.63} & \secondbest{0.977} & \secondbest{3.66e-4} 
% & 31.42 & \secondbest{0.968} & 7.99e-4 \\

%SL-VRLD
% & \best{35.34} & \best{0.978} & \best{3.02e-4} 
% & \best{33.11} & \best{0.971} & \best{5.06e-4} \\

%\bottomrule
%\end{tabular}

%\caption{Quantitative comparison of ML-VRLD and SL-VRLD against baseline methods for MRI reconstruction on the fastMRI brain dataset at acceleration factors of $8$ and $12$. The \best{best} and \secondbest{second best} results are highlighted. ML-VRLD and SL-VRLD outperforms baseline methods based on denoisers and SGM priors.}
%\end{table}

% References
\bibliography{references}

\onecolumn

\title{Variance Reduction for Non-Log-Concave Sampling \\with Applications to Inverse Problems\\(Supplementary Material)}
\maketitle

\appendix

\section*{Appendix}
\addcontentsline{toc}{section}{Appendix} % Add to main TOC if needed
\markboth{Appendix}{Appendix} % For headers
 
% Create a mini-TOC for the appendix
\startcontents[appendix]
\printcontents[appendix]{l}{1}{\setcounter{tocdepth}{3}}

\section{DISCUSSION ON ASSUMPTIONS}\label{appendix:assumptions-discussion}

In this section, we explicitly list and discuss all the assumptions used for our theoretical results in Section~\ref{sec:methods} and~\ref{sec:methods-2}. We detail in which case each assumption is verified, especially in solving inverse problems with SGM priors. 

\begin{itemize}
    \item Assumption~\ref{ass:potential-lipschitz} is standard in both the sampling~\citep{dalalyan2017further, cheng2020stochastic, balasubramanian2022towards} and optimization~\citep{nesterov2018lectures} literature. 
    \item Assumptions~\ref{ass:sfo} and~\ref{ass:stochastic-potential-lipschitz} are standard in the analysis of stochastic gradient methods with variance reduction~\citep{fang2018spider,cutkosky2019momentum,liu2020improved,li2021page}, and are also commonly imposed in the Langevin sampling with stochastic gradients~\citep{chatterji2018theory, dalalyan2019user, cheng2020stochastic, balasubramanian2022towards}. 
    \item Assumption~\ref{ass:poincare-inequality} is satisfied by all strongly log-concave measures, but it does not hold in general for non-log-concave distributions. This assumption enables convergence in squared total variation distance with constant per-iteration computational cost $\BigO(1)$, a guarantee not established for strongly log-concave distributions in prior work on variance-reduced Langevin sampling~\citep{chatterji2018theory}.
    \item Assumption~\ref{ass:smoothing-err-bounded} bounds the score mismatch between the true score $-\nabla h(\vx)$ and the smoothed score $-\nabla h_{\sigma_k}(\vx)$. This assumption follows prior work on annealed Langevin sampling with SGM prior~\citep{sun2024provable}. In special cases, such as when the prior $p(\vx)$ is Gaussian, the upper bound can be derived analytically. However, obtaining a closed-form expression for the mismatch is generally intractable for arbitrary distributions. 
    \item Assumption~\ref{ass:score-network} imposes two conditions on the learned score network $\gS_\theta(\vx, \sigma_k)$: a bounded norm and a bounded approximation error. The bounded-norm condition is consistent with the scaling behavior of diffusion processes on manifolds, where the score scales as $1/\sigma_k$ to provide the restoring force~\citep{song2019generative, song2020improved}. The approximation error bound is justified by interpreting the target prior $p_{\sigma_k}$ as the population distribution rather than the empirical distribution. Under this interpretation, the assumption enforces a generalization condition on the score network, namely that it approximates the smooth population score with arbitrary precision ($\varepsilon_{\sigma_k} \to 0$)~\citep{pidstrigach2022score}. These assumptions align with prior theoretical analyses in~\citep{lee2023convergence, sun2024provable}.
\end{itemize}

\section{THEOREMS AND LEMMAS}\label{proofs:theorem-and-lemmas}

\subsection{Notation}

\textbf{Basic Notation.}
We denote by $\mathbb{R}^d$ the $d$-dimensional Euclidean space.
For $\vx \in \mathbb{R}^d$, $\|\vx\|$ denotes the Euclidean norm $\|\vx\|_2$. Vectors and matrices are denoted by \textbf{boldface} letters. 

\textbf{Langevin Diffusion.} We consider the Langevin diffusion defined as the solution to the stochastic differential equation (SDE)
\[
d \vx_t
=
- \nabla f(\vx_t)\, dt
+
\sqrt{2}\, d\mB_t,
\]
where $(\mB_t)_{t \ge 0}$ is a standard $d$-dimensional Brownian motion. The target distribution is denoted by $\pi$, and is given by
\[
\pi(\vx) \propto \exp(-f(\vx)).
\]
Under mild conditions, this SDE admits $\pi$ as its unique stationary distribution. Throughout, we assume that $f$ is continuously differentiable and that its gradient $\nabla f$ is Lipschitz continuous (see Assumption~\ref{ass:potential-lipschitz}). We assume access to stochastic first-order oracles $\nabla f(\vx,\xi)$, where  $\xi$ denotes a random variable capturing external randomness. Note that the stochastic first-order oracles are unbiased and have bounded variance (see Assumption~\ref{ass:sfo}). Discretizing the above SDE with step size $\gamma > 0$ yields the Langevin Monte Carlo (LMC) update
\[
\vx_{(k+1)\gamma}
=
\vx_{k\gamma}
-
\gamma \nabla f(\vx_{k\gamma})
+
\sqrt{2}\,\big(\mB_{(k+1)\gamma} - \mB_{k\gamma}\big),
\]where $\mB_{k\gamma}\in \sR^d$ is Brownian motion. We further define $\vg_k$ as estimator of the gradient
$\nabla f(\vx_{k\gamma})$ at iteration $k$.
Replacing the exact gradient by $\vg_k$,
the LMC update becomes
\[
\vx_{(k+1)\gamma}
=
\vx_{k\gamma}
-
\gamma \vg_k
+
\sqrt{2}\,\big(\mB_{(k+1)\gamma} - \mB_{k\gamma}\big).
\] The corresponding continuous-time interpolation is given by
\[
\vx_t
=
\vx_{k\gamma}
-
(t-k\gamma) \vg_k
+
\sqrt{2}\,\big(\mB_t - \mB_{k\gamma}\big),
\qquad
t \in [k\gamma,(k+1)\gamma],
\] and we denote by $\mu_t$ the law of $\vx_t$.

\textbf{Posterior Sampling.}
In the posterior sampling setting for ill-posed inverse problems of the form
\(
\vy = \mA(\vx) + \vn,
\)
we model the prior as
\(
p(\vx) \propto \exp(-h(\vx)),
\)
where $h(\vx)$ is the prior potential.
We model the likelihood as
\(
p(\vy|\vx) \propto \exp(-f(\vx)),
\)
where $f(\vx)$ is the likelihood potential.
By Bayes' rule, the posterior satisfies
\[
p(\vx|\vy)
=
\frac{p(\vy|\vx)\, p(\vx)}{\int p(\vy|\vu)\, p(\vu)\, d\vu}
\propto
p(\vy|\vx)\, p(\vx)
\propto
\exp\big(-(f(\vx) + h(\vx))\big).
\]Sampling from the posterior can be performed via Langevin dynamics
\[
d \vx_t
=
- \big( \nabla f(\vx_t) + \nabla h(\vx_t) \big)\, dt
+
\sqrt{2}\, d\mB_t,
\]
which admits $\pi(\vx|\vy)$ as its stationary distribution. For ease of notation, we omit the explicit dependence on $\vy$
and write $\pi(\vx|\vy)$ simply as $\pi(\vx)$ in the proofs. Discretizing this SDE with step size $\gamma > 0$ yields
\[
\vx_{(k+1)\gamma}
=
\vx_{k\gamma}
-
\gamma \big( \nabla f(\vx_{k\gamma})
+ \nabla h(\vx_{k\gamma}) \big)
+
\sqrt{2}\,\big(\mB_{(k+1)\gamma} - \mB_{k\gamma}\big).
\]We use an estimator $\vg_k$ (instantiated as ML-VRLD (\ref{def:ml-vrld:estimator}) or SL-VRLD (\ref{def:sl-vrld:estimator}))
to approximate $\nabla f(\vx_{k\gamma})$.
Moreover, we use a score network $\gS_\theta(\vx,\sigma_k)$ to approximate the smoothed prior score,
i.e., $\gS_\theta(\vx,\sigma_k) \approx -\nabla h_{\sigma_k}(\vx)$.
With annealing and smoothing schedules $(\alpha_k)_{k=0}^N$ and $(\sigma_k)_{k=0}^N$,
the annealed posterior update is
\[
\vx_{(k+1)\gamma}
=
\vx_{k\gamma}
-
\gamma\Big(\vg_k - \alpha_k \gS_\theta(\vx_{k\gamma},\sigma_k)\Big)
+
\sqrt{2}\,\big(\mB_{(k+1)\gamma}-\mB_{k\gamma}\big).
\]

The corresponding continuous-time interpolation on $t\in[k\gamma,(k+1)\gamma]$ is
\[
\vx_t
=
\vx_{k\gamma}
-
(t-k\gamma)\Big(\vg_k - \alpha_k \gS_\theta(\vx_{k\gamma},\sigma_k)\Big)
+
\sqrt{2}\,\big(\mB_t-\mB_{k\gamma}\big),
\qquad t\in[k\gamma,(k+1)\gamma].
\]
\textbf{KL Divergence and Fisher Information.}
The Kullback–Leibler divergence between two probability measures $\mu$ and $\pi$ is defined as
\[
\mathrm{KL}(\mu \| \pi)
:=
\int_{\mathbb{R}^d}
\mu(\vx)
\log \frac{\mu(\vx)}{\pi(\vx)} d\vx.
\]
The relative Fisher information is
\[
\mathrm{FI}(\mu \| \pi)
:=
\int_{\mathbb{R}^d}
\mu(\vx)
\left\|
\nabla \log \frac{\mu(\vx)}{\pi(\vx)}
\right\|_2^2
d\vx.
\]
\textbf{Probability Measures.} Expectation with respect to a distribution $\mu$
is denoted by $\mathbb{E}_{\vx \sim \mu}[\cdot]$.
When clear from context, we write $\mathbb{E}[\cdot]$.

\textbf{Big-O Notation.}
For two positive sequences $f(n)$ and $g(n)$,
we write $f(n) = \BigO(g(n))$ if there exists $C>0$
such that $f(n) \le C g(n)$ for sufficiently large $n$. In particular, $\BigO(1)$ denotes a positive constant.

\subsection{Lemmas}

We begin by presenting several technical lemmas that are used throughout the proofs of the main theorems. The following lemma provides the starting point for our analysis.

\begin{lemma}\textnormal{(Lemma 23 in~\citet{balasubramanian2022towards})}\label{lemma:sampling_first}
    Consider the stochastic process defined by 
    \begin{equation}
        \vx_t := \vx_0 - t\vg_0 + \sqrt{2}\mB_t, \quad \text{with $\vg_0 = g(\vx_0)$}, \quad \vx_0\sim\mu_0
    \end{equation} where $\vg_0$ is integrable and $(\mB_t)_{t\geq 0}$ is a standard Brownian motion in $\sR^d$ independent of $(\vx_0, \vg_0)$. Then, writing $\mu_t$ for the law of $\vx_t$, we have 
    \begin{align}
        \frac{d}{dt}\mathrm{KL}(\mu_t\|\pi) & \leq -\frac{3}{4}\mathrm{FI}(\mu_t\|\pi) + \E \left[\|\nabla f(\vx_t) - \E[\vg_0|\vx_t]\|^2\right] \notag \\
        & \leq  -\frac{3}{4}\mathrm{FI}(\mu_t\|\pi) + \E \left[\|\nabla f(\vx_t) - \vg_0\|^2\right] , 
    \end{align}where $\pi\propto e^{-f}$.
\end{lemma}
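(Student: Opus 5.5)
We will prove the lemma by computing the time derivative of $\mathrm{KL}(\mu_t\|\pi)$ directly from the Fokker--Planck equation of the interpolated process. The only subtlety is that the drift $-\vg_0$ is not a function of $\vx_t$, so we first replace it by its conditional expectation given $\vx_t$.

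\textbf{Step 1: Markovian projection.} Since $\mB_t$ is independent of $(\vx_0,\vg_0)$, the process satisfies $d\vx_t = -\vg_0\,dt + \sqrt{2}\,d\mB_t$. By the standard mimicking (Gy\"ongy-type) argument, $\mu_t$ solves the Fokker--Planck equation
\begin{equation*}
\partial_t \mu_t = \nabla\cdot\big(\mu_t\, \bar\vg_t\big) + \Delta \mu_t,
\qquad \bar\vg_t(\vx) := \E[\vg_0 \mid \vx_t=\vx].
\end{equation*}
To justify this, test against a smooth compactly supported $\phi$ and apply It\^o's formula to $\phi(\vx_t)$. The drift contributes $-\E[\langle\nabla\phi(\vx_t),\vg_0\rangle]$, and conditioning on $\vx_t$ turns this into $-\E[\langle\nabla\phi(\vx_t),\bar\vg_t(\vx_t)\rangle]$. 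Integrability of $\vg_0$ ensures that $\bar\vg_t$ is well defined. For $t>0$, $\mu_t$ is a Gaussian convolution and hence has a smooth positive density, which is enough regularity for the next step.

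\textbf{Step 2: KL derivative.} Differentiate $\mathrm{KL}(\mu_t\|\pi)=\int \mu_t\log(\mu_t/\pi)$ and integrate by parts; the term $\int\partial_t\mu_t$ vanishes by mass conservation. This gives
\begin{equation*}
\frac{d}{dt}\mathrm{KL}(\mu_t\|\pi) = -\E\Big[\big\langle \nabla\log\tfrac{\mu_t}{\pi}(\vx_t),\ \bar\vg_t(\vx_t)+\nabla\log\mu_t(\vx_t)\big\rangle\Big].
\end{equation*}
Writing $\nabla\log\mu_t = \nabla\log(\mu_t/\pi) - \nabla f$, the second factor equals $\nabla\log(\mu_t/\pi) + (\bar\vg_t-\nabla f)$. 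Hence
\begin{equation*}
\frac{d}{dt}\mathrm{KL} = -\mathrm{FI}(\mu_t\|\pi) + \E\big[\langle \nabla\log\tfrac{\mu_t}{\pi},\ \nabla f-\bar\vg_t\rangle\big].
\end{equation*}
Young's inequality $\langle a,b\rangle\le \tfrac14\|a\|^2+\|b\|^2$ bounds the cross term by $\tfrac14\mathrm{FI}(\mu_t\|\pi)+\E\|\nabla f(\vx_t)-\E[\vg_0|\vx_t]\|^2$, which yields the first inequality.

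\textbf{Step 3: Second inequality.} This follows from conditional Jensen. Since $\nabla f(\vx_t)$ is $\vx_t$-measurable,
\begin{equation*}
\|\nabla f(\vx_t)-\E[\vg_0|\vx_t]\|^2 = \|\E[\nabla f(\vx_t)-\vg_0\mid \vx_t]\|^2 \le \E\big[\|\nabla f(\vx_t)-\vg_0\|^2\mid\vx_t\big],
\end{equation*}
and taking expectations gives the claim.

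\textbf{Main obstacle.} The delicate part is the rigorous justification of Steps 1--2: exchanging differentiation and integration, and discarding boundary terms in the integration by parts. I would handle this by working for $t>0$, where $\mu_t$ is smooth, and by assuming finite second moments. Alternatively, since the statement is exactly Lemma 23 of \citet{balasubramanian2022towards}, one may cite it for these regularity details.
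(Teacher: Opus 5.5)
Your proposal is correct and follows the same argument as the cited Lemma 23 of \citet{balasubramanian2022towards}, which the paper invokes without reproving it: write the Fokker--Planck equation for $\mu_t$ with the averaged drift $\mathbb{E}[\vg_0\mid\vx_t=\cdot\,]$, differentiate the KL divergence to isolate $-\mathrm{FI}(\mu_t\|\pi)$, absorb the cross term using $\langle a,b\rangle\le\frac14\|a\|^2+\|b\|^2$, and finish with conditional Jensen. Your derivation never uses $\vg_0=g(\vx_0)$, only that $\vg_0$ is independent of the Brownian increment, and this is exactly the generality the paper needs when it applies the lemma on $[k\gamma,(k+1)\gamma]$ with the history-dependent estimator $\vg_k$.
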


\begin{lemma}\textnormal{(Lemma 20 in~\citet{chewi2024analysis})}\label{lemma:sampling_final}
Assume that $f(\vx)$ has an $L$-Lipschitz gradient, and let $\pi \propto e^{-f}$. Then, for any probability measure $\mu$, it holds that 
\begin{equation}
\E_{\vx\sim\mu}\!\left[\|\nabla f(\vx)\|^2\right]
\leq \mathrm{FI}\!\left(\mu\|\pi\right) + 2dL .
\end{equation}
\end{lemma}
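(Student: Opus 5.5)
The plan is to expand the relative Fisher information and integrate by parts once. If $\mathrm{FI}(\mu\|\pi)=\infty$ there is nothing to prove, so I assume it is finite. I first argue in the model case where $\mu$ has a smooth, positive density with enough decay and $f\in C^2$; the general case is handled by approximation at the end.

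Since $\nabla\log\pi=-\nabla f$, we have $\nabla\log(\mu/\pi)=\nabla\log\mu+\nabla f$. Expanding the square gives
\begin{equation*}
\mathrm{FI}(\mu\|\pi)=\E_\mu\big[\|\nabla\log\mu\|^2\big]+2\,\E_\mu\big[\langle\nabla\log\mu,\nabla f\rangle\big]+\E_\mu\big[\|\nabla f\|^2\big].
\end{equation*}
For the cross term, note that $\mu\nabla\log\mu=\nabla\mu$. Integrating by parts then yields
\begin{equation*}
\E_\mu\big[\langle\nabla\log\mu,\nabla f\rangle\big]=\int\langle\nabla\mu,\nabla f\rangle\,d\vx=-\int\mu\,\Delta f\,d\vx.
\end{equation*}
Assumption~\ref{ass:potential-lipschitz} gives $\nabla^2 f\preceq L\mI$, so $\Delta f\le dL$. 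Dropping the nonnegative term $\E_\mu\|\nabla\log\mu\|^2$ and rearranging then gives
\begin{equation*}
\E_\mu\big[\|\nabla f\|^2\big]=\mathrm{FI}(\mu\|\pi)-\E_\mu\big[\|\nabla\log\mu\|^2\big]+2\,\E_\mu[\Delta f]\le \mathrm{FI}(\mu\|\pi)+2dL.
\end{equation*}

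The main obstacle is rigor rather than the algebra, and it has two parts. First, $f$ is only $C^{1,1}$. By Rademacher's theorem $\nabla f$ is differentiable a.e. with $\nabla^2 f\preceq L\mI$, and the integration by parts holds in the weak sense because $\nabla f$ is locally Lipschitz. Alternatively, one can mollify $f$: this preserves the $L$-Lipschitz gradient bound, and one then passes to the limit.

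Second, the boundary terms must vanish. I would handle this by inserting a smooth cutoff $\chi_R$ equal to $1$ on the ball of radius $R$, with $\|\nabla\chi_R\|\le 2/R$. Integration by parts against $\chi_R\nabla f$ produces an extra term bounded by
\begin{equation*}
\frac{2}{R}\int\mu\,\|\nabla\log\mu\|\,\|\nabla f\|\,d\vx.
\end{equation*}
This term can be absorbed with Young's inequality, writing $2\langle a,b\rangle\ge-\|a\|^2-\|b\|^2$ with small weights, so that it is controlled by a small multiple of $\E_\mu\|\nabla\log\mu\|^2$ and of $\E_\mu\|\nabla f\|^2$ restricted to the cutoff. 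Letting $R\to\infty$ by monotone convergence then gives the bound even without assuming a priori that $\E_\mu\|\nabla f\|^2<\infty$.

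If $\mu$ is not smooth, finite Fisher information means $\sqrt{\mu/\pi}$ lies in the weighted Sobolev space $H^1$. I would approximate $\mu$ by convolving with a small Gaussian, using that $\mathrm{FI}$ is lower semicontinuous and that $\E\|\nabla f\|^2$ converges by continuity and Fatou's lemma. The alternative is to cite the standard argument of \citet{chewi2024analysis}.
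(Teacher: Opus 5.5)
Your core argument is correct and is essentially the standard proof behind the cited Lemma~20 of \citet{chewi2024analysis}; the paper imports that lemma without proof. That proof writes $\E_\mu\|\nabla f\|^2=\E_\mu\langle\nabla f,\nabla\log(\mu/\pi)\rangle+\E_\mu[\Delta f]$, uses $\Delta f\le dL$, and applies Young's inequality to the first term. Your identity $\E_\mu\|\nabla f\|^2+\E_\mu\|\nabla\log\mu\|^2=\mathrm{FI}(\mu\|\pi)+2\E_\mu[\Delta f]$ is the same computation, and the term $\E_\mu\|\nabla\log\mu\|^2$ you drop is exactly the slack in that Young step.

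Two of your regularity remarks are off, although neither affects the main computation.

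\emph{The cutoff error term.} The term created by the cutoff is $2\int\mu\,\langle\nabla\chi_R,\nabla f\rangle\,d\vx$, with no factor $\|\nabla\log\mu\|$. It is supported on an annulus where $\chi_R$ can vanish, so it is not controlled by cutoff-weighted quantities in the way you describe. It needs no absorption, however. Take $\chi_R$ supported in $\{\|\vx\|\le 2R\}$ and use $\|\nabla f(\vx)\|\le\|\nabla f(0)\|+L\|\vx\|$. Then its absolute value is at most $(4\|\nabla f(0)\|/R+8L)\,\mu(\{\|\vx\|\ge R\})\to 0$. Fatou's lemma applied to $\int\chi_R\,\mu\|\nabla f\|^2\,d\vx$ then finishes the argument.

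\emph{The Gaussian-smoothing step.} This step is justified in the wrong direction. To pass $\E_{\mu_\epsilon}\|\nabla f\|^2\le\mathrm{FI}(\mu_\epsilon\|\pi)+2dL$ to the limit you need $\limsup_{\epsilon\to 0}\mathrm{FI}(\mu_\epsilon\|\pi)\le\mathrm{FI}(\mu\|\pi)$, whereas lower semicontinuity gives the reverse inequality. The needed bound does hold, for example by convexity of $\mathrm{FI}(\cdot\|\pi)$ together with $\mathrm{FI}(\mu(\cdot-\vz)\|\pi)\le(\sqrt{\mathrm{FI}(\mu\|\pi)}+L\|\vz\|)^2$. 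The step is unnecessary anyway. Finite $\mathrm{FI}(\mu\|\pi)$ means $\sqrt{\mu/\pi}\in W^{1,2}_{\mathrm{loc}}$. Since $\pi$ is $C^1$ and locally bounded away from zero, $\sqrt{\mu}\in W^{1,2}_{\mathrm{loc}}$, so $\mu\in W^{1,1}_{\mathrm{loc}}$ with $\nabla\mu=\mu\nabla\log\mu$. The weak integration by parts against the compactly supported Lipschitz field $\chi_R\nabla f$ is then directly valid. All cutoff-weighted terms are finite because $\nabla f$ is bounded on compact sets.
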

We use the transportation inequality to analyze the convergence in squared total variation (TV) distance under a Poincaré inequality assumption.
\begin{lemma}\textnormal{(Theorem 3.1 in ~\citet{guillin2009transportation})}\label{lemma:tv_lemma}
    If $\pi\propto e^{-f}$ satisfies a Poincaré inequality, i.e.\ for every smooth, compactly supported 
$f:\mathbb{R}^d\to\mathbb{R}$,
\[
\mathrm{Var}_{\pi}(f)\;\le\;C_{\mathrm{PI}}\,
\mathbb{E}_{\pi}\!\bigl[\|\nabla f\|^2\bigr],
\]
then for any probability measure $\mu$,
\[
\|\mu-\pi\|_{\mathrm{TV}}^2\;\le\;4C_{\mathrm{PI}}\,
\mathrm{FI}(\mu\|\pi).
\]
\end{lemma}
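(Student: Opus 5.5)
The plan is to reduce the total variation distance to the $\pi$-variance of the square root of the density ratio, and then apply the Poincar\'e inequality to that single test function. If $\mathrm{FI}(\mu\|\pi)=\infty$ there is nothing to prove. Otherwise $\mu\ll\pi$ with a weakly differentiable density, and I set $\rho = d\mu/d\pi$ and $\psi=\sqrt{\rho}$. The chain rule gives $\nabla\log\rho = 2\nabla\psi/\psi$, hence
\begin{equation*}
\mathrm{FI}(\mu\|\pi) = \int \rho\,\|\nabla\log\rho\|^2\,d\pi = 4\,\E_{\pi}\!\left[\|\nabla\psi\|^2\right].
\end{equation*}
Since $\E_\pi[\psi^2]=1$, the Poincar\'e inequality applied to $\psi$ yields
\begin{equation*}
1-(\E_\pi\psi)^2=\mathrm{Var}_\pi(\psi)\le C_{\mathrm{PI}}\,\E_\pi\|\nabla\psi\|^2=\tfrac{1}{4}C_{\mathrm{PI}}\,\mathrm{FI}(\mu\|\pi).
\end{equation*}

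Next I bound TV by this variance using a Hellinger-type Cauchy--Schwarz step. Writing $|\rho-1|=|\psi-1|(\psi+1)$, we get
\begin{equation*}
\|\mu-\pi\|_{\mathrm{TV}}=\tfrac12\int|\psi-1|(\psi+1)\,d\pi\le\tfrac12\Big(\int(\psi-1)^2d\pi\Big)^{1/2}\Big(\int(\psi+1)^2d\pi\Big)^{1/2}.
\end{equation*}
Expanding with $\E_\pi\psi^2=1$ gives $\int(\psi-1)^2d\pi=2(1-\E_\pi\psi)$ and $\int(\psi+1)^2d\pi=2(1+\E_\pi\psi)$. Therefore
\begin{equation*}
\|\mu-\pi\|_{\mathrm{TV}}^2\le(1-\E_\pi\psi)(1+\E_\pi\psi)=\mathrm{Var}_\pi(\psi).
\end{equation*}
Combining with the previous display gives $\|\mu-\pi\|_{\mathrm{TV}}^2\le \tfrac14 C_{\mathrm{PI}}\mathrm{FI}(\mu\|\pi)$. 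This is in fact stronger than the stated bound, so the factor $4C_{\mathrm{PI}}$ follows a fortiori.

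The main obstacle is technical. The Poincar\'e inequality is only assumed for smooth, compactly supported test functions, while $\psi$ is neither. I would first extend the inequality to all $\psi\in L^2(\pi)$ with $\nabla\psi\in L^2(\pi)$ by approximation. Concretely, I would multiply $\psi$ by smooth cutoffs $\chi_R$ (equal to $1$ on $B_R$, supported on $B_{2R}$, with $\|\nabla\chi_R\|\le 2/R$), truncate $\psi$ at level $M$, and mollify. I would then pass to the limit using dominated and monotone convergence for both sides. The cutoff gradient contributes a term of order $R^{-2}\E_\pi[\psi^2]\to0$. Justifying $\psi\in H^1(\pi)$ when $\mathrm{FI}<\infty$ is the delicate point. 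If needed, I would handle it by working with $\sqrt{\rho+\delta}$ and letting $\delta\downarrow0$.
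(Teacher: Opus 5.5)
Your argument is correct. The paper gives no proof of this lemma; it simply cites Guillin et al.\ (2009), so your proposal is a self-contained replacement rather than a variant of an argument in the paper.

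Your two steps are: apply the Poincar\'e inequality to the single test function $\psi=\sqrt{d\mu/d\pi}$, then use the Le Cam-type bound $\|\mu-\pi\|_{\mathrm{TV}}^2\le 1-(\E_\pi\psi)^2=\mathrm{Var}_\pi(\psi)$. As far as I can tell, this is exactly the standard argument behind the cited theorem; the two differ only in bookkeeping.

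The gap between your constant $\tfrac14C_{\mathrm{PI}}$ and the paper's $4C_{\mathrm{PI}}$ comes from normalization conventions. The cited source appears to use the Donsker--Varadhan information $\E_\pi\|\nabla\sqrt{\rho}\|^2=\tfrac14\mathrm{FI}(\mu\|\pi)$ and the total variation norm $\int|\rho-1|\,d\pi$, which is twice the distance. Under those normalizations, their constant $4$ becomes exactly your $\tfrac14$. The paper's bound is therefore valid but loose: by a factor $16$ if $\|\cdot\|_{\mathrm{TV}}$ means the distance, and by a factor $4$ if it means the norm. What your derivation buys is a checked constant and a clear view that the only analytic input is Poincar\'e applied to $\sqrt{\rho}$.

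The approximation step you flag is lighter than you suggest. In every place the paper uses this lemma, the potential has Lipschitz gradient, so $e^{-f}$ is continuous and positive and hence bounded above and below on compacts. For compactly supported functions, convergence in weighted $H^1(\pi)$ and in unweighted $H^1$ then coincide, so mollification is routine. Together with your cutoff estimate $\|\psi\nabla\chi_R\|_{L^2(\pi)}\le 2/R$, this extends the Poincar\'e inequality to all of $H^1(\pi)$. Your $\sqrt{\rho+\delta}$ device handles the zero set of $\rho$, since $|\sqrt{\rho+\delta}-\sqrt{\rho}|\le\sqrt{\delta}$. If instead you adopt the Dirichlet-form definition $\mathrm{FI}(\mu\|\pi)=4\mathcal{E}(\sqrt{\rho})$, which the paper itself invokes in its weak-convergence proofs, the extension follows immediately by density.
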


\subsection{Theorems}\label{appendix:theorems}

We first state a theorem characterizing the sufficient number of gradient computations per iteration to achieve $\varepsilon$-relative Fisher information for SGLD~(\ref{def:lmc-formulation-batch}) algorithm. 

\phantomsection
\paragraph{Theorem B.1}\textnormal{(Theorem 15 in \citet{balasubramanian2022towards})}\label{thm:sgld-batch-size}
    \textit{Let $(\mu_t)_{t\geq 0}$ denote the law of the interpolation generated by~(\ref{def:ml-vrld:interpolation}) with the estimator defined as $\vg_k= \frac{1}{b}\sum_{i=1}^b\nabla f(\vx_{k\gamma},\xi_k^i)$, where $b\geq 1$. Assume that the potential f defining the target distribution $\pi\propto e^{-f}$ satisfies Assumptions~\ref{ass:potential-lipschitz} and~\ref{ass:sfo}. Then, for any step size $\gamma\in\left(0, \frac{1}{14L}\right)$, and for any $N\ge 1$, it holds that}
    \begin{equation*}
        \frac{1}{N\gamma}\int_{0}^{N\gamma}\mathrm{FI}(\mu_t\|\pi) \,dt \leq \frac{2\mathrm{KL}(\mu_0\|\pi)}{N\gamma} + 16L^2d\gamma + \frac{8\sigma^2}{b}
    \end{equation*}\textit{Furthermore, choose $\gamma = \frac{1}{2L}\sqrt{\frac{\mathrm{KL}(\mu_0\|\pi)}{dN}}$ and $b = \lceil \sigma^2 \sqrt{N} \rceil$. Then, the time-averaged law $\bar{\mu}_{N\gamma}=\frac{1}{N\gamma}\int_0^{N\gamma}\mu_t\,dt$ satisfies $\mathrm{FI}(\bar{\mu}_{N\gamma}\|\pi)\leq \varepsilon$ with $\BigO(\frac{L^2d^2}{\varepsilon^2})$ iterations and $\BigO(\frac{Ld\sigma^2}{\varepsilon})$ gradient computations per iteration. }

\textit{Proof. } We consider the following interpolation for SGLD with mini-batch estimates 
\begin{equation*}
    \vx_{t}\coloneqq \vx_{k\gamma} - (t-k\gamma)\frac{1}{b}\sum_{i=1}^b\nabla f(\vx_{k\gamma}, \xi_k^i) +\sqrt{2}(\mB_t - \mB_{k\gamma})
\end{equation*}for $t\in[k\gamma, (k+1)\gamma]$. Under Assumptions~\ref{ass:potential-lipschitz} and~\ref{ass:sfo}, we follow the proof of Theorem 15 in~\citet{balasubramanian2022towards} to obtain
\begin{equation}\label{sgld-thm:fi-bound}
    \frac{1}{N\gamma}\int_{0}^{N\gamma}\mathrm{FI}(\mu_t\|\pi) \,dt \leq \frac{2\mathrm{KL}(\mu_0\|\pi)}{N\gamma} + 16L^2d\gamma + \frac{8\sigma^2}{b}
\end{equation}for all $\gamma\in\left(0, \frac{1}{14L}\right)$. Here, $\mathrm{KL(\mu_0\|\pi})$ denotes the initial Kullback-Leibler divergence and it scales on the order of $d$ (see, e.g., Lemma 1 in~\citet{vempala2019rapid} or Appendix A in~\citet{chewi2025analysis}). We then choose 
\begin{equation*}
    \gamma = \frac{1}{2L}\sqrt{\frac{\mathrm{KL}(\mu_0\|\pi)}{dN}}, \quad \text{and} \quad b = \left\lceil\sigma^2\sqrt{N}\right\rceil.
\end{equation*}Plugging these definitions into~(\ref{sgld-thm:fi-bound}), we get
\begin{equation}
    \mathrm{FI}(\bar{\mu}_{N\gamma}\|\pi )\leq \frac{1}{N\gamma}\int_{0}^{N\gamma}\mathrm{FI}(\mu_t\|\pi) \,dt \leq \frac{\left(12L\sqrt{\mathrm{KL}(\mu_0\|\pi)d}+8\right)}{\sqrt{N}},
\end{equation}where the first inequality follows from the convexity of the Fisher information combined with Jensen’s inequality. Since $\mathrm{KL}(\mu_0\|\pi) = \mathcal{O}(d)$, we have $\mathrm{FI}(\bar{\mu}_{N\gamma}\|\pi)\leq \varepsilon$ if
\begin{equation}
    N=\mathcal{O}\biggl(\frac{L^2d^2}{\varepsilon^2}\biggr),
\end{equation}which implies $b = \BigO(\frac{Ld\sigma^2}{\varepsilon})$. This concludes the proof.\qedbox

In the remainder of this section, we present our theoretical results and their proofs. First, we give results for ML-VRLD and SL-VRLD sampling algorithms to sample from a target distribution $\pi\propto e^{-f(\vx)}$. Then, we present the results for the application of these algorithms to inverse problems with score-based generative prior, where the goal is to sample from the posterior $p(\vx|\vy)\propto p(\vy|\vx)p(\vx)$ with prior $p(\vx)\propto e^{-h(\vx)}$ and likelihood $p(\vy | \vx)\propto e^{-f(\vx)}$. In practice, the settings of interest are often high-variance, where $\sigma^2$, the upper bound on the variance of stochastic gradients in Assumption~\ref{ass:sfo}, satisfies $\sigma^2 > 1$. Nevertheless, to also cover the case $\sigma^2 < 1$, we state our theoretical results in terms of $\sigma_m \coloneqq \max\{\sigma, 1\}$. For ease of exposition, we use $\sigma$ throughout the main manuscript.

\subsubsection{Proof of Theorem~\ref{thm:ml-vrld-ficonv} (ML-VRLD: FI convergence)}\label{proof:ml-vrld-ficonv}
\paragraph{Theorem~\ref{thm:ml-vrld-ficonv} (ML-VRLD: FI convergence)}\textit{Let $\pi\propto e^{-f}$ be the target distribution, where the potential function $f$ satisfies Assumptions~\ref{ass:potential-lipschitz} and~\ref{ass:sfo}. Let $(\mu_t)_{t\geq 0}$ denote the law of continuous-time interpolation generated by~(\ref{def:ml-vrld:interpolation}) with estimator~(\ref{def:ml-vrld:estimator}). Then, for any step size $\gamma \in \left(0, \frac{1}{L\sqrt{27\phi(p,\beta)}}\right]$, where $\phi(p,\beta)= 1+\frac{4(1-p)\beta^2}{(1-(1-p)\beta)^2}$, and for any $N\geq 1$, it holds that 
\begin{equation}
    \frac{1}{N\gamma}\int_{0}^{N\gamma} \mathrm{FI}(\mu_t || \pi)\, dt\leq \frac{2C_0}{N\gamma} + 9\sigma^2\left[\frac{\frac{p}{b} + (1-p)(1-\beta)^2}{1-(1-p)\beta}\right] + \frac{72\gamma L^2 d}{(1-(1-p)\beta)^2},
\end{equation}where 
\begin{equation}
    C_0 = \mathrm{KL}(\mu_0\|\pi) + \frac{9\gamma}{(1-(1-p)\beta)^2}\E[\|\nabla f(\vx_0) - \vg_0\|^2].
\end{equation}Furthermore, choose
\[
\gamma=\frac{\sqrt d}{\sigma_m N^{3/4}\sqrt{L}},\qquad 
p=1-\beta=\frac{\sqrt{Ld}}{\sigma_m N^{1/4}},\qquad 
b=\Bigl\lceil \tfrac{1}{p}\Bigr\rceil,
\quad \text{where} \quad \sigma_m = \max\{\sigma,1\}.
\]
Then, the time-averaged law $\bar{\mu}_{N\gamma}= \frac{1}{N\gamma}\int_{0}^{N\gamma}\mu_t\,dt$ satisfies $\mathrm{FI}(\bar{\mu}_{N\gamma}\|\pi)\leq \varepsilon$ with $\BigO(\frac{\sigma_m^{4}L^{2}d^{2}}{\varepsilon^{4}})$ iterations and $\BigO(1)$ gradient computations per iteration. }

\textit{Proof.}
Using the interpolation argument in~(\ref{def:ml-vrld:interpolation}), we begin the proof by recalling Lemma~\ref{lemma:sampling_first}
\begin{align*}
\frac{d}{dt}\mathrm{KL}(\mu_t \| \pi)
&\leq
-\frac{3}{4}\mathrm{FI}(\mu_t \| \pi)
+
\E \left[\|\nabla f(\vx_t) - \vg_k\|^2\right].
\end{align*}
We decompose the error term by introducing $\nabla f(\vx_{k\gamma})$ and then apply Young’s inequality (i.e., \(\|a+b\|^2 \le 2\|a\|^2+2\|b\|^2\)), obtaining
\begin{align*}
\frac{d}{dt}\mathrm{KL}(\mu_t \| \pi)
&\leq
-\frac{3}{4}\mathrm{FI}(\mu_t \| \pi) +
2\E \left[\|\nabla f(\vx_t) - \nabla f(\vx_{k\gamma})\|^2\right] +
2\E \left[\|\nabla f(\vx_{k\gamma}) - \vg_k\|^2\right].
\end{align*}
Invoking Assumption~\ref{ass:potential-lipschitz}, we upper bound $\E\left[\|\nabla f(\vx_t) - \nabla f(\vx_{k\gamma})\|^2\right]$  using Lipschitz continuity of the potential $\nabla f$, which gives
\begin{align}
\frac{d}{dt}\mathrm{KL}(\mu_t \| \pi)
&\leq
-\frac{3}{4}\mathrm{FI}(\mu_t \| \pi) +
2L^2 \E\left[\|\vx_t - \vx_{k\gamma}\|^2\right] +
2e_k^2, \label{ml-vrld-proof:first-main-inequality}
\end{align}
where we define
$e_k^2 \coloneq \E\!\left[\|\nabla f(\vx_{k\gamma}) - \vg_k\|^2\right]$.
The next step is to bound $e_k^2$.
Using the definition of the ML-VRLD estimator (\ref{def:ml-vrld:estimator}),
$e_{k+1}^2$ can be expressed as follows.
With probability $p$, a mini-batch estimator of size $b$ is used,
while with probability $1-p$ a momentum-based update is applied.
This yields
\begin{align*}
e_{k+1}^2 
&=
p\,\E\!\left[
\left\|
\nabla f(\vx_{(k+1)\gamma})
-\frac{1}{b}\sum_{i=1}^b \nabla f(\vx_{(k+1)\gamma}, \xi_{k+1}^i)
\right\|^2
\right] \\
&\quad
+ (1-p)\,\E\!\left[
\left\|
\nabla f(\vx_{(k+1)\gamma})
-
\bigl(\beta \vg_k + (1-\beta)\nabla f(\vx_{(k+1)\gamma}, \xi_{k+1})\bigr)
\right\|^2
\right].
\end{align*}
Using Assumption~\ref{ass:sfo}, we upper bound the variance of the mini-batch estimator, which gives
\begin{align*}
e_{k+1}^2
\leq
\frac{p\sigma^2}{b} 
+ (1-p)\,\E\left[
\left\|
\nabla f(\vx_{(k+1)\gamma})
-
\bigl(\beta \vg_k + (1-\beta)\nabla f(\vx_{(k+1)\gamma}, \xi_{k+1})\bigr)
\right\|^2
\right].
\end{align*}
We decompose the second term by adding and subtracting
$\nabla f(\vx_{k\gamma})$ inside expectation, regrouping the terms and using Assumption \ref{ass:sfo}
\begin{align*}
e_{k+1}^2
&\leq
\frac{p\sigma^2}{b} + (1-p)\E\left[\| \beta(\nabla f(\vx_{k\gamma}) - \vg_k) + \beta(\nabla f(\vx_{(k+1)\gamma}) - \nabla f(\vx_{k\gamma})) \right. \\
&\qquad\qquad\qquad\left. + (1-\beta)(\nabla f(\vx_{(k+1)\gamma}) - \nabla f(\vx_{(k+1)\gamma},\xi_{k+1}))\|^2\right] \\
&\leq
\frac{p\sigma^2}{b} + (1-p)\beta^2\E\left[\| (\nabla f(\vx_{k\gamma}) - \vg_k) + (\nabla f(\vx_{(k+1)\gamma}) - \nabla f(\vx_{k\gamma})) \|^2\right] + (1-\beta)^2(1-p)\sigma^2.
\end{align*}

Applying Young’s inequality with parameter \(\rho_0>0\)
(i.e., \(\|a+b\|^2 \le (1+\rho_0)\|a\|^2 + (1+1/\rho_0)\|b\|^2\)),
together with Assumption~\ref{ass:potential-lipschitz} to bound $\E\left[\| (\nabla f(\vx_{k\gamma}) - \vg_k) + (\nabla f(\vx_{(k+1)\gamma}) - \nabla f(\vx_{k\gamma}))\|^2\right]$, we obtain
\begin{align*}
e_{k+1}^2
&\leq
\frac{p\sigma^2}{b} +
(1-p)\beta^2L^2\left(1+\tfrac{1}{\rho_0}\right)\Delta_k + 
(1-p)\beta^2(1+\rho_0)e_k^2 +
(1-p)(1-\beta)^2\sigma^2 \\
&=
(1-p)\beta^2L^2\left(1+\tfrac{1}{\rho_0}\right)\Delta_k + (1-p)\beta^2(1+\rho_0)e_k^2 + \left[\frac{p}{b} + (1-p)(1-\beta)^2\right]\sigma^2, 
\end{align*} 
where $\Delta_k \coloneqq \E\left[\|\vx_{(k+1)\gamma} - \vx_{k\gamma}\|^2\right]$. Furthermore, we choose $\rho_0 = \frac{1-(1-p)\beta}{2(1-p)\beta}$. Rearranging the terms then yields
\begin{align}
    \nonumber
    e_k^2 
    &\leq
    -\frac{2}{1-(1-p)\beta}(e_{k+1}^2 - e_k^2) +
    \frac{\left[\frac{p}{b}+(1-p)(1-\beta)^2\right]2\sigma^2}{1-(1-p)\beta} +
    \frac{2(1-p)\beta^2L^2(1+(1-p)\beta)}{(1-(1-p)\beta)^2}\Delta_k \\ \label{eq:upper-bound-ek}
    &\leq
    -\frac{2}{1-(1-p)\beta}(e_{k+1}^2 - e_k^2) +
    \frac{\left[\frac{p}{b}+(1-p)(1-\beta)^2\right]2\sigma^2}{1-(1-p)\beta} +
    \frac{4(1-p)\beta^2L^2}{(1-(1-p)\beta)^2}\Delta_k.
\end{align} Since \(p\in(0,1]\) and \(\beta\in(0,1]\), it follows that \((1-p)\beta \in [0,1)\), and hence
\(1+(1-p)\beta \le 2\). We have used this bound to simplify the last term. We now bound $\Delta_k$ by using the continuous interpolation (\ref{def:ml-vrld:interpolation}) of LMC. For $t \in [k\gamma,(k+1)\gamma]$,
\begin{align*}
    \vx_{t} - \vx_{k\gamma} &= - (t-k\gamma)\vg_k + \sqrt{2}\bigl(\mB_{t} - \mB_{k\gamma}\bigr).
\end{align*}
Taking squared norms on both sides and then taking expectations,
and using the independence of $\vg_k$ and the Brownian increment,
we obtain
\begin{align*}
    \E\left[\|\vx_t - \vx_{k\gamma}\|^2\right]
    &= (t-k\gamma)^2\E\left[\|\vg_k\|^2\right]
    + 2\E\left[\|\mB_t - \mB_{k\gamma}\|^2\right] \\
    &=(t-k\gamma)^2\E\!\left[\|\vg_k\|^2\right] + 2(t-k\gamma)d\\
    & \leq \gamma^2 \E\!\left[\|\vg_k\|^2\right] + 2(t-k\gamma)d \\ 
    &= \Delta_k,
\end{align*}where we have used the identity $\E\!\left[\|\mB_t-\mB_{k\gamma}\|^2\right]=(t-k\gamma)d$.
Writing
\[
\vg_k
=
(\vg_k-\nabla f(\vx_{k\gamma}))
+ (\nabla f(\vx_{k\gamma})-\nabla f(\vx_t))
+ \nabla f(\vx_t),
\]
and applying the inequality
$\|a+b+c\|^2 \le 3(\|a\|^2+\|b\|^2+\|c\|^2)$, we obtain
\begin{align*}
\Delta_k
&= \gamma^2\E\!\left[\|\vg_k\|^2\right] + 2\gamma d \\
&\le
3\gamma^2\E\!\left[\|\vg_k-\nabla f(\vx_{k\gamma})\|^2\right]
+ 3\gamma^2\E\!\left[\|\nabla f(\vx_{k\gamma})-\nabla f(\vx_t)\|^2\right]
+ 3\gamma^2\E\!\left[\|\nabla f(\vx_t)\|^2\right]
+ 2\gamma d .
\end{align*}
Using Lipschitz continuity of $\nabla f$ (Assumption \ref{ass:potential-lipschitz}) and the bound
$\E\!\left[\|\vx_t - \vx_{k\gamma}\|^2\right]\le \Delta_k$ yields
\begin{align*}
\Delta_k
\le
3\gamma^2 e_k^2
+ 3\gamma^2 L^2 \Delta_k
+ 3\gamma^2\E\!\left[\|\nabla f(\vx_t)\|^2\right]
+ 2\gamma d .
\end{align*}Substituting the upper bound (\ref{eq:upper-bound-ek}) for $e_k^2$ and rearranging the terms yields
\begin{align*}
\Delta_k
&\le
-\frac{6\gamma^2}{1-(1-p)\beta}\,\bigl(e_{k+1}^2-e_k^2\bigr)
+ \frac{6\gamma^2\sigma^2}{1-(1-p)\beta}
\left[\frac{p}{b} + (1-p)(1-\beta^2)\right] \\
&\quad
+ 3\gamma^2L^2
\left[1 + \frac{4(1-p)\beta^2}{\bigl(1-(1-p)\beta\bigr)^2}\right]\Delta_k
+ 3\gamma^2\E\!\left[\|\nabla f(x_t)\|^2\right]
+ 2\gamma d.
\end{align*}
To simplify notation, we define
\(
\phi(p,\beta) \coloneqq 1 + \frac{4(1-p)\beta^2}{(1-(1-p)\beta)^2}.
\)
With this definition and rearranging the terms, the inequality becomes
\begin{align*}
\bigl[1 - 3\gamma^2 L^2 \phi(p,\beta)\bigr]\Delta_k
&\le
-\frac{6\gamma^2}{1-(1-p)\beta}\,\bigl(e_{k+1}^2 - e_k^2\bigr)
+ \frac{6\gamma^2\sigma^2}{1-(1-p)\beta}
\left[\frac{p}{b} + (1-p)(1-\beta)^2\right] \\
& \quad + 3\gamma^2\E\!\left[\|\nabla f(x_t)\|^2\right]
+ 2\gamma d .
\end{align*}
Assuming the step size satisfies $\gamma \leq \frac{1}{27L^2\phi(p,\beta)}$, the coefficient of $\Delta_k$ on the left-hand side satisfies $1 - 3\gamma^2L^2\phi(p,\beta) \geq \frac{8}{9}$. Applying this bound and dividing both sides by $\frac{8}{9}$ yields
\begin{align}\label{eq:upper-bound-delta-k}
\Delta_k
&\le
-\frac{27\gamma^2}{4\bigl(1-(1-p)\beta\bigr)}\,
\bigl(e_{k+1}^2 - e_k^2\bigr) 
+ \frac{27\gamma^2\sigma^2}{4\bigl(1-(1-p)\beta\bigr)}
\left[\frac{p}{b} + (1-p)(1-\beta)^2\right]
+ \frac{27\gamma^2}{8}\,
\E\!\left[\|\nabla f(x_t)\|^2\right]
+ \frac{9}{4}\,\gamma d .
\end{align} Now, recall from~(\ref{ml-vrld-proof:first-main-inequality}), we have
\begin{align*}
\frac{d}{dt}\mathrm{KL}(\mu_t \| \pi)
&\leq
-\frac{3}{4}\mathrm{FI}(\mu_t \| \pi) +
2L^2 \E\left[\|\vx_t - \vx_{k\gamma}\|^2\right] +
2e_k^2,
\end{align*}
Applying (\ref{eq:upper-bound-ek}) and (\ref{eq:upper-bound-delta-k}) together with
$\E\!\left[\|x_t-x_{k\gamma}\|^2\right] \leq \Delta_k$,
we obtain
\begin{align*}
\frac{d}{dt}\mathrm{KL}(\mu_t \| \pi)
&\le
-\frac{3}{4}\,\mathrm{FI}(\mu_t \| \pi)
-\frac{1}{1-(1-p)\beta}
\left[4+\frac{27}{2}\gamma^2L^2\phi(p,\beta)\right]
\bigl(e_{k+1}^2 - e_k^2\bigr) \\
&\quad
+ \frac{\sigma^2}{1-(1-p)\beta}
\left[4+\frac{27}{2}\gamma^2L^2\phi(p,\beta)\right]
\left[\frac{p}{b} + (1-p)(1-\beta)^2\right] \\
&\quad
+ \frac{27}{4}\gamma^2L^2\phi(p,\beta)\,
\E\!\left[\|\nabla f(\vx_t)\|^2\right]
+ \frac{9}{2}\gamma L^2\phi(p,\beta)\, d .
\end{align*}
Applying Lemma~\ref{lemma:sampling_final} to bound
$\E\!\left[\|\nabla f(\vx_t)\|^2\right]$, and using the step size condition
$\gamma \leq \frac{1}{27L^2\phi(p,\beta)}$, we simplify the above inequality as
\begin{align} \label{eq:ml-vrld-kl-bound}
\nonumber
\frac{d}{dt}\mathrm{KL}(\mu_t \| \pi)
&\le
-\frac{1}{2}\,\mathrm{FI}(\mu_t \| \pi)
-\frac{1}{1-(1-p)\beta}
\left[4+\frac{27}{2}\gamma^2L^2\phi(p,\beta)\right]
\bigl(e_{k+1}^2-e_k^2\bigr) \\
&\quad
+ \frac{9\,\sigma^2}{2\bigl(1-(1-p)\beta\bigr)}
\left[\frac{p}{b} +(1-p)(1-\beta)^2\right]
+ 9\gamma L^2\phi(p,\beta)\,d .
\end{align} We now integrate both sides over $t \in [k\gamma,(k+1)\gamma]$
and define \,\(
\mathcal{L}_k \coloneqq \mathrm{KL}(\mu_{k\gamma}\|\pi)
+ \frac{\gamma}{1-(1-p)\beta}\left[4 + \frac{27}{2}\gamma^2L^2\phi(p, \beta)\right]e_k^2 .
\)
This yields
\begin{align*}
\mathcal{L}_{k+1} - \mathcal{L}_k
&\le
-\frac{1}{2}
\int_{k\gamma}^{(k+1)\gamma} \mathrm{FI}(\mu_t \| \pi)\,dt
+ \frac{9\,\gamma\sigma^2}{2\bigl(1-(1-p)\beta\bigr)}
\left[\frac{p}{b} + (1-p)(1-\beta)^2\right]
+ 9\gamma^2 L^2 \phi(p,\beta)d .
\end{align*} Iterating the inequality for $k=0,1,\dots,N-1$, we obtain
\begin{align*}
\mathcal{L}_{N} - \mathcal{L}_0
&\le
-\frac{1}{2}
\int_{0}^{N\gamma} \mathrm{FI}(\mu_t \| \pi)\,dt
+ \frac{9N\gamma\sigma^2}{2\bigl(1-(1-p)\beta\bigr)}
\left[\frac{p}{b} + (1-p)(1-\beta)^2\right]
+ 9N\gamma^2 L^2 \phi(p,\beta)d .
\end{align*} Rearranging the terms, we get
\begin{align*}
\frac{1}{N\gamma}\int_{0}^{N\gamma}\mathrm{FI}(\mu_t \| \pi)\,dt
&\le
\frac{2\mathcal{L}_0}{N\gamma}
+ 9\sigma^2
\left[\frac{\frac{p}{b} + (1-p)(1-\beta)^2}{1-(1-p)\beta}\right]
+ 18\gamma L^2\phi(p,\beta)d.
\end{align*} Note that $\mathcal{L}_0 = \mathrm{KL}(\mu_0 \| \pi) + \frac{\gamma}{1-(1-p)\beta}\left(4 + \frac{27}{2}\gamma^2L^2\phi(p, \beta)\right)e_0^2\leq \mathrm{KL}(\mu_0 \| \pi) + \frac{9\gamma e_0^2}{2(1-(1-p)\beta)}\eqqcolon C_0$, then
\begin{align*}
    \frac{1}{N\gamma}\int_{0}^{N\gamma}\mathrm{FI}(\mu_t \| \pi)\,dt \leq \frac{2C}{N\gamma} + 9\sigma^2\left[\frac{\frac{p}{b}+(1-p)(1-\beta)^2}{1-(1-p)\beta}\right] + 18\gamma L^2\phi(p,\beta)d.
\end{align*} Also $\phi(p, \beta)=1+\frac{4(1-p)\beta^2}{(1-(1-p)\beta)^2}\leq1+\frac{4(1-p)\beta}{(1-(1-p)\beta)^2} = \left(\frac{1+(1-p)\beta}{1-(1-p)\beta}\right)^2 \leq \frac{4}{(1-(1-p)\beta)^2}$ since $1+(1-p)\beta \leq 2$,
then the inequality can be rewritten as 
\begin{align}\label{eq:ml-vrld-integral}
    \frac{1}{N\gamma}\int_{0}^{N\gamma}\mathrm{FI}(\mu_t \| \pi)\,dt \leq \frac{2C_0}{N\gamma} + 9\sigma^2\left[\frac{\frac{p}{b}+(1-p)(1-\beta)^2}{1-(1-p)\beta}\right] + \frac{72\gamma L^2d}{(1-(1-p)\beta)^2}.
\end{align} This concludes the first part of the proof. Next we choose \[\gamma = \frac{\sqrt{d}}{\sigma_mN^{3/4}\sqrt{L}} ,\quad p=(1-\beta)=\frac{\sqrt{Ld}}{N^{1/4}\sigma_m} \quad \text{and} \quad b=\Bigl\lceil \tfrac{1}{p}\Bigr\rceil \quad \text{where} \quad \sigma_m \coloneqq \mathrm{max}\{\sigma, 1\}
\] and define the time-averaged law $\bar{\mu}_{N\gamma} \coloneqq \frac{1}{N\gamma}\int_0^{N\gamma}\mu_t\,dt$. Since $\mathrm{FI}(\cdot\|\pi)$ is convex, using Jensen's inequality, we obtain the upper bound for $\mathrm{FI}(\bar{\mu}_{N\gamma}\|\pi)$ as

\begin{align*}
    \mathrm{FI}(\bar{\mu}_{N\gamma}\|\pi) \le \frac{1}{N\gamma}\int_{0}^{N\gamma}\mathrm{FI}(\mu_t \| \pi)\,dt \leq \underbrace{\frac{2C_0}{N\gamma}}_{\mathrm{(I)}} + \underbrace{9\sigma^2\left[\frac{\frac{p}{b}+(1-p)(1-\beta)^2}{1-(1-p)\beta}\right]}_{\mathrm{(II)}} + \underbrace{\frac{72\gamma L^2d}{(1-(1-p)\beta)^2}}_{\mathrm{(III)}}.
\end{align*}Substituting the definition of $\gamma$ into $\mathrm{(I)}$, we have
\begin{equation}
    \mathrm{(I)} = \frac{2C_0\sigma_m \sqrt{L}}{N^{1/4}\sqrt{d}}.\label{thm1:coeff:0}
\end{equation}To bound $\mathrm{(II)}$, we proceed as 
\begin{align}
    \mathrm{(II)}&\le 9\sigma_m^2 \biggl[\frac{p^2 + (1-p)p^2}{1-(1-p)^2}\biggr]\label{thm1:coeff:1} \\ 
    & \le 18\sigma_m^2p.\label{thm1:coeff:2}
\end{align}We use $\sigma \le \sigma_m$, $b\ge 1/p$, and $1-\beta = p$ to get~(\ref{thm1:coeff:1}). Following that, we use $p\in(0,1]$ to get~(\ref{thm1:coeff:2}). Substituting the definition of $p$, we have 
\begin{equation}
    \mathrm{(II)}\le \frac{18\sigma_m\sqrt{Ld}}{N^{1/4}}. \label{thm1:coeff:3}
\end{equation}Similarly, to bound $\mathrm{(III)}$, 
\begin{align}
    \mathrm{(III)} & \le \frac{72\gamma L^2 d}{p^2}\label{thm1:coeff:1.1}\\
    & \le \frac{72\sigma_m \sqrt{Ld}}{N^{1/4}},\label{thm1:coeff:1.2}
\end{align}where we use $p=1-\beta$ and $p\in(0,1]$ to obtain~(\ref{thm1:coeff:1.1}), and substitute the definitions of $\gamma$ and $p$ to get~(\ref{thm1:coeff:1.2}). Combining~(\ref{thm1:coeff:0}, \ref{thm1:coeff:3}, \ref{thm1:coeff:1.2}), we have

\begin{align*}
\mathrm{FI}(\bar{\mu}_{N\gamma}\|\pi)
& \leq \frac{2C_0\sigma_m\sqrt{L}}{N^{1/4}\sqrt{d}} + \frac{90\sigma_m\sqrt{Ld}}{N^{1/4}} \\ \nonumber
&\leq \left(\frac{2C_0}{d} + 90\right)\frac{\sigma_m\sqrt{Ld}}{N^{1/4}},
\end{align*} where $C_0$ is usually taken to be of order $d$, see e.g. Lemma 1 in~\citet{vempala2019rapid} or Appendix A in~\citet{chewi2025analysis}. Thus, the above bound implies that, to ensure
$\mathrm{FI}(\bar{\mu}_{N\gamma}\|\pi) \le \varepsilon$,
\begin{equation*}
    N=\BigO\biggl(\frac{\sigma_m^4L^2 d^2}{\varepsilon^4}\biggl)
\end{equation*}iterations is sufficient. And the average number of gradient computations per iteration is
\begin{equation}
    pb + (1-p) = \BigO(1).
\end{equation}We next verify that the parameter constraints imposed throughout the analysis
are compatible with this choice of $N$.
First, the requirement
$p=(1-\beta)=\frac{\sqrt{dL}}{N^{1/4}\sigma_m}\le 1$
imposes the lower bound
$N\geq \frac{d^2L^2}{\sigma_m^4}$,
which is satisfied by the above choice of $N$. Second, we show that the step-size condition
$\gamma \le \frac{1}{L\sqrt{27\phi(p,\beta)}}$
is satisfied. Since $\phi(p,\beta)\leq \frac{4}{(1-(1-p)\beta)^2}$, we have 

\begin{equation*}
    \gamma \le \frac{1-(1-p)\beta}{L\sqrt{108}} \le \frac{1}{L\sqrt{27\phi(p,\beta)}}.
\end{equation*}Substituting the definitions of $\beta = 1-p$ and $p= \frac{\sqrt{dL}}{\sigma_m N^{1/4}}$, we get 
\begin{align*}
    \frac{1-(1-p)\beta}{L\sqrt{108}} & = \frac{1-(1-p)^2}{L\sqrt{108}} \\
    & \ge \frac{p}{L\sqrt{108}}\\
    & = \frac{\sqrt{d}}{\sigma_m N^{1/4}\sqrt{108L}}.
\end{align*}Thus, the step-size condition holds provided that
\[
\gamma \leq \frac{\sqrt{d}}{\sigma_m N^{1/4}\sqrt{108L}}.
\]
Using the definition of step size
\[
\gamma = \frac{\sqrt{d}}{\sigma_m N^{3/4}\sqrt{L}},
\]
this condition reduces to the following lower bound $N \ge 108$, which is satisfied by $N=\BigO\biggl(\frac{\sigma_m^4L^2d^2}{\varepsilon^4}\biggr)$. \qedbox

\subsubsection{Proof of Corrolary~\ref{corr:ml-vrld-tvconv} (ML-VRLD: squared TV distance convergence)}\label{proof:ml-vrld-tvconv}
\paragraph{Corrolary~\ref{corr:ml-vrld-tvconv} (ML-VRLD: squared TV distance convergence)}\textit{
Let $\pi\propto e^{-f}$ be the target distribution, where the potential function $f$ satisfies Assumptions~\ref{ass:potential-lipschitz},~\ref{ass:sfo}, and~\ref{ass:poincare-inequality}. Let $(\mu_t)_{t\geq 0}$ denote the law of continuous-time interpolation generated by~(\ref{def:ml-vrld:interpolation}) with estimator~(\ref{def:ml-vrld:estimator}). Then, for any step size $\gamma \in \left(0, \frac{1}{L\sqrt{27\phi(p,\beta)}}\right]$, where $\phi(p,\beta)= 1+\frac{4(1-p)\beta^2}{(1-(1-p)\beta)^2}$, and for any $N\geq 1$, it holds that  
\begin{equation}
    \|\Bar{\mu}_{N\gamma} - \pi\|_{\mathrm{TV}}^2\leq \frac{8C_0C_{\mathrm{PI}}}{N\gamma} + 36C_{\mathrm{PI}}\sigma^2\left[\frac{\frac{p}{b} + (1-p)(1-\beta)^2}{1-(1-p)\beta}\right] + \frac{288C_{\mathrm{PI}}\gamma L^2 d}{(1-(1-p)\beta)^2},
\end{equation}where 
\begin{equation}
    C_0 = \mathrm{KL}(\mu_0\|\pi) + \frac{9\gamma}{(1-(1-p)\beta)^2}\E[\|\nabla f(\vx_0) - \vg_0\|^2].
\end{equation}Furthermore, choose
\[
\gamma=\frac{\sqrt d}{\sigma_m N^{3/4}\sqrt{L}},\qquad 
p=1-\beta=\frac{\sqrt{Ld}}{N^{1/4}\sigma_m},\qquad 
b=\Bigl\lceil \tfrac{1}{p}\Bigr\rceil,
\quad \text{where } \sigma_m = \max\{\sigma,1\}.
\]
Then, the time-averaged law $\bar{\mu}_{N\gamma}= \frac{1}{N\gamma}\int_{0}^{N\gamma}\mu_t\,dt$ satisfies $\|\Bar{\mu}_{N\gamma} - \pi\|_{\mathrm{TV}}^2\le \varepsilon$ with$~ \BigO\!\left(\frac{C_\mathrm{PI}^4\sigma_m^{4}L^{2}d^{2}}{\varepsilon^{4}}\right)$ iterations and $\BigO(1)$ gradient computations per iteration.  } 

\textit{Proof.} Recall from the proof of Theorem~\ref{thm:ml-vrld-ficonv} (Appendix \ref{proof:ml-vrld-ficonv}) that the Fisher information of the time-averaged law $\bar{\mu}_{N\gamma}$ satisfies (\ref{eq:ml-vrld-integral})
\begin{align*}
    \mathrm{FI}(\bar{\mu}_{N\gamma}\|\pi)
\leq \frac{1}{N\gamma}\int_{0}^{N\gamma}\mathrm{FI}(\mu_t \| \pi)\,dt \leq \frac{2C_0}{N\gamma} + 9\sigma_{m}^2\left[\frac{\frac{p}{b}+(1-p)(1-\beta)^2}{1-(1-p)\beta}\right] + \frac{72\gamma L^2d}{(1-(1-p)\beta)^2},
\end{align*}where we use $\sigma\le \sigma_m$. Using Lemma~\ref{lemma:tv_lemma} and Assumption~\ref{ass:poincare-inequality}, we get 
\begin{equation*}
        \|\Bar{\mu}_{N\gamma} - \pi\|_{\mathrm{TV}}^2\leq \frac{8C_0C_{\mathrm{PI}}}{N\gamma} + 36C_{\mathrm{PI}}\sigma_m^2\left[\frac{\frac{p}{b} + (1-p)(1-\beta)^2}{(1-(1-p)\beta)}\right] + \frac{288C_{\mathrm{PI}}\gamma L^2 d}{(1-(1-p)\beta)^2}.
\end{equation*}
Choose
    \[
    \gamma=\frac{\sqrt d}{\sigma_m N^{3/4}\sqrt{L}},\qquad 
    p=1-\beta=\frac{\sqrt{Ld}}{N^{1/4}\sigma_m},\qquad 
    b=\Bigl\lceil \tfrac{1}{p}\Bigr\rceil,
    \quad \text{where } \sigma_m \coloneqq \max\{\sigma,1\}.
    \]
Following the same steps in~(\ref{thm1:coeff:0}, \ref{thm1:coeff:3}, \ref{thm1:coeff:1.2}), we obtain
\begin{align*}
\|\Bar{\mu}_{N\gamma} - \pi\|_{\mathrm{TV}}^2\le \left(\frac{2C_0}{d}+90\right)\frac{4C_{\mathrm{PI}}\sigma_m\sqrt{Ld}}{N^{1/4}},
\end{align*}where $C_0$ is in the order of $d$. This implies that, to ensure
$\|\Bar{\mu}_{N\gamma} - \pi\|_{\mathrm{TV}}^2 \le \varepsilon$,
it suffices to choose $N$ such that
\(
N^{1/4} \;\ge\; \frac{\tilde{C}C_{\mathrm{PI}}\sigma_m\sqrt{Ld}}{\varepsilon},
\)
where $\tilde C\coloneqq \frac{8C_0}{d} + 360>0$.
Equivalently, this yields iteration complexity of
\(
\BigO\!\left(\frac{C_{\mathrm{PI}}^4\sigma_m^4 L^2 d^2}{\varepsilon^4}\right)
\) with $\BigO(1)$ gradient computations per iteration. Since the constraints on $N$ in Theorem~\ref{thm:ml-vrld-ficonv} remain unchanged, they continue to hold.  \qedbox

\subsubsection{Proof of Theorem~\ref{thm:ml-vrld-weakconv} (ML-VRLD: weak convergence)}\label{proof:ml-vrld-weakconv}
\paragraph{Theorem~\ref{thm:ml-vrld-weakconv} (ML-VRLD: weak convergence)}\textit{
Let $\pi\propto e^{-f}$ be the target distribution, where the potential function $f$ satisfies Assumptions~\ref{ass:potential-lipschitz} and~\ref{ass:sfo}. Let $(\mu_t)_{t\geq 0}$ denote the law of continuous-time interpolation generated by~(\ref{def:ml-vrld:interpolation}) with estimator~(\ref{def:ml-vrld:estimator}). Define the time-varying parameters as follows
\[
\gamma_k=\frac{C_\gamma}{k^{3/2}},\qquad 
p_k = 1-\beta_k =\frac{C_p}{k^{1/2}},\qquad
b_k=\left\lceil \frac{1}{p_k}\right\rceil,
\qquad \forall k\geq 1,
\]\
where \(0<C_{p}<1\) and \(C_\gamma>0\) are numerical constants. 
Then, the time-averaged law $\bar{\mu}_{\tau_n}= \frac{1}{\tau_n}\int_0^{\tau_n} \mu_t \,dt$, where $\tau_n= \sum_{k=1}^{n}\gamma_k$, converges weakly to $\pi$. }

\textit{Proof.} Given time-varying parameters \(\gamma_k,b_k,p_k\) at iteration \(k\), define the cumulative time \(\tau_n\) and averaged law \(\bar{\mu}_{\tau_n}\) at iteration \(n\) as
\[
\tau_{n}\coloneqq\sum_{k=1}^{n}\gamma_{k},
\qquad
\bar{\mu}_{\tau_{n}}\coloneqq
\frac{1}{\tau_{n}}\int_{0}^{\tau_{n}}\mu_{t}\,dt,
\]
where \(\mu_t\) denotes the law of the process \(\vx_t\) under the following continuous-time interpolation
\begin{equation}
\vx_{t}\coloneqq 
\vx_{\tau_{n}}
-(t-\tau_{n})\,\vg_n
+\sqrt{2}\,\bigl(\mB_{t}-\mB_{\tau_{n}}\bigr),
\qquad t\in[\tau_{n},\tau_{n+1}].
\end{equation}$\vg_n$ is defined as follows

\begin{equation}
\vg_{n} :=
\begin{cases}
\displaystyle
\frac{1}{b_{n}}\sum_{i=1}^{b_{n}} \nabla f(\vx_{\tau_n}, \xi_{n}^i)
& \text{w.p. } p_{n}, \\[1ex]
\displaystyle
\beta_n\vg_{n-1}
+ (1-\beta_n)\nabla f(\vx_{\tau_n}, \xi_n)
& \text{w.p. } 1-p_{n},
\end{cases}
\end{equation}for all $n \geq 1$. At initialization, we choose $\gamma_0, b_0>0$ and $(\beta_0,p_0)\in(0,1]^2 \setminus\{(1,1)\}$, and define $\vg_0$ as
\begin{equation*}
\vg_0 \coloneqq \frac{1}{b_0}\sum_{i=1}^{b_0}\nabla f(\vx_0, \xi_0^i).
\end{equation*}We establish weak convergence by adapting the proof of Theorem~\ref{thm:ml-vrld-ficonv}. To this end, we first verify the step sizes $(\gamma_k)_{k\ge 1}$ satisfy the step-size condition of Theorem~\ref{thm:ml-vrld-ficonv}, namely, 
\begin{equation}
    \gamma_k\in\left(0, \frac{1}{L\sqrt{27\phi(p_k,\beta_k)}}\right], \quad \text{where} \quad \phi(p_k, \beta_k) = 1 + \frac{4(1-p_k)\beta_k^2}{(1-(1-p_k)\beta_k)^2} \label{thm2:gamma_k:condition}
\end{equation} for all $k\ge 1$. Using the definitions of $p_k$ and $\beta_k$, we have
\begin{align}
    \phi(p_k,\beta_k) &= 1 + \frac{4(1-p_k)\beta_k^2}{(1-(1-p_k)\beta_k)^2}\notag \\
    & \leq 1 + \frac{4}{1-(1-p_k)^2} \notag \\
    & \leq 1 + \frac{4}{p_k} \notag \\ 
    & = 1 + \frac{4k^{1/2}}{C_p} \notag \\
    & \leq \frac{8k^{1/2}}{C_p}, \label{thm2:cp-bound:1}
\end{align}where we use the fact that $0<C_p<1$ to obtain~(\ref{thm2:cp-bound:1}). Then, we have
\begin{equation*}
    \frac{1}{L\sqrt{27\phi(p_k,\beta_k)}} \ge \frac{\sqrt{C_p}}{L\sqrt{216}k^{1/4}}.
\end{equation*}Choosing $C_\gamma=\frac{\sqrt{C_p}}{L\sqrt{216}}$, we have
\begin{equation*}
    \gamma_k = \frac{C_\gamma}{k^{3/2}} \leq \frac{\sqrt{C_p}}{L\sqrt{216}k^{1/4}} \le \frac{1}{L\sqrt{27\phi(p_k,\beta_k)}}
\end{equation*} for $k\ge 1$. This verifies~(\ref{thm2:gamma_k:condition}). In addition, by definition of $p_k$ and $\beta_k$, we have $p_k< 1$ and $\beta_k < 1$. Therefore, we can follow the same steps in Theorem~\ref{thm:ml-vrld-ficonv} (Appendix~\ref{proof:ml-vrld-ficonv}) up to~(\ref{eq:ml-vrld-kl-bound}) since the the same arguments remain valid for $t\in[\tau_{n-1}, \tau_n]$ with time-varying parameters. We obtain
\begin{align}
\frac{d}{dt}\mathrm{KL}(\mu_t \| \pi)
&\le
-\frac{1}{2}\,\mathrm{FI}(\mu_t \| \pi)
-\frac{1}{1-(1-p_n)\beta_n}
\left[4+\frac{27}{2}\gamma_n^2L^2\phi(p_n,\beta_n)\right]
\bigl(e_{n}^2-e_{n-1}^2\bigr) \notag\\
&\quad
+ \frac{9\,\sigma^2}{2\bigl(1-(1-p_n)\beta_n\bigr)}
\left[\frac{p_n}{b_n} +(1-p_n)(1-\beta_n)^2\right]
+ 9\gamma_n L^2\phi(p_n,\beta_n)\,d \label{thm2:diff-kl-ineq}
\end{align}for $t\in[\tau_{n-1}, \tau_n]$. Integrating both sides of the inequality over the interval \([\tau_{n-1}, \tau_n]\) yields
\begin{align*}
\nonumber
\mathrm{KL}(\mu_{\tau_n} \| \pi) - \mathrm{KL}(\mu_{\tau_{n-1}} \| \pi)
&\le
-\frac{1}{2}\int_{\tau_{n-1}}^{\tau_n}\mathrm{FI}(\mu_t \| \pi)\,dt
-\frac{\gamma_n}{1-(1-p_n)\beta_n}
\left[4+\frac{27}{2}\gamma_n^2L^2\phi(p_n,\beta_n)\right]
\bigl(e_{n}^2-e_{n-1}^2\bigr) \\
&\quad
+ \frac{9\gamma_n\,\sigma^2}{2\bigl(1-(1-p_n)\beta_n\bigr)}
\left[p_n^2 +(1-p_n)(1-\beta_n)^2\right]
+ 9\gamma_n^2 L^2\phi(p_n,\beta_n)d,
\end{align*}where we use \(b_n \ge \frac{1}{p_n}\). Iterating the above bound, we obtain
\begin{align}
\nonumber
\mathrm{KL}(\mu_{\tau_n} \| \pi) - \mathrm{KL}(\mu_{0} \| \pi)
&\le
- \frac{1}{2}\int_{0}^{\tau_n}\mathrm{FI}(\mu_t \| \pi)\,dt
\underbrace{-\sum_{k=1}^n\frac{\gamma_k}{1-(1-p_k)\beta_k}
\left[4+\frac{27}{2}\gamma_k^2L^2\phi(p_k,\beta_k)\right]
\bigl(e_{k}^2-e_{k-1}^2\bigr)}_{\text{(I)}} \\ \label{eq:ml-vrld-id-kl-bound}
&\quad
+\underbrace{ \frac{9\sigma^2}{2} \sum_{k=1}^n
\gamma_k\left(\frac{p_k^2 +(1-p_k)(1-\beta_k)^2}{1-(1-p_k)\beta_k}\right)}_{\text{(II)}}
+\underbrace{9L^2d \sum_{k=1}^n\gamma_k^2\phi(p_k,\beta_k)}_{\text{(III)}}.
\end{align} 
It remains to control the terms labeled \textup{(I)}--\textup{(III)} and show that their contribution remains uniformly bounded as \(k\to\infty\). This will imply that the upper bound (\ref{eq:ml-vrld-id-kl-bound}) stays bounded, which is a key step toward establishing weak convergence. 

First, analyze the term (I). Define 
\begin{align*}
    c_k \coloneqq \frac{\gamma_k}{1-(1-p_k)\beta_k}\left(4 + \frac{27}{2}\gamma_k^2L^2\phi(p_k, \beta_k)\right).
\end{align*}
If $(c_k)_{k\ge 1}$ is nonnegative and decreasing sequence, we can bound $\mathrm{(I)}$ as follows
\begin{align*}
(\textup{I}) =
    -\sum_{k=1}^nc_k(e_k^2-e_{k-1}^2) = c_1e_0^2 + \sum_{k=1}^{n-1}(c_{k+1}-c_{k})e_{k}^2 - c_ne_n^2 \leq c_1e_0^2.
\end{align*} Thus, it remains to show that $(c_k)_{k\ge 1}$ is nonnegative and decreasing sequence. Nonnegativity is obvious as all the terms are positive and $p_k <1$ and $\beta_k < 1$. To prove that the sequence is decreasing, we substitute $\gamma_k$, $\beta_k$, and $p_k$. We then have
\begin{align*}
    \frac{\gamma_k}{1-(1-p_k)\beta_k} & = \frac{\gamma_k}{1-\beta_k^2} \\
    & = \frac{C_\gamma}{2C_pk - C_p^2k^{1/2}},
\end{align*}which is decreasing for $k\ge 1$. In addition,
\begin{align*}
    \frac{27}{2}\gamma_k^2 L^2 \phi(p_k,\beta_k) = \frac{27C_\gamma^2L^2}{2k^3}\left(1+ \frac{4\left(1 - \frac{C_p}{k^{1/2}}\right)^3}{\left(\frac{2C_p}{k^{1/2}} - \frac{C_p^2}{k}\right)^2}\right).
\end{align*} We can write this sequence as a continuous function
\begin{equation*}
    c(x) = \frac{27C_\gamma^2L^2}{2x^3}\left(1+ \frac{4\left(1 - \frac{C_p}{x^{1/2}}\right)^3}{\left(\frac{2C_p}{x^{1/2}} - \frac{C_p^2}{x}\right)^2}\right)
\end{equation*}for $x\ge 1$. We apply change of variables $r=\frac{C_p}{x^{1/2}}$ and get
\begin{equation}
    c(x) = \frac{27C_\gamma^2L^2}{2C_p^6}\underbrace{r^6\left(1 + \frac{4(1-r)^3}{r^2(2 - r)^2}\right)}_{\eqqcolon h(r)}.
\end{equation}Differentiating $h(r)$ with respect to $r$, we have
\begin{align*}
    h'(r) & = \frac{2r^3}{(2-r)^3}\left[16 - 64r+114r^2-88r^3 + 28r^4 - 3r^5\right]\\ 
    & = \frac{2r^3}{(2-r)^3}\left[16(1-r)^5 + 16r(1-r)^4 + 18r^2(1-r)^3 + 30r^3(1-r)^2 + 30r^3(1-r)^2 + 18r^4(1-r) + 3r^5\right] \\ 
    & > 0
\end{align*}since $0<r<1$. Hence, differentiating $h(r(x))$ with respect to $x$, we get
\begin{align*}
c'(x) & = \frac{27C_\gamma^2L^2}{2C_p^6}h'(r(x))r'(x) \\
& = -\frac{27C_\gamma^2L^2}{4C_p^5x^{3/2}}h'(r(x)) < 0
\end{align*}for $x\ge 1$. Therefore, $c(x)$ is decreasing on $[1,\infty)$, which implies $(c_k)_{k\ge 1}$ is decreasing sequence. This implies
\begin{equation}
    \limsup_{n\to\infty} (\mathrm{I}) \le c_1 e_0^2.\label{thm2:decreasing-cn-e0}
\end{equation}

We now analyze the term~\textup{(II)}
\begin{align*}
\frac{9\sigma^2}{2} \sum_{k=1}^n
\gamma_k\left(\frac{p_k^2 +(1-p_k)(1-\beta_k)^2}{1-(1-p_k)\beta_k}\right) &\leq 
\frac{9\sigma^2}{2} \sum_{k=1}^n
\gamma_k\left(\frac{p_k^2 +(1-p_k)(1-\beta_k)^2}{p_k}\right) \\ &\leq 
\frac{9\sigma^2}{2} \sum_{k=1}^n
\gamma_k\left(\frac{p_k^2 +(1-\beta_k)^2}{p_k}\right) \\
&= \frac{9\sigma^2}{2} \sum_{k=1}^n
\gamma_k\left(\frac{p_k^2 + p_k^2}{p_k}\right) \\
&=9\sigma^2 \sum_{k=1}^n
\gamma_kp_k.
\end{align*}
Using the definitions of $\gamma_k$ and $p_k$, we obtain the following upper bound for (II):
\begin{align*}
    \frac{9\sigma^2}{2} \sum_{k=1}^n
\gamma_k\left(\frac{p_k^2 +(1-p_k)(1-\beta_k)^2}{1-(1-p_k)\beta_k}\right) &\leq 
9C_{\gamma}C_p\sigma^2\sum_{k=1}^{n}\frac{1}{k^2}.
\end{align*}
Taking the limit as $n \rightarrow \infty$ yields
\begin{align}
    \lim_{n\rightarrow\infty}{(\textup{II})} \leq 9C_{\gamma}C_p\sigma^2\sum_{k=1}^{\infty}\frac{1}{k^2} \eqqcolon S_{2}
< \infty.\label{proof:ml-vrld-termII}
\end{align}
Finally, we analyze (III)
\begin{align*}
9L^2d \sum_{k=1}^n\gamma_k^2\left(1 + \frac{4(1-p_k)\beta_k^2}{(1-(1-p_k)\beta_k)^2}\right) \leq 72L^2d \sum_{k=1}^n\frac{\gamma_k^2}{p_k^2} 
= \frac{72C_\gamma^2L^2d}{C_p^2}\sum_{k=1}^{n}\frac{1}{k^2}.
\end{align*}
Taking the limit as $n \rightarrow \infty$ yields
\begin{align}
    \lim_{n\rightarrow\infty}{(\textup{III})} \leq \frac{36C_\gamma^2L^2d}{C_p^2}\sum_{n=1}^{\infty}\frac{1}{n^2} \eqqcolon S_{3}
< \infty.\label{proof:ml-vrld-termIII}
\end{align}
Now that we have shown the terms (I), (II) and (III) are uniformly bounded, we bound (\ref{eq:ml-vrld-id-kl-bound}) as
\begin{align}
\mathrm{KL}(\mu_{\tau_n} \| \pi) - \mathrm{KL}(\mu_{0} \| \pi)
&\le
- \frac{1}{2}\int_{0}^{\tau_n}\mathrm{FI}(\mu_t \| \pi)\,dt
+ c_1e_0^2 +S_{2} +S_{3}. \label{proof:weak-confergence-first-kl-bound}
\end{align} 
Since KL divergence is nonnegative, we may discard $\mathrm{KL}(\mu_{\tau_n}\|\pi)$ and rearrange the terms to get
\begin{align*}
\frac{1}{2}\int_{0}^{\tau_n}\mathrm{FI}(\mu_t \| \pi)\,dt
&\le
\mathrm{KL}(\mu_0 \| \pi)
+ c_1e_0^2 +S_{2} +S_{3}.
\end{align*} 
Dividing both sides by $\frac{\tau_n}{2}$, we obtain
\begin{align*}
\frac{1}{\tau_n}\int_0^{\tau_n}\mathrm{FI}(\mu_t \| \pi)\,dt
&\le
\frac{2\mathrm{KL}(\mu_0 \| \pi)}{\tau_n}
+\frac{2c_1e_0^2}{\tau_n} +\frac{2S_{2}}{\tau_n} +\frac{2S_{3}}{\tau_n}.
\end{align*} 
Since $\mathrm{FI}(\cdot\|\pi)$ is convex, we use Jensen's inequality to obtain
\begin{align}\label{eq:ml-vrld-id-avg-law-fi}
\mathrm{FI}(\bar \mu_{\tau_n} \| \pi) \leq
\frac{1}{\tau_n}\int_{0}^{\tau_n}\mathrm{FI}(\mu_t \| \pi)\,dt
&\le
\frac{2\mathrm{KL}(\mu_0 \| \pi)}{\tau_n}
+\frac{2c_1e_0^2}{\tau_n} +\frac{2S_{2}}{\tau_n} +\frac{2S_{3}}{\tau_n},
\end{align} 
where $\mu_{\tau_n} \coloneqq \frac{1}{\tau_n}\int_0^{\tau_n}\mu_t\,dt$. Given that $\mathrm{KL}(\mu_0\|\pi)< \infty$, we have $\mathrm{FI}(\bar{\mu}_{\tau_n}\|\pi)<\infty$. On the other hand, if $t\in[\tau_n, \tau_{n+1}]$, considering~(\ref{thm2:diff-kl-ineq}) for the interval $[\tau_n, \tau_{n+1}]$, integrating both sides from $\tau_n$ to $t$, and dropping the negative integral over the Fisher information gives us
\begin{align*}
\mathrm{KL}(\mu_t \| \pi)
&\leq\,
\mathrm{KL}(\mu_{\tau_n} \| \pi)
-
\frac{(t-\tau_n)}{1-(1-p_{n+1})\beta_{n+1}}\left[4 + \frac{27}{2}\gamma_{n+1}^2L^2\phi(p_{n+1}, \beta_{n+1})\right](e_{n+1}^2 - e_n^2) \\
&\quad + \frac{9\,\sigma^2(t-\tau_n)}{2\bigl(1-(1-p_{n+1})\beta_{n+1}\bigr)}
\left[\frac{p_{n+1}}{b_{n+1}} +(1-p_{n+1})(1-\beta_{n+1})^2\right]
+ 9(t-\tau_n)\gamma_{n+1}L^2\phi(p_{n+1},\beta_{n+1})\,d \\
&\leq
\mathrm{KL}(\mu_{0} \| \pi) + \frac{9\,\sigma^2(t-\tau_n)}{2\bigl(1-(1-p_{n+1})\beta_{n+1}\bigr)}
\left[\frac{p_{n+1}}{b_{n+1}} +(1-p_{n+1})(1-\beta_{n+1})^2\right]
\\
& \quad + 9(t-\tau_{n+1})\gamma_{n+1} L^2\phi(p_{n+1},\beta_{n+1})d + c_1e_0^2 + S_{2} + S_{3} \\
&\leq
\mathrm{KL}(\mu_{0} \| \pi)  + c_1e_0^2 + 2S_{2} + 2S_{3} < \infty,
\end{align*}where the second inequality follows from~(\ref{eq:ml-vrld-id-kl-bound}), the bounds on $\mathrm{(II)}$ and $\mathrm{(III)}$, and the fact that 
\begin{equation}
    c_{n+1} \ge \frac{(t-\tau_n)}{1-(1-p_{n+1})\beta_{n+1}}\left[4 + \frac{27}{2}\gamma_{n+1}^2L^2\phi(p_{n+1},\beta_{n+1})\right]
\end{equation}for $t\in[\tau_n,\tau_{n+1}]$. To obtain the last inequality, we again use the bounds on $\mathrm{(II)}$ and $\mathrm{(III)}$. 

This proves $\mathrm{KL}(\mu_t \| \pi)$ is uniformly bounded for all $t \ge 0$. By the convexity of the $\mathrm{KL}$ divergence, it implies that $\{\mathrm{KL(\bar\mu_{\tau_n} \| \pi)}\,|\,n\in\mathbb{N}\}$ is uniformly bounded as well. Since the sub-level sets of $\mathrm{KL}(\cdot \|\pi)$ are weakly compact, $(\bar \mu_{\tau_n})_{n\in\mathbb{N}}$ is tight. To show that $\bar \mu_{\tau_n} \rightarrow \pi$ weakly, it is sufficient to show that every cluster point of $(\bar \mu_{\tau_n})_{n\in\mathbb{N}}$ is equal to $\pi$. 

Consider a subsequence of $(\bar{\mu}_{\tau_n})_{n\in\sN}$ converging to some limit $\bar{\mu}$. As $n\rightarrow\infty$, $\tau_n \rightarrow \infty$, and by inequality~(\ref{eq:ml-vrld-id-avg-law-fi}) we also have $\mathrm{FI}(\bar \mu_{\tau_n} \| \pi) \rightarrow0$. Therefore, this is still true along the subsequence. By the weak lower semi-continuity of the Fisher information along the subsequence, $\mathrm{FI}(\bar \mu_{\tau_n} \| \pi)=0$. That implies; for $\psi\coloneqq \frac{d\mu}{d\pi}$, we have $\sqrt{\psi} \in \mathrm{dom(\mathcal{E})}$ and $\mathcal{E}(\sqrt{\psi}) = 0$, where $\mathcal{E}$ denotes the Dirichlet energy (i.e. the squared $L^2(\pi)$-norm of the gradient; see Section 3 in~\citet{balasubramanian2022towards}). Since $\nabla f$ is Lipschitz by Assumption \ref{ass:potential-lipschitz}, then $\pi$ has a continuous and strictly positive density on $\mathbb{R}^d$, so $\mathcal{E}(\sqrt{\psi}) = 0$ implies that $\psi$ is a constant $\pi$ with probability 1. Hence, $\bar \mu = \pi$ as $n\rightarrow\infty$.
\qedbox 

\subsubsection{Proof of Theorem~\ref{thm:sl-vrld-ficonv} (SL-VRLD: FI convergence)}\label{proof:sl-vrld-ficonv}
\paragraph{Theorem~\ref{thm:sl-vrld-ficonv} (SL-VRLD: FI convergence)}\textit{
Let $\pi\propto e^{-f}$ be the target distribution, where the potential function $f$ satisfies Assumptions~\ref{ass:sfo} and~\ref{ass:stochastic-potential-lipschitz}. Let $(\mu_t)_{t\geq 0}$ denote the law of continuous-time interpolation generated by~(\ref{def:ml-vrld:interpolation}) with estimator~(\ref{def:sl-vrld:estimator}). Then, for any step size $\gamma \in \left(0, \frac{1}{L\sqrt{27\phi(p,\beta)}}\right]$, where $\phi(p,\beta)= 1+\frac{2(1-p)}{1-(1-p)\beta}$, and for any $N\geq 1$, it holds that
\begin{equation}
    \frac{1}{N\gamma}\int_{0}^{N\gamma} \mathrm{FI}(\mu_t || \pi)\, dt\leq \frac{2C_0}{N\gamma} + 9\sigma^2\left[\frac{\frac{p}{b} + 2(1-p)(1-\beta)^2}{2(1-(1-p)\beta)}\right] + \frac{30\gamma L^2 d}{1-(1-p)\beta},
\end{equation}where 
\begin{equation}
    C_0 = \mathrm{KL}(\mu_0\|\pi) + \frac{9\gamma}{4(1-(1-p)\beta)}\E[\|\nabla f(\vx_0) - \vg_0\|^2].
\end{equation}Furthermore, choose
\[
\gamma=\left(\frac{d}{\sigma_m^2 N^2 L^2}\right)^{1/3},\qquad 
p=1-\beta=\left(\frac{d^2L^2}{\sigma_m^4 N}\right)^{1/3},\qquad 
b=\Bigl\lceil \tfrac{1}{p}\Bigr\rceil,
\quad \text{where } \sigma_m = \max\{\sigma,1\}.
\]
Then, the time-averaged law $\bar{\mu}_{N\gamma}=\frac{1}{N\gamma}\int_{0}^{N\gamma}\mu_t\,dt$ satisfies $\mathrm{FI}(\bar{\mu}_{N\gamma}\|\pi)\le \varepsilon$ with~$\BigO\!\left(\frac{\sigma_m^{2}L^{2}d^{2}}{\varepsilon^{3}}\right)$ iterations and $\BigO(1)$ gradient computations per iteration.}

\textit{Proof.} As in the proof of Theorem \ref{thm:ml-vrld-ficonv}, we start by invoking Lemma \ref{lemma:sampling_first}
\begin{align}
\nonumber
\frac{d}{dt}\mathrm{KL}(\mu_t \| \pi)
&\leq
-\frac{3}{4}\mathrm{FI}(\mu_t \| \pi)
+
\E \left[\|\nabla f(\vx_t) - \vg_k\|^2\right] \\ \nonumber
&\leq
-\frac{3}{4}\mathrm{FI}(\mu_t \| \pi) +
2\E \left[\|\nabla f(\vx_t) - \nabla f(\vx_{k\gamma})\|^2\right] +
2\E \left[\|\nabla f(\vx_{k\gamma}) - \vg_k\|^2\right] \\ \label{eq:sl-main-ineq}
&\leq
-\frac{3}{4}\mathrm{FI}(\mu_t \| \pi) +
2L^2 \E\left[\|\vx_t - \vx_{k\gamma}\|^2\right] +
2e_k^2.
\end{align}
where $e_k^2 \coloneq \E\!\left[\|\nabla f(\vx_{k\gamma}) - \vg_k\|^2\right]$. We decompose the expectation by adding and subtracting $\nabla f(\vx_{k\gamma})$, apply Young's inequality, and use Assumption \ref{ass:potential-lipschitz} to obtain~(\ref{eq:sl-main-ineq}). The next step is to bound $e_k^2$. Using the definition of the SL-VRLD estimator (\ref{def:sl-vrld:estimator}),
$e_{k+1}^2$ can be expressed as follows.
\begin{align}
    e_{k+1}^2 
    &=
    p\,\E\!\left[
    \left\|
    \nabla f(\vx_{(k+1)\gamma})
    -\frac{1}{b}\sum_{i=1}^b \nabla f(\vx_{(k+1)\gamma}, \xi_{k+1}^i)
    \right\|^2
    \right] \notag\\
    &\quad
    + (1-p)\,\E\!\left[
    \left\|
    \nabla f(\vx_{(k+1)\gamma})
    -\nabla f( \vx_{(k+1)\gamma}, \xi_{k+1} ) -
            \beta 
                \left(
                    \vg_k - \nabla f( \vx_{k\gamma},\xi_{k+1} )
                \right)
    \right\|^2
    \right]\notag \\
    & \le \frac{p\sigma^2}{b} + (1-p)\,\E\!\left[
    \left\|
    \nabla f(\vx_{(k+1)\gamma})
    -\nabla f( \vx_{(k+1)\gamma}, \xi_{k+1} ) -
            \beta 
                \left(
                    \vg_k - \nabla f( \vx_{k\gamma},\xi_{k+1} )
                \right)
    \right\|^2\right] \label{thm3:sl-vrld-err-step1}\\
    & = \frac{p\sigma^2}{b} + (1-p)\,\E\!\biggl[
    \bigl\|
    \nabla f(\vx_{(k+1)\gamma})
    -\nabla f( \vx_{(k+1)\gamma}, \xi_{k+1} ) -
            \beta 
                \left(
                    \vg_k - \nabla f( \vx_{k\gamma})
                \right) - \beta\bigl(\nabla f(\vx_{k\gamma})\label{thm3:sl-vrld-err-step2}\\
    &\quad\quad\quad\quad\quad\quad\quad\quad\quad\quad\quad\quad\quad\quad\quad\quad\quad\quad\quad\quad\quad\quad\quad\quad\quad\quad\quad\quad\quad\quad\quad\quad\quad\quad\quad\quad - \nabla f(\vx_{k\gamma}, \xi_{k+1})\bigr)\bigr\|^2\biggr] \notag\\
    &= \frac{p\sigma^2}{b} + (1-p)\,\E\!\biggl[
    \bigl\|
    \underbrace{\nabla f(\vx_{(k+1)\gamma}) - \nabla f(\vx_{k\gamma})
    -\left(\nabla f( \vx_{(k+1)\gamma}, \xi_{k+1} ) -\nabla f(\vx_{k\gamma}, \xi_{k+1})\right)}_{\eqqcolon\mA_k} -
            \beta 
                \left(
                    \vg_k - \nabla f( \vx_{k\gamma})
                \right)\notag \\
    & \quad\quad\quad\quad\quad\quad\quad\quad\quad\quad\quad\quad\quad\quad\quad\quad\quad\quad\quad\quad\quad\quad\quad\quad + (1-\beta)\underbrace{\left(\nabla f(\vx_{k\gamma}) - \nabla f(\vx_{k\gamma}, \xi_{k+1})\right)}_{\eqqcolon\mB_k}\bigr\|^2\biggr] \label{thm3:sl-vrld-err-step3}\\
    & = \frac{p\sigma^2}{b} + (1-p)\,\E\left[\|\mA_k + (1-\beta)\mB_k - \beta(\vg_k - \nabla f(\vx_{k\gamma}))\|^2\right]\notag \\
    & = \frac{p\sigma^2}{b} +(1-p)\beta^2 e_k^2 + (1-p)\,\E\left[\|\mA_k + (1-\beta)\mB_k\|^2\right]\label{thm3:sl-vrld-err-step4}\\
    & \le \frac{p\sigma^2}{b} +(1-p)\beta^2 e_k^2 + 2(1-p)\,\E\left[\|\mA_k\|^2\right] + 2(1-p)(1-\beta)^2\,\E\left[\|\mB_k\|^2\right] \label{thm3:sl-vrld-err-step5}\\
    & \le \frac{p\sigma^2}{b} +(1-p)\beta e_k^2 + 2(1-p)L^2\Delta_k + 2(1-p)(1-\beta)^2\sigma^2, \label{thm3:sl-vrld-err-step6}
\end{align}where $\Delta_k\coloneqq \E[\|\vx_{(k+1)\gamma} - \vx_{k\gamma}\|^2]$. We first use Assumption~\ref{ass:sfo} to bound the first term to get~(\ref{thm3:sl-vrld-err-step1}). Next, we add and subtract $\beta \nabla f(\vx_{k\gamma})$ inside the expected squared norm to obtain~(\ref{thm3:sl-vrld-err-step2}). We then apply the decomposition $-\beta a = (1-\beta)a - a$ for any $a\in\sR$ and $\beta\in[0,1]$ inside the expected squared norm to derive~(\ref{thm3:sl-vrld-err-step3}). Defining $\mA_k$ and $\mB_k$, and using the fact that $\E[\mA_k - (1-\beta)\mB_k \mid \gF_k] = 0$, we obtain~(\ref{thm3:sl-vrld-err-step4}). Here, $\gF_k$ denotes the filtration defined as
\[
    \gF_k \coloneqq \sigma(\vx_0, \mZ_0, \xi_0, p_0, \ldots, \mZ_k, \xi_k, p_k),
\]
where $\xi_k$ denotes all $\xi_k^i$ with $i\in\{1,\ldots,b\}$ if the batch is calculated, and $\mZ_k \coloneqq \mB_{k+1} - \mB_k$. Following this, we use Young's inequality $\left((a+b)^2 \leq 2a^2 + 2b^2\right)$ to obtain~(\ref{thm3:sl-vrld-err-step5}). Lastly, we use the fact that $\beta \leq 1$ together with the Assumptions~\ref{ass:sfo} and~\ref{ass:stochastic-potential-lipschitz}, and the second-moment variance bound, to get~(\ref{thm3:sl-vrld-err-step6}).
Rearranging the terms yields
\begin{align}
    \nonumber
    e_k^2 &\leq
    \frac{p\sigma^2}{b(1-(1-p)\beta)} -
    \frac{1}{1-(1-p)\beta}(e_{k+1}^2-e_k^2) +
    \frac{2(1-p)L^2\Delta_k}{1-(1-p)\beta} +
    \frac{2(1-p)(1-\beta)^2\sigma^2}{1-(1-p)\beta}\\
    \label{eq:sl-vrld-ek-upper-bound}
    &=-\frac{1}{1-(1-p)\beta}(e_{k+1}^2-e_k^2) + 
    \frac{2(1-p)L^2\Delta_k}{1-(1-p)\beta} +
    \left[\frac{p}{b}+2(1-p)(1-\beta)^2\right]\frac{\sigma^2}{1-(1-p)\beta}.
\end{align}
We next bound $\Delta_k$ by exploiting the interpolation in~(\ref{def:ml-vrld:interpolation}).
For any $t \in [k\gamma,(k+1)\gamma]$, the interpolation satisfies
\begin{align*}
    \vx_t - \vx_{k\gamma}
    &= -(t-k\gamma)\vg_k + \sqrt{2}\bigl(\mB_t - \mB_{k\gamma}\bigr).
\end{align*}
Taking squared norms and expectations, and using the independence of
$\vg_k$ and the Brownian increment, we obtain
\begin{align*}
    \E\!\left[\|\vx_t - \vx_{k\gamma}\|^2\right]
    &= (t-k\gamma)^2 \E\!\left[\|\vg_k\|^2\right]
    + 2\,\E\!\left[\|\mB_t - \mB_{k\gamma}\|^2\right] \\
    &= (t-k\gamma)^2 \E\!\left[\|\vg_k\|^2\right]
    + 2(t-k\gamma)d,
\end{align*}
where we use the fact that $\E\!\left[\|\mB_t-\mB_{k\gamma}\|^2\right]=(t-k\gamma)d$.
Since $t\in[k\gamma,(k+1)\gamma]$, it follows that
\begin{align*}
    \E\!\left[\|\vx_t - \vx_{k\gamma}\|^2\right]
    &\le \gamma^2 \E\!\left[\|\vg_k\|^2\right] + 2\gamma d
    = \Delta_k.
\end{align*}
Then writing
\(
\vg_k
=
\left[\vg_k-\nabla f(\vx_{k\gamma})\right]
+\left[\nabla f(\vx_{k\gamma})-\nabla f(\vx_t)\right]
+ \nabla f(\vx_t),
\) and using Assumption~\ref{ass:sfo} with Young's inequality, we obtain
\begin{align*}
\Delta_k
&= \gamma^2\E\!\left[\|\vg_k\|^2\right] + 2\gamma d \\
&\le
3\gamma^2\E\!\left[\|\vg_k-\nabla f(\vx_{k\gamma})\|^2\right]
+ 3\gamma^2\E\!\left[\|\nabla f(\vx_{k\gamma})-\nabla f(\vx_t)\|^2\right]
+ 3\gamma^2\E\!\left[\|\nabla f(\vx_t)\|^2\right]
+ 2\gamma d .\\
&\leq3\gamma^2 e_k^2
+ 3\gamma^2 L^2 \Delta_k
+ 3\gamma^2\E\!\left[\|\nabla f(\vx_t)\|^2\right]
+ 2\gamma d .
\end{align*}
We now substitute the upper bound~(\ref{eq:sl-vrld-ek-upper-bound}) on $e_k^2$ to obtain
\begin{align*}
\Delta_k
&\le
-\frac{3\gamma^2}{1-(1-p)\beta}\,
\bigl(e_{k+1}^2 - e_k^2\bigr)
+ 3\gamma^2 L^2
\left(
1 + \frac{2(1-p)}{1-(1-p)\beta}
\right)\Delta_k
\\
&\quad
+ \frac{3\gamma^2\sigma^2}{1-(1-p)\beta}
\left[
\frac{p}{b} + 2(1-p)(1-\beta)^2
\right]
+ 3\gamma^2\,\E\!\left[\|\nabla f(\vx_t)\|^2\right]
+ 2\gamma d .
\end{align*}
Defining $\phi(p, \beta) \coloneqq 1 + \frac{2(1-p)}{1-(1-p)\beta}$ and rearranging the terms, we obtain
\begin{align}
    \left(1-3\gamma^2L^2\phi(p,\beta)\right)\Delta_k \leq & -\frac{3\gamma^2}{1-(1-p)\beta}(e_{k+1}^2 - e_k^2) + \frac{3\gamma^2\sigma^2}{1-(1-p)\beta}\left[\frac{p}{b} + 2(1-p)(1-\beta)^2\right] \notag\\
    & + 3\gamma^2 \E[\|\nabla f(\vx_t)\|^2] + 2\gamma d.  \notag
\end{align}Setting $\gamma$ such that $\gamma \leq \frac{1}{L\sqrt{27\phi(p, \beta)}}$ and dividing both sides by $\frac{8}{9}$, we have
\begin{align} \label{eq:sl-delta-k-bound}
\Delta_k
&\le
-\frac{27\gamma^2}{8\bigl(1-(1-p)\beta\bigr)}
\bigl(e_{k+1}^2 - e_k^2\bigr)
+ \frac{27\gamma^2\sigma^2}{8\bigl(1-(1-p)\beta\bigr)}
\left[
\frac{p}{b} + 2(1-p)(1-\beta)^2
\right]
+ \frac{27\gamma^2}{8}\,
\E\!\left[\|\nabla f(\vx_t)\|^2\right]
+ \frac{9}{4}\,\gamma d .
\end{align} Given the upper bounds (\ref{eq:sl-vrld-ek-upper-bound} , \ref{eq:sl-delta-k-bound}) on $e_k^2$ and $\Delta_k$, respectively, we return back to the main inequality (\ref{eq:sl-main-ineq}). Recall that
\begin{align*}
\frac{d}{dt}\mathrm{KL}(\mu_t \| \pi)
&\leq
-\frac{3}{4}\mathrm{FI}(\mu_t \| \pi) +
2L^2\Delta_k +
2e_k^2.
\end{align*} Plugging in the bounds (\ref{eq:sl-vrld-ek-upper-bound} , \ref{eq:sl-delta-k-bound}) for $e_k^2$ and $\Delta_k$ yields
\begin{align*}
\frac{d}{dt}\mathrm{KL}(\mu_t \| \pi)
&\le
-\frac{3}{4}\,\mathrm{FI}(\mu_t \| \pi) 
-\frac{1}{1-(1-p)\beta}
\left(
2 + \frac{27}{4}\gamma^2 L^2 \phi(p,\beta)
\right)
\bigl(e_{k+1}^2 - e_k^2\bigr) \\
&\quad
+\frac{\sigma^2}{1-(1-p)\beta}
\left(
2 + \frac{27}{4}\gamma^2 L^2 \phi(p,\beta)
\right)
\left[
\frac{p}{b} + 2(1-p)(1-\beta)^2
\right] \\
&\quad
+ \frac{27}{4}\gamma^2 L^2 \phi(p,\beta)\,
\E\!\left[\|\nabla f(\vx_t)\|^2\right]
+ \frac{9}{2}\gamma L^2 \phi(p,\beta)d.
\end{align*}
Applying Lemma~\ref{lemma:sampling_final} with the step-size condition $\gamma\leq \frac{1}{L\sqrt{27\phi(p,\beta)}}$, we obtain
\begin{align*}
\nonumber
\frac{d}{dt}\mathrm{KL}(\mu_t \| \pi)
&\le
-\frac{1}{2}\,\mathrm{FI}(\mu_t \| \pi)
-\frac{1}{1-(1-p)\beta}
\left(
2 + \frac{27}{4}\gamma^2 L^2 \phi(p,\beta)
\right)
\bigl(e_{k+1}^2 - e_k^2\bigr) \\[0.4em]
&\quad
+ \frac{9\sigma^2}{4\bigl(1-(1-p)\beta\bigr)}
\left[
\frac{p}{b} + 2(1-p)(1-\beta)^2
\right]
+ \left[\frac{27}{2}\gamma L + \frac{9}{2}\right]\gamma L^2 d\,\phi(p,\beta).
\end{align*}To simplify the last term, we note that since $\phi(p,\beta)\ge 1$, $\gamma L\le \frac{1}{\sqrt{27}}$, which implies 
\begin{equation*}
    \frac{27}{2}\gamma L + \frac{9}{2}\le \frac{15}{2}. 
\end{equation*}Then, we have 
\begin{align}\label{eq:sl-vrld-kl-bound-for-id}
\nonumber
\frac{d}{dt}\mathrm{KL}(\mu_t \| \pi)
&\le
-\frac{1}{2}\,\mathrm{FI}(\mu_t \| \pi)
-\frac{1}{1-(1-p)\beta}
\left(
2 + \frac{27}{4}\gamma^2 L^2 \phi(p,\beta)
\right)
\bigl(e_{k+1}^2 - e_k^2\bigr) \\[0.4em]
&\quad
+ \frac{9\sigma^2}{4\bigl(1-(1-p)\beta\bigr)}
\left[
\frac{p}{b} + 2(1-p)(1-\beta)^2
\right]
+ \frac{15}{2}\gamma L^2 d\,\phi(p,\beta).
\end{align}

We now integrate both sides over $t \in [k\gamma,(k+1)\gamma]$
and define
\(
\mathcal{L}_k \coloneqq \mathrm{KL}(\mu_{k\gamma}\|\pi)
+ 
\frac{\gamma}{1-(1-p)\beta}\left[2 + \frac{27}{4}\gamma^2L^2\phi(p, \beta)\right]e_k^2 .
\)
This yields,
\begin{align*}
\mathcal{L}_{k+1} - \mathcal{L}_k
&\le
-\frac{1}{2}
\int_{k\gamma}^{(k+1)\gamma} \mathrm{FI}(\mu_t \| \pi)\,dt
+ 
\frac{9\,\gamma\sigma^2}{4\bigl(1-(1-p)\beta\bigr)}
\left[\frac{p}{b} + 2(1-p)(1-\beta)^2\right]
+ \frac{15}{2}\gamma^2 L^2 \phi(p,\beta)\, d .
\end{align*}
Iterating the inequality for $k=0,1,\dots,N-1$ we obtain
\begin{align*}
\mathcal{L}_{N} - \mathcal{L}_0
&\le
-\frac{1}{2}
\int_{0}^{N\gamma} \mathrm{FI}(\mu_t \| \pi)\,dt
+ 
\frac{9N\,\gamma\sigma^2}{4\bigl(1-(1-p)\beta\bigr)}
\left[\frac{p}{b} + 2(1-p)(1-\beta)^2\right]
+ \frac{15}{2}N\gamma^2 L^2 \phi(p,\beta)\, d .
\end{align*} Rearranging the terms and multiplying both sides by $\frac{2}{N\gamma}$, we get
\begin{align*}
\frac{1}{N\gamma}\int_{0}^{N\gamma}\mathrm{FI}(\mu_t \| \pi)\,dt
&\le
\frac{2\mathcal{L}_0}{N\gamma}
+ 9\sigma^2\,
\left[\frac{\frac{p}{b} + 2(1-p)(1-\beta)^2}{2(1-(1-p)\beta)}\right]
+ 15\gamma L^2 d
\left[1 + \frac{2(1-p)}{1-(1-p)\beta}\right].
\end{align*}
Note that $\mathcal{L}_N>0$ and $\mathcal{L}_0 = 
\mathrm{KL}(\mu_0 \| \pi) + 
\frac{\gamma}{1-(1-p)\beta}\left(2 + \frac{27}{4}\gamma^2L^2\phi(p, \beta)\right)e_0^2
\leq 
\mathrm{KL}(\mu_0 \| \pi) + \frac{9\gamma e_0^2}{4(1-(1-p)\beta)}\eqqcolon C_0$, then
\begin{align}\label{eq:integral-bound-sl}
\frac{1}{N\gamma}\int_{0}^{N\gamma}\mathrm{FI}(\mu_t \| \pi)\,dt
&\le
\frac{2C_0}{N\gamma}
+ 9\sigma^2\,
\left[\frac{\frac{p}{b} + 2(1-p)(1-\beta)^2}{2(1-(1-p)\beta)}\right]
+ \frac{30\gamma L^2d}{1-(1-p)\beta}
\end{align} where we additionally used the fact that $1 + \frac{2(1-p)}{1-(1-p)\beta} = \frac{1+(1-p)\beta}{1-(1-p)\beta} \leq \frac{2}{1-(1-p)\beta}$ to rewrite the last term. This completes the first part of the proof. Furthermore, choose 
\[
    \gamma=\left(\frac{d}{\sigma_m^2 N^2 L^2}\right)^{1/3},\qquad 
    p=1-\beta=\left(\frac{d^2L^2}{\sigma_m^4 N}\right)^{1/3},\qquad 
    b=\Bigl\lceil \tfrac{1}{p}\Bigr\rceil,
    \quad \text{where } \sigma_m \coloneqq \max\{\sigma,1\}.
\]
and define the time-averaged law $\bar{\mu}_{N\gamma} \coloneqq \frac{1}{N\gamma}\int_0^{N\gamma}\mu_t\,dt$. Since $\mathrm{FI}(\cdot\|\pi)$ is convex, using Jensen's inequality, we obtain the upper bound for $\mathrm{FI}(\bar{\mu}_{N\gamma}\|\pi)$
\begin{align*}
\mathrm{FI}(\bar{\mu}_{N\gamma}\|\pi)
\leq
\frac{1}{N\gamma}\int_0^{N\gamma}\mathrm{FI}(\mu_t\|\pi)\,dt &\leq \underbrace{\frac{2C_0}{N\gamma}}_{\mathrm{(I)}}
+ \underbrace{9\sigma^2\,
\left[\frac{\frac{p}{b} + 2(1-p)(1-\beta)^2}{2(1-(1-p)\beta)}\right]}_{\mathrm{(II)}}
+ \underbrace{\frac{30\gamma L^2d}{1-(1-p)\beta}}_{\mathrm{(III)}}.
\end{align*}

Substituting the definition of $\gamma$ into $\mathrm{(I)}$, we have
\begin{equation}\label{thm3:term-i-bound}
    \mathrm{(I)} = \frac{2C_0\sigma_m^{2/3}L^{2/3}}{N^{1/3}d^{1/3}}.
\end{equation}
To bound $\mathrm{(II)}$, we 
\begin{align}
    \mathrm{(II)} & \le 9\sigma_m^2\left[\frac{p^2 + (1-p)p^2}{1-(1-p)^2}\right] \notag \\ 
    & \le 18\sigma_m^2p, \label{thm3:term-ii-bound-uniq}
\end{align}where we use $\sigma \le \sigma_m$, $b\ge 1/p$, and $1-\beta = p \le 1$ to get~(\ref{thm3:term-ii-bound-uniq}). Substituting the definition of $p$, we get
\begin{equation}\label{thm3:term-ii-bound}
    \mathrm{(II)} \le \frac{18\sigma_m^{2/3}L^{2/3}d^{2/3}}{N^{1/3}}. 
\end{equation}

Similarly, to bound $\mathrm{(III)}$, 
\begin{align}
    \mathrm{(III)} & = \frac{30\gamma L^2 d}{1-(1-p)\beta}\notag \\
    &\le \frac{30\gamma L^2 d}{p} \notag\\
    & = \frac{30\sigma_m^{2/3}L^{2/3}d^{2/3}}{N^{1/3}} \label{thm3:term-iii-bound}.
\end{align}Combining the bounds for $\mathrm{(I)}$, $\mathrm{(II)}$, and $\mathrm{(III)}$, we have 
\begin{equation*}
    \mathrm{FI}(\bar{\mu}_{N\gamma}\|\pi) \le \left(\frac{2C_0}{d} + 48\right)\frac{\sigma_m^{2/3}L^{2/3}d^{2/3}}{N^{1/3}},
\end{equation*}where $C_0$ is usually taken to be of order $d$, see e.g. Lemma 1 in~\citet{vempala2019rapid} or Appendix A in~\citet{chewi2025analysis}. Thus, the above bound implies that, to ensure
$\mathrm{FI}(\bar{\mu}_{N\gamma}\|\pi) \le \varepsilon$,
\begin{equation*}
    N= \BigO\left(\frac{\sigma_m^2L^2d^2}{\varepsilon^3}\right)
\end{equation*}iterations is sufficient. And the average number of gradient computations per iteration is 
\begin{equation*}
    pb + (1-p) = \BigO(1). 
\end{equation*}We next verify that the constraints on $N$ imposed throughout the analysis are consistent with the stated iteration complexity.

First, the requirement $p = 1-\beta = \frac{d^{2/3}L^{2/3}}{\sigma_m^{4/3}N^{1/3}}\le 1$ imposes the lower bound $N\ge \frac{d^2L^2}{\sigma_m^4}$, which is satisfied by the above choice of $N$. 

Second, we show that the step-size condition $\gamma \le \frac{1}{L\sqrt{27\phi(p,\beta)}}$ is satisfied. We note that
\begin{align*}
    \phi(p,\beta) & = 1 + \frac{2(1-p)}{1-(1-p)\beta} \\
    & = \frac{1 + (1-p)\beta}{1-(1-p)\beta}\\
    & \le \frac{2}{1-(1-p)\beta}. 
\end{align*}Then, $\gamma$ can be chosen as 
\begin{align*}
    \gamma \le \frac{\sqrt{1-(1-p)\beta}}{L\sqrt{54}} \le \frac{1}{L\sqrt{27\phi(p,\beta)}}. 
\end{align*}Substituting the definitions of $\beta = 1-p$ and $p=\frac{d^{2/3}L^{2/3}}{\sigma_m^{4/3}N^{1/3}}$, we get 
\begin{align*}
    \frac{\sqrt{1-(1-p)\beta}}{L\sqrt{54}} &=\frac{\sqrt{1-(1-p)^2}}{L\sqrt{54}} \\
    & \ge \frac{\sqrt{p}}{L\sqrt{54}}\\
    & = \frac{d^{1/3}}{\sigma_m^{2/3}L^{2/3}N^{1/6}\sqrt{54}}.
\end{align*}Thus, the step-size condition holds provided that
\begin{equation*}
    \gamma \le \frac{d^{1/3}}{\sigma_m^{2/3}L^{2/3}N^{1/6}\sqrt{54}}.
\end{equation*}Using the definition of step size
\begin{equation*}
    \gamma = \frac{d^{1/3}}{\sigma_m^{2/3}L^{2/3}N^{2/3}},
\end{equation*}this condition reduces to the following lower bound $N\ge 54$, which is satisfied by $N=\BigO\left(\frac{\sigma_m^2L^2d^2}{\varepsilon^3}\right)$. \qedbox

\subsubsection{Proof of Corollary~\ref{proofs:theorem-and-lemmas}.1 (SL-VRLD: squared TV distance convergence)}\label{proof:sl-vrld-tvconv}

\paragraph{Corollary~\ref{proofs:theorem-and-lemmas}.1 (SL-VRLD: squared TV distance convergence)}\label{corr:sl-vrld-tvconv} \textit{
Let $\pi\propto e^{-f}$ be the target distribution, where the potential function $f$ satisfies Assumptions~\ref{ass:sfo},~\ref{ass:poincare-inequality}, and~\ref{ass:stochastic-potential-lipschitz}. Let $(\mu_t)_{t\geq 0}$ denote the law of continuous-time interpolation generated by~(\ref{def:ml-vrld:interpolation}) with estimator~(\ref{def:sl-vrld:estimator}). Then, for any step size $\gamma \in \left(0, \frac{1}{L\sqrt{27\phi(p,\beta)}}\right]$, where $\phi(p,\beta)= 1+\frac{2(1-p)}{1-(1-p)\beta}$, and for any $N\geq 1$, it holds that
\begin{equation}
    \|\Bar{\mu}_{N\gamma} - \pi\|_{\mathrm{TV}}^2\leq \frac{8C_0C_{\mathrm{PI}}}{N\gamma} + 18C_{\mathrm{PI}}\sigma^2\left[\frac{\frac{p}{b} + 2(1-p)(1-\beta)^2}{1-(1-p)\beta}\right] + \frac{80C_{\mathrm{PI}}\gamma L^2 d}{1-(1-p)\beta},
\end{equation}where 
\begin{equation}
    C_0 = \mathrm{KL}(\mu_0\|\pi) + \frac{9\gamma}{4(1-(1-p)\beta)}\E[\|\nabla f(\vx_0) - \vg_0\|^2].
\end{equation}Furthermore, choose
\[
\gamma=\left(\frac{d}{\sigma_m^2 N^2 L^2}\right)^{1/3},\qquad 
p=1-\beta=\left(\frac{d^2L^2}{\sigma_m^4 N}\right)^{1/3},\qquad 
b=\Bigl\lceil \tfrac{1}{p}\Bigr\rceil,
\quad \text{where } \sigma_m = \max\{\sigma,1\}.
\]
Then, the time-averaged law $\bar{\mu}_{N\gamma}=\frac{1}{N\gamma}\int_{0}^{N\gamma}\mu_t\,dt$ satisfies $\|\Bar{\mu}_{N\gamma} - \pi\|_{\mathrm{TV}}^2\le \varepsilon$ with $\BigO\!\left(\frac{C_\mathrm{PI}^3\sigma_m^{2}L^{2}d^{2}}{\varepsilon^{3}}\right)$ iterations and $\BigO(1)$ gradient computations per iteration.}

\textit{Proof.} Recall from the proof (Appendix~\ref{proof:sl-vrld-ficonv}) that the Fisher information of the time-averaged law $\bar{\mu}_{N\gamma}$ satisfies (\ref{eq:integral-bound-sl}),
\begin{align*}
\frac{1}{N\gamma}\int_{0}^{N\gamma}\mathrm{FI}(\mu_t \| \pi)\,dt
&\le
\frac{2C_0}{N\gamma}
+ 9\sigma^2\,
\left[\frac{\frac{p}{b} + 2(1-p)(1-\beta)^2}{2(1-(1-p)\beta)}\right]
+ \frac{30\gamma L^2d}{1-(1-p)\beta}.
\end{align*}Using Lemma~\ref{lemma:tv_lemma} and Assumption~\ref{ass:poincare-inequality} we get 
\begin{equation*}
        \|\Bar{\mu}_{N\gamma} - \pi\|_{\mathrm{TV}}^2\leq \frac{8C_0C_{\mathrm{PI}}}{N\gamma} + 36C_{\mathrm{PI}}\frac{\left(p^2 + 2(1-p)(1-\beta)^2\right)}{(1-(1-p)\beta)} + \frac{120C_{\mathrm{PI}}\gamma L^2 d}{1-(1-p)\beta}.
\end{equation*}Following the same steps in~(\ref{thm3:term-i-bound},\ref{thm3:term-ii-bound},\ref{thm3:term-iii-bound}), we obtain

\begin{equation*}
    \|\bar{\mu}_{N\gamma} - \pi\|_{\mathrm{TV}}^2 \le \left(\frac{2C_0}{d} + 48\right) \frac{4C_{\mathrm{PI}}\sigma_m^{2/3}L^{2/3}d^{2/3}}{N^{1/3}},
\end{equation*}where $C_0$ is in the order of $d$. This implies it suffices to choose $N$ such that $N^{1/3}\ge \frac{\tilde{C}C_{\mathrm{PI}}\sigma_m^{2/3}L^{2/3}d^{2/3}}{\varepsilon}$, where $\tilde{C}\coloneqq \frac{8C_0}{d} + 192> 0$. Equivalently, this yields iteration complexity of $\BigO\left(\frac{C_{\mathrm{PI}}^3\sigma_m^2L^2d^2}{\varepsilon^3}\right)$ with $\BigO(1)$ gradient computations per iteration. Since the constraints on $N$ in Theorem~\ref{thm:sl-vrld-ficonv} remain unchanged, they continue to hold. 
\qedbox

\subsubsection{Proof of Theorem~\ref{proofs:theorem-and-lemmas}.1 (SL-VRLD: weak convergence)}\label{proof:sl-vrld-weakconv}

\paragraph{Theorem~\ref{proofs:theorem-and-lemmas}.1 (SL-VRLD: weak convergence)}\label{thm:sl-vrld-weakconv}\textit{
Let $\pi\propto e^{-f}$ be the target distribution, where the potential function $f$ satisfies Assumptions~\ref{ass:sfo} and~\ref{ass:stochastic-potential-lipschitz}. Let $(\mu_t)_{t\geq 0}$ denote the law of continuous-time interpolation generated by~(\ref{def:ml-vrld:interpolation}) with estimator~(\ref{def:sl-vrld:estimator}). Define the time-varying parameters as follows
\[
\gamma_k=\frac{C_\gamma}{k^{3/2}},\qquad 
p_k=\frac{C_p}{k^{1/2}},\qquad 
b_k=\left\lceil \frac{1}{p_k}\right\rceil, \qquad \forall k\geq 1,
\]
where \(0<C_p<1\) and \(C_\gamma>0\) are numerical constants. 
Then, the time-averaged law $\bar{\mu}_{\tau_n}= \frac{1}{\tau_n}\int_0^{\tau_n}\mu_t\,dt$, where $\tau_n = \sum_{k=1}^n\gamma_k$, converges weakly to $\pi$.}

\textit{Proof.} Given time-varying parameters \(\gamma_k,b_k,p_k\) at iteration \(k\), define the cumulative time \(\tau_n\) and averaged law \(\bar{\mu}_{\tau_n}\) at iteration \(n\) as
\[
\tau_{n}\coloneqq\sum_{k=1}^{n}\gamma_{k},
\qquad
\bar{\mu}_{\tau_{n}}\coloneqq
\frac{1}{\tau_{n}}\int_{0}^{\tau_{n}}\mu_{t}\,dt,
\]
where \(\mu_t\) denotes the law of the process \(\vx_t\) under the following continuous-time interpolation
\begin{equation}
\vx_{t}\coloneqq 
\vx_{\tau_{n}}
-(t-\tau_{n})\,\vg_n
+\sqrt{2}\,\bigl(\mB_{t}-\mB_{\tau_{n}}\bigr),
\qquad t\in[\tau_{n},\tau_{n+1}].
\end{equation}$\vg_n$ is defined as follows

\begin{equation}
\vg_{n} :=
\begin{cases}
\displaystyle
\frac{1}{b_{n}}\sum_{i=1}^{b_{n}} \nabla f(\vx_{\tau_n}, \xi_{n}^i)
& \text{w.p. } p_{n}, \\[1ex]
\displaystyle
\nabla f(\vx_{\tau_n}, \xi_n)
+ \beta_n (\vg_{n-1} - \nabla f(\vx_{\tau_{n-1}}, \xi_n))
& \text{w.p. } 1-p_{n},
\end{cases}
\end{equation}for all $n \geq 1$. At initialization, we choose $\gamma_0, b_0>0$ and $(\beta_0,p_0)\in(0,1]^2 \setminus\{(1,1)\}$, and define $\vg_0$ as
\begin{equation*}
\vg_0 \coloneqq \frac{1}{b_0}\sum_{i=1}^{b_0}\nabla f(\vx_0, \xi_0^i).
\end{equation*}

We establish weak convergence by adapting the proof of Theorem~\ref{thm:sl-vrld-ficonv}. To this end, we first verify the step sizes $(\gamma_k)_{k\ge 1}$ satisfy the step-size condition of Theorem~\ref{thm:sl-vrld-ficonv}, namely
\begin{equation}\label{thm3:gamma-bound-verification}
    \gamma_k\in\left(0, \frac{1}{L\sqrt{\phi(p_k,\beta_k)}}\right], \quad \text{where}\quad \phi(p_k,\beta_k) = 1 + \frac{2(1-p)}{1-(1-p)\beta}
\end{equation}for all $k\ge 1$. Using the definitions of $p_k$ and $\beta_k$, we have
\begin{align*}
\phi(p_k,\beta_k) & = 1 + \frac{2(1-p_k)}{1-(1-p_k)\beta_k} \\ 
& \le 1 + \frac{2}{1-(1-p_k)^2} \\
& \le 1 + \frac{2}{p_k}\\
& \le 1 + \frac{2k^{1/2}}{C_p}\\
& \le \frac{4k^{1/2}}{C_p},
\end{align*}where we use the fact that $0<C_p<1$. Then, we have 
\begin{equation*}
    \frac{1}{L\sqrt{27\phi(p_k,\beta_k)}} \ge \frac{\sqrt{C_p}}{L\sqrt{108}k^{1/4}}.
\end{equation*}Choosing $C_\gamma = \frac{\sqrt{C_p}}{L\sqrt{108}}$, we have 
\begin{equation*}
    \gamma_k = \frac{C_\gamma}{k^{3/2}} \le \frac{\sqrt{C_p}}{L\sqrt{108}k^{1/4}} \le \frac{1}{L\sqrt{27\phi(p_k,\beta_k)}}
\end{equation*}for $k\ge 1$. This verifies~(\ref{thm3:gamma-bound-verification}). 
Therefore, we can follow the same steps in Theorem~\ref{thm:sl-vrld-ficonv} (Appendix~\ref{proof:sl-vrld-ficonv}) up to~(\ref{eq:sl-vrld-kl-bound-for-id}) since the same arguments remain valid for $t\in[\tau_{n-1}, \tau_n]$ with time-varying parameters. We obtain
\begin{align}
\nonumber
\frac{d}{dt}\mathrm{KL}(\mu_t \| \pi)
&\le
-\frac{1}{2}\,\mathrm{FI}(\mu_t \| \pi)
-\frac{1}{1-(1-p_n)\beta_n}
\left(
2 + \frac{27}{4}\gamma_n^2 L^2 \phi(p_n,\beta_n)
\right)
\bigl(e_{n}^2 - e_{n-1}^2\bigr) \\[0.4em]
&\quad
+ \frac{9\sigma^2}{4\bigl(1-(1-p_n)\beta_n\bigr)}
\left[
\frac{p_n}{b_n} + 2(1-p_n)(1-\beta_n)^2
\right]
+ \frac{15}{2}\gamma_n L^2 \,\phi(p_n,\beta_n)d \label{thm4:dt-kl-bound-unique}
\end{align} for $t\in[\tau_{n-1}, \tau_n]$, where $\phi(p_n, \beta_n)\coloneqq 1 + \frac{2(1-p_n)}{1-(1-p_n)\beta_n}$. Integrating both sides of the inequality over the interval $[\tau_{n-1}, \tau_n]$ yields
\begin{align*}
\nonumber
\mathrm{KL}(\mu_{\tau_n} \| \pi) - \mathrm{KL}(\mu_{\tau_{n-1}} \| \pi)
&\le
-\frac{1}{2}\int_{\tau_{n-1}}^{\tau_n}\mathrm{FI}(\mu_t \| \pi)\,dt
-\frac{\gamma_n}{1-(1-p_n)\beta_n}
\left[2+\frac{27}{4}\gamma_n^2L^2\phi(p_n,\beta_n)\right]
\bigl(e_{n}^2-e_{n-1}^2\bigr) \\
&\quad
+ \frac{9\gamma_n\,\sigma^2}{4\bigl(1-(1-p_n)\beta_n\bigr)}
\left[p_n^2 +2(1-p_n)(1-\beta_n)^2\right]
+ \frac{15}{2}\gamma_n^2 L^2\phi(p_n,\beta_n)d,
\end{align*} 
where we use $b_n \ge \frac{1}{p_n}$. Iterating the above bound, we obtain
\begin{align}
\nonumber
\mathrm{KL}(\mu_{\tau_n} \| \pi) - \mathrm{KL}(\mu_{0} \| \pi)
&\le
- \frac{1}{2}\int_{0}^{\tau_n}\mathrm{FI}(\mu_t \| \pi)\,dt
\underbrace{-\sum_{k=1}^n\frac{\gamma_k}{1-(1-p_k)\beta_k}
\left[2+\frac{27}{4}\gamma_k^2L^2\phi(p_k,\beta_k)\right]
\bigl(e_{k}^2-e_{k-1}^2\bigr)}_{\text{(I)}} \\ \label{eq:sl-vrld-id-kl-bound}
&\quad
+\underbrace{ \frac{9\sigma^2}{4} \sum_{k=1}^n
\gamma_k\left(\frac{p_k^2 +2(1-p_k)(1-\beta_k)^2}{1-(1-p_k)\beta_k}\right)}_{\text{(II)}}
+\underbrace{\frac{15L^2d}{2} \sum_{k=1}^n\gamma_k^2\phi(p_k,\beta_k)}_{\text{(III)}}.
\end{align} 
It remains to control the terms labeled \textup{(I)}--\textup{(III)} and show that their contribution remains uniformly bounded as \(k\to\infty\). This will imply that the upper bound (\ref{eq:sl-vrld-id-kl-bound}) stays bounded, which is a key step toward establishing weak convergence. 

First, analyze the term (I). Define 
\begin{align*}
    c_k \coloneqq \frac{\gamma_k}{1-(1-p_k)\beta_k}\left(4 + \frac{27}{2}\gamma_k^2L^2\phi(p_k, \beta_k)\right).
\end{align*}
If $(c_k)_{k\ge 1}$ is nonnegative and decreasing sequence, we have
\begin{align*}
    (\textup{I})=
    -\sum_{k=1}^nc_k(e_k^2-e_{k-1}^2) = c_1e_0^2 + \sum_{k=1}^{n-1}(c_{k+1}-c_{k})e_{k}^2 - c_ne_n^2 \leq c_1e_0^2
\end{align*}
Thus, it remains to show that $(c_k)_{k\ge 1}$ is nonnegative and decreasing sequence. Nonnegativity is obvious as all the terms are positive and $p_k< 1$ and $\beta_k < 1$. To prove that the sequence is decreasing, we substitute $\gamma_k, \beta_k$, and $p_k$. As shown in the proof of Theorem~\ref{thm:ml-vrld-weakconv}, we have
\begin{equation*}
    \frac{\gamma_k}{1-(1-p_k)\beta_k} = \frac{C_\gamma}{2C_pk - C_p^2k^{1/2}},
\end{equation*}which is decreasing for $k\ge 1$. In addition, 
\begin{equation*}
    \frac{27}{2}\gamma_k^2L^2\phi(p_k, \beta_k) = \frac{27C_\gamma^2L^2}{2k^3}\left(1 + \frac{2\left(1-\frac{C_p}{k^{1/2}}\right)}{1 - \left(1 -\frac{C_p}{k^{1/2}}\right)^2}\right).
\end{equation*}We can write this sequence as a continuous function
\begin{equation*}
    c(x) = \frac{27C_\gamma^2L^2}{2x^3}\left(1 + \frac{2\left(1-\frac{C_p}{x^{1/2}}\right)}{1 - \left(1 -\frac{C_p}{x^{1/2}}\right)^2}\right)
\end{equation*}for $x\ge 1$. We use change of variables $r=\frac{C_p}{x^{1/2}}$ and get 
\begin{equation*}
    c(x) = \frac{27C_\gamma^2L^2}{2C_p^6}\underbrace{\left(\frac{r^5(2-r^2)}{2-r}\right)}_{\coloneqq h(r)}.
\end{equation*}Differentiating $h(r)$ with respect to $r$, we have
\begin{equation*}
    h'(r) = \frac{2r^4(10-4r -7r^2+3r^3)}{(2-r)^2}.
\end{equation*}Defining $q(r)\coloneqq 10-4r-7r^2+3r^3$, we note that, for $r\in(0,1)$, $q'(r)=-4-14r+9r^2<-4-5r<0$. Therefore, 
\begin{equation*}
    q(r)> q(1) = 2 > 0,
\end{equation*}which implies $h'(r) > 0$ for $r\in(0,1)$. Hence, 
\begin{align*}
    c'(x)& \le \frac{27C_\gamma^2L^2}{2C_p^6}h'(r(x))r'(x) \\ 
    & \le -\frac{27C_\gamma^2 L^2}{4C_p^5x^{3/2}}h'(r(x)) < 0
\end{align*}for $x\ge 1$. Therefore, $c(x)$ is decreasing on $[1,\infty)$, which implies $(c_k)_{k\ge 1}$ is decreasing sequence. This implies
\begin{equation}\label{sl-vrld-weak-term-i-is-bounded}
    \limsup_{n\to\infty} (\mathrm{I}) \le c_1 e_0^2.
\end{equation}
We now analyze the term $\mathrm{(II)}$

\begin{align*}
\mathrm{(II)} &= 
\frac{9\sigma^2}{4} \sum_{k=1}^n
\gamma_k\left(\frac{p_k^2 +2(1-p_k)(1-\beta_k)^2}{p_k}\right) \\ &\leq 
\frac{9\sigma^2}{4} \sum_{k=1}^n
\gamma_k\left(\frac{p_k^2 +2(1-\beta_k)^2}{p_k}\right) \\
&= \frac{9\sigma^2}{4} \sum_{k=1}^n
\gamma_k\left(\frac{p_k^2 + 2p_k^2}{p_k}\right) \\
&=\frac{27}{4}\sigma^2 \sum_{k=1}^n
\gamma_kp_k
\end{align*}
Using the definitions of $\gamma_k$ and $p_k$, we obtain the following upper bound for (II):
\begin{align*}
    \frac{9\sigma^2}{4} \sum_{k=1}^n
\gamma_k\left(\frac{p_k^2 +2(1-p_k)(1-\beta_k)^2}{1-(1-p_k)\beta_k}\right) &\leq 
\frac{27C_{\gamma}C_p\sigma^2}{4}\sum_{k=1}^{n}\frac{1}{k^2}.
\end{align*}
Taking the limit as $n \rightarrow \infty$ yields
\begin{align}
    \lim_{k\rightarrow\infty}{(\textup{II})} \leq \frac{27C_{\gamma}C_p\sigma^2}{4}\sum_{n=1}^{\infty}\frac{1}{n^2} \eqqcolon S_{2}
< \infty.\label{eq:sl-vrld-bound-on-s2}
\end{align}
Finally, we analyze (III)
\begin{align*}
\mathrm{(III)} =\frac{15L^2d}{2} \sum_{k=1}^n\gamma_k^2\left(1 + \frac{2(1-p_k)}{1-(1-p_k)\beta_k}\right) \leq 
\frac{15L^2d}{2} \sum_{k=1}^n\gamma_k^2\left(1 + \frac{2(1-p_k)}{p_k}\right)
&\leq
\frac{15L^2d}{2} \sum_{k=1}^n\frac{2\gamma_k^2}{p_k}.
\end{align*}
Using the definitions of $\gamma_k$ and $p_k$, and taking the limit as $n \rightarrow \infty$ yields
\begin{align}
    \lim_{k\rightarrow\infty}{(\textup{III})} \leq \frac{15C_\gamma^2L^2d}{C_p}\sum_{k=1}^{\infty}\frac{1}{k^{\frac{5}{2}}} \eqqcolon S_{3}
< \infty.\label{eq:sl-vrld-bound-on-s3}
\end{align}
Now that we have shown the terms (I), (II) and (III) are uniformly bounded, we bound (\ref{eq:sl-vrld-id-kl-bound}) as
\begin{align}
\mathrm{KL}(\mu_{\tau_n} \| \pi) - \mathrm{KL}(\mu_0 \| \pi)
&\le
- \frac{1}{2}\int_{0}^{\tau_n}\mathrm{FI}(\mu_t \| \pi)\,dt
+ c_1e_0^2 +S_{2} +S_{3}.\label{eq:sl-vrld-weak-convergence-last-kl-inequality-s1s2s3}
\end{align} 
Since KL divergence is nonnegative, we may discard $\mathrm{KL}(\mu_{\mu_{\tau_n}}\|\pi) $ and rearrange the terms to get 
\begin{align*}
\frac{1}{2}\int_{0}^{\tau_n}\mathrm{FI}(\mu_t \| \pi)\,dt
&\le
\mathrm{KL}(\mu_{0} \| \pi)
+ c_1e_0^2 +S_{2} +S_{3}.
\end{align*} 
Dividing both sides by $\frac{\tau_n}{2}$, we obtain 
\begin{align*}
\frac{1}{\tau_n}\int_{0}^{\tau_n}\mathrm{FI}(\mu_t \| \pi)\,dt
&\le
\frac{2\mathrm{KL}(\mu_{0} \| \pi)}{\tau_n}
+\frac{2c_1e_0^2}{\tau_n} +\frac{2S_{2}}{\tau_n} +\frac{2S_{3}}{\tau_n}.
\end{align*} 
Since $\mathrm{FI}(\cdot\|\pi)$ is convex, we use Jensen's inequality to obtain
\begin{align}\label{eq:sl-vrld-id-avg-law-fi}
\mathrm{FI}(\bar \mu_{\tau_n} \| \pi) \leq
\frac{1}{\tau_n}\int_{0}^{\tau_n}\mathrm{FI}(\mu_t \| \pi)\,dt
&\le
\frac{2\mathrm{KL}(\mu_{0} \| \pi)}{\tau_n}
+\frac{2c_1e_0^2}{\tau_n} +\frac{2S_{2}}{\tau_n} +\frac{2S_{3}}{\tau_n},
\end{align} 
where $\mu_{\tau_n} \coloneqq \frac{1}{\tau_n}\int_0^{\tau_n}\mu_t\,dt$. Given that $\mathrm{KL}(\mu_0\|\pi)< \infty$, we have $\mathrm{FI}(\bar{\mu}_{\tau_n}\|\pi)<\infty$. On the other hand, if $t\in[\tau_n,\tau_{n+1}]$, considering~(\ref{thm4:dt-kl-bound-unique}) for the interval $[\tau_n, \tau_{n+1}]$, integrating both sides from $\tau_n$ to $t$, and dropping the negative integral over the Fisher information gives us

\begin{align*}
\nonumber
\frac{d}{dt}\mathrm{KL}(\mu_t \| \pi)
&\le
-\frac{1}{1-(1-p_{n+1})\beta_{n+1}}
\left(
2 + \frac{27}{4}\gamma_{n+1}^2 L^2 \phi(p_{n+1},\beta_{n+1})
\right)
\bigl(e_{n+1}^2 - e_n^2\bigr) \\[0.4em]
&\quad
+ \frac{9\sigma^2}{4\bigl(1-(1-p_{n+1})\beta_{n+1}\bigr)}
\left[
p_{n+1}^2+ 2(1-p_{n+1})(1-\beta_{n+1})^2
\right]
+ \frac{15}{2}\gamma_{n+1} L^2 d\,\phi(p_{n+1},\beta_{n+1}).
\end{align*}
Furthermore, integrating both sides over the interval $[\tau_n,t]$, we obtain
\begin{align*}
\mathrm{KL}(\mu_t \| \pi)
&\leq\,
\mathrm{KL}(\mu_{\tau_n} \| \pi)
-
\frac{(t-\tau_n)}{1-(1-p_{n+1})\beta_{n+1}}\left[2 + \frac{27}{4}\gamma_{n+1}^2L^2\phi(p_{n+1}, \beta_{n+1})\right](e_{n+1}^2 - e_n^2) \\
&\quad + \frac{9\,\sigma^2(t-\tau_n)}{4\bigl(1-(1-p_{n+1})\beta_{n+1}\bigr)}
\left[\frac{p_{n+1}}{b_{n+1}} +2(1-p_{n+1})(1-\beta_{n+1})^2\right]
+ \frac{15}{2}(t-\tau_{n})\gamma_{n+1} L^2\phi(p_{n+1},\beta_{n+1})\,d \\
&\leq
\mathrm{KL}(\mu_{0} \| \pi) + \frac{9\,\sigma^2(t-\tau_n)}{4\bigl(1-(1-p_{n+1})\beta_{n+1}\bigr)}
\left[\frac{p_{n+1}}{b_{n+1}} +2(1-p_{n+1})(1-\beta_{n+1})^2\right] \\
& \quad + \frac{15}{2}(t-\tau_{n})\gamma_{n+1} L^2\phi(p_{n+1},\beta_{n+1})\,d + c_1e_0^2 + S_2 + S_3\\
&\leq
\mathrm{KL}(\mu_{0} \| \pi)  + c_1e_0^2 + 2S_{2} + 2S_{3} < \infty.
\end{align*}where the second inequality follows from~(\ref{eq:sl-vrld-id-kl-bound}), the bounds on $\mathrm{(II)}$ and $\mathrm{(III)}$, and the fact that 
\begin{equation*}
    c_{n+1}\ge \frac{(t-\tau_n)}{1-(1-p_{n+1})\beta_{n+1}}
\end{equation*}for $t\in[\tau_n, \tau_{n+1}]$. To obtain the last inequality, we again use the bounds on $\mathrm{(II)}$ and $\mathrm{(III)}$. Thus $\{\mathrm{KL(\mu_t \| \pi)}\,|\,t\geq 0\}$ is bounded. The weak convergence of $\bar{\mu}$ to $\pi$ then follows by the same argument as in the proof of Theorem~\ref{thm:ml-vrld-weakconv}.
\qedbox 

\subsection{Posterior Sampling with SGM Prior Proofs}

\subsubsection{Proof of Theorem~\ref{thm:ficonv-SGM-prior}\textit{a} (ML-VRLD with SGM prior: FI convergence)}\label{proof:ml-vrld:ficonv-SGM-prior}
\paragraph{Theorem~\ref{thm:ficonv-SGM-prior}\textit{a} (ML-VRLD with SGM prior: FI convergence)}\textit{
    Let $\pi(\vx|\vy) \propto \ell(\vy|\vx)p(\vx)$ be the posterior with the likelihood $\ell(\vy|\vx)\propto e^{-f(\vx)}$ and the prior $p(\vx)\propto e^{-h(\vx)}.$ Suppose the likelihood potential \(f\) satisfies Assumptions~\ref{ass:potential-lipschitz} and~\ref{ass:sfo} with Lipschitz constant $L_f$, the prior potential \(h\) satisfies Assumption~\ref{ass:potential-lipschitz} and~\ref{ass:smoothing-err-bounded} with Lipschitz constant $L_h$, and the SGM satisfies Assumption~\ref{ass:score-network} with decreasing error $\varepsilon_{\sigma_k}=\BigO(k^{-1/2})$ for $k\ge 1$. Define $L_\pi = L_f + L_h$. Let $(\mu_t)_{t\geq 0}$ denote the law of the interpolation generated by~(\ref{def:SGM-prior-interpolation}) with the estimator~(\ref{def:ml-vrld:estimator}) and the schedules defined in~(\ref{def:annealing-smoothing-schedules}). For any step size $\gamma \in \left(0, \frac{1}{L_\pi\sqrt{52\phi(p,\beta)}}\right]$, where $\phi(p,\beta)= 1+\frac{4(1-p)\beta^2}{(1-(1-p)\beta)^2}$, and for any $N\ge 1$, it holds that
    \begin{align}\label{eq:ml-prior-fi-integral-bound}
        \frac{1}{N\gamma}\int_{0}^{N\gamma} \mathrm{FI}(\mu_t || \pi)\, dt\leq \frac{2C_0}{N\gamma} + 13\sigma^2\left[\frac{\frac{p}{b} + (1-p)(1-\beta)^2}{1-(1-p)\beta}\right] + \frac{84\gamma L_\pi^2 d}{(1-(1-p)\beta)^2}+\Bar{\sigma}^2 + \Bar{\varepsilon}_{\sigma}^2 + \Bar{\alpha}^2,
    \end{align}where 
    \begin{equation} \label{eq:ml-prior-defs}
        \Bar{\sigma}^2 = \frac{39C_1^2}{2N}\sum_{k=0}^{N-1}\sigma_k^2, \quad \Bar{\varepsilon}^2_{\sigma} = \frac{39}{2N}\sum_{k=0}^{N-1}\varepsilon_{\sigma_k}^2, \quad \Bar{\alpha}^2 = \frac{39C_2^2}{2N}\sum_{k=0}^{N-1}\frac{(\alpha_k - 1)^2}{\sigma_k^2},
    \end{equation}}
    \textit{
    \begin{equation}
        C_0 = \mathrm{KL}(\mu_0\|\pi) + \frac{13\gamma}{2(1-(1-p)\beta)^2}\E[\|\nabla f(\vx_0) - \vg_0\|^2],
    \end{equation}and $C_0, C_1, C_2$ are positive constants. Furthermore, let
    \begin{equation}
        \gamma=\frac{\sqrt d}{\sigma_m N^{3/4}\sqrt{L_\pi}},\qquad 
        p=1-\beta=\frac{\sqrt{L_\pi d}}{\sigma_m N^{1/4}},\qquad 
        b=\Bigl\lceil \tfrac{1}{p}\Bigr\rceil,
        \quad \text{where} \quad \sigma_m = \max\{\sigma,1\},
    \end{equation}and let the initial parameters satisfy $\sigma_0\ge\sigma_{\mathrm{min}}$, $\varepsilon_{\sigma_0}>0$, $\alpha_0\ge 1$, and $\sigma_{\mathrm{min}}> 0$ is the minimum noise level. Then, the time-averaged law $\bar{\mu}_{N\gamma}=\frac{1}{N\gamma}\int_{0}^{N\gamma}\mu_t\,dt$ satisfies $\mathrm{FI}(\bar{\mu}_{N\gamma}\|\pi)\le \varepsilon$ with $\BigO\!\left(\frac{\sigma_m^{4}L_\pi^{2}d^{2}}{\varepsilon^{4}}\right)$ iterations and $\BigO(1)$ gradient computations per iteration. }

\textit{Proof.} In the case of posterior sampling with SGM priors recall that the SDE in interest is:
\[
d \vx_t
=
- \big( \nabla f(\vx_t) + \nabla h(\vx_t) \big)\, dt
+
\sqrt{2}\, d\mB_t.
\]
We again begin by invoking Lemma \ref{lemma:sampling_first},
\begin{align*}
\frac{d}{dt}\mathrm{KL}(\mu_t \| \pi)
&\leq
-\frac{3}{4}\mathrm{FI}(\mu_t \| \pi)
+
\E \left[\|\nabla \log \pi(\vx_t | \vy) + \vg_k - \alpha_k\gS_{\theta}(\vx_{k\gamma}, \sigma_k)
\|^2\right] \\
&\leq
-\frac{3}{4}\mathrm{FI}(\mu_t \| \pi)
+
\E \left[\| -\nabla h(\vx_t) - \nabla f(\vx_t)  + \vg_k - \alpha_k\gS_{\theta}(\vx_{k\gamma}, \sigma_k)
\|^2\right] \\
&\leq
-\frac{3}{4}\mathrm{FI}(\mu_t \| \pi)
+
\E [\| \nabla f( \vx_{k\gamma}) + \nabla h( \vx_{k\gamma}) - \nabla f( \vx_{t}) - \nabla h( \vx_{t}) - \nabla f( \vx_{k\gamma}) + \vg_k \\
& \qquad - \nabla h( \vx_{k\gamma}) + \nabla h_{\sigma_k}( \vx_{k\gamma}) -
\nabla h_{\sigma_k}( \vx_{k\gamma}) - \gS_{\theta}(\vx_{k\gamma}, \sigma_k)
- (\alpha_k -1)\gS_{\theta}(\vx_{k\gamma}, \sigma_k)
 \|^2 ].
\end{align*} 
Applying Young's inequality together with Assumptions~\ref{ass:potential-lipschitz}, \ref{ass:smoothing-err-bounded}, and~\ref{ass:score-network}, we get
\begin{align} \label{eq:ml-prior-main-inequality}
    \frac{d}{dt}\mathrm{KL}(\mu_t \| \pi)
&\leq
-\frac{3}{4}\mathrm{FI}(\mu_t \| \pi) + 3L_{\pi}\Delta_k + 3e_k^2 + 3(\sigma_kC_1 + \varepsilon_{\sigma_k} + (\alpha_k -1)C_2\sigma_k^{-1})^2
\end{align}
where $\Delta_k \coloneqq \E\left[\|\vx_{(k+1)\gamma} - \vx_{k\gamma}\|^2\right] \geq \E\left[\|\vx_{t} - \vx_{k\gamma}\|^2\right]$, $L_{\pi} \coloneqq L_f + L_h$ and $e_k \coloneqq \E \left[\|\vg_k - \nabla f(\vx_{k\gamma})\|^2\right]$. We use the recursive upper bound on the estimation error $e_k^2$ in~(\ref{eq:upper-bound-ek}):
\begin{align}\label{sl-prior-error-bound}
    e_k^2 
    \leq\,
    -\frac{2}{1-(1-p)\beta}\bigl(e_{k+1}^2 - e_k^2\bigr)
    + \frac{2\sigma^2}{1-(1-p)\beta}
    \left[\frac{p}{b}+(1-p)(1-\beta)^2\right]
    + \frac{4(1-p)\beta^2 L_f^2}{\bigl(1-(1-p)\beta\bigr)^2}\,\Delta_k .
\end{align}
Plugging this into~(\ref{eq:ml-prior-main-inequality}),  we obtain
\begin{align}\label{eq:main-ineq-sl-prior-v2}
    \nonumber
    \frac{d}{dt}\mathrm{KL}(\mu_t \| \pi)
    &\leq
    -\frac{3}{4}\mathrm{FI}(\mu_t \| \pi) +
    3L_{\pi}^2\phi(p, \beta)\Delta_k + \frac{6\sigma^2}{1-(1-p)\beta}\left[\frac{p}{b}+(1-p)(1-\beta)^2\right] \\
    &\qquad - \frac{6}{1-(1-p)\beta}\bigl(e_{k+1}^2 - e_k^2\bigr) + 3(\sigma_kC_1 + \varepsilon_{\sigma_k} + (\alpha_k -1)C_2\sigma_k^{-1})^2,
\end{align} where $\phi(p, \beta) \coloneqq 1 + \frac{4(1-p)\beta^2}{(1-(1-p)\beta)^2}$. We bound $\Delta_k$ using the interpolation (\ref{def:SGM-prior-interpolation}):
\begin{align*}
    \Delta_k &= 
    \ E \left[\| \vx_{(k+1)\gamma} - \vx_{k\gamma}\|^2\right] =
    \gamma^2 \ E \left[\| \vg_k - \alpha_k\gS_{\theta}(\vx_{k\gamma}, \sigma_k)
\|^2\right] + 2\gamma d \\
&\qquad=\gamma^2\ E [\|    \vg_k - \nabla f( \vx_{k\gamma}) + \nabla f( \vx_{t}) + \nabla h( \vx_{t}) + \nabla f( \vx_{k\gamma})  - \nabla f( \vx_{t}) + \nabla h( \vx_{k\gamma}) \\
&\quad-  \nabla h( \vx_{t}) - \nabla h( \vx_{k\gamma}) + \nabla h_{\sigma_k}( \vx_{k\gamma}) - \nabla h_{\sigma_k}( \vx_{k\gamma}) - \gS_{\theta}(\vx_{k\gamma}, \sigma_k) - (\alpha_k - 1)\gS_{\theta}(\vx_{k\gamma}, \sigma_k)
    \|^2] + 2\gamma d.
\end{align*}
Applying Young’s inequality together with Assumptions~\ref{ass:potential-lipschitz}, \ref{ass:smoothing-err-bounded} and~\ref{ass:score-network} yields
\begin{align}\label{eq:ml-prior-delta-k-intermediate-step}
    \Delta_k 
    \leq 
    4\gamma^2e_k^2
    +
    4\gamma^2\E\left[\|\nabla \log \pi(\vx_t | \vy)\|^2\right]
    +
    4\gamma^2L_{\pi}^2\Delta_k
    +
    4\gamma^2(\sigma_kC_1 + \varepsilon_{\sigma_k} + (\alpha_k -1)C_2\sigma_k^{-1})^2
    +
    2\gamma d.
\end{align}
We now substitute the bound (\ref{sl-prior-error-bound}) on $e_k^2$ and rearrange the terms as follows
\begin{align*}
    \bigl(1-4\gamma^2L^2_{\pi}\phi(p, \beta)\bigr)\Delta_k 
    \leq
    &- \frac{8\gamma^2}{1-(1-p)\beta}(e_{k+1}^2 - e_k^2) 
    + \frac{8\gamma^2\sigma^2}{1-(1-p)\beta}\left[\frac{p}{b}+(1-p)(1-\beta)^2\right]  \\
    &+4\gamma^2\E\left[\|\nabla \log \pi(\vx_t | \vy)\|^2\right]
    +4\gamma^2(\sigma_kC_1 + \varepsilon_{\sigma_k} + (\alpha_k -1)C_2\sigma_k^{-1})^2
    +
    2\gamma d.
\end{align*} Setting $\gamma \leq \frac{1}{L_\pi\sqrt{52\phi(p, \beta)}}$ and dividing both sides by $\frac{12}{13}$, we obtain
\begin{align}\label{eq:sl-prior-delta-k-upper-bound}
    \nonumber
    \Delta_k
    \leq
    &-\frac{26\gamma^2}{3\bigl(1-(1-p)\beta\bigr)}(e_{k+1}^2 - e_k^2) 
    + \frac{26\gamma^2\sigma^2}{3\bigl(1-(1-p)\beta\bigr)}\left[\frac{p}{b}+(1-p)(1-\beta)^2\right]\\
    &+\frac{13}{3}\gamma^2\E\left[\|\nabla \log\pi(\vx_t | \vy)\|^2\right]
    + \frac{13}{3}\gamma^2 (\sigma_kC_1 + \varepsilon_{\sigma_k} + (\alpha_k -1)C_2\sigma_k^{-1})^2 +\frac{13}{6}\gamma d. 
\end{align}
Substituting the above bound into~(\ref{eq:main-ineq-sl-prior-v2}), applying Lemma~\ref{lemma:sampling_final}, using the step-size condition $\gamma \le \tfrac{1}{L_\pi\sqrt{52\phi(p,\beta)}}$, and using Young's inequality for $(\sigma_k C_1 + \varepsilon_{\sigma_k} + (\alpha_k - 1) C_2 \sigma_k^{-1})^2$ yields
\begin{align}
\nonumber
    \frac{d}{dt}\mathrm{KL}(\mu_t \| \pi)
    \le
    &-\frac{1}{2}\,\mathrm{FI}(\mu_t \| \pi) -
    \frac{6+26\gamma^2L_{\pi}^2\phi(p, \beta)}{1-(1-p)\beta}(e_{k+1}^2 - e_k^2)
    +\frac{13\sigma^2}{2\bigl(1-(1-p)\beta\bigr)}\left[\frac{p}{b}+(1-p)(1-\beta)^2\right] \\
    &+\left(26\gamma L_\pi + \frac{13}{2}\right)\gamma L_{\pi}^2d\phi(p, \beta) + \frac{39}{4}\sigma_k^2C_1^2 + \frac{39}{4}\varepsilon_{\sigma_k}^2 + \frac{39C_2^2(\alpha_k-1)^{2}}{4\sigma_k^2}.
\end{align}Since $\phi(p,\beta) \ge 1 $, we have $\gamma L_\pi\ge \frac{1}{\sqrt{52}}$, which implies 
\begin{equation*}
    26\gamma L_\pi + \frac{13}{2}\le \frac{21}{2}. 
\end{equation*}Then, we have
\begin{align} \label{eq:ml-vrld-sgm-prior-kl-bound-for-id}
\nonumber
    \frac{d}{dt}\mathrm{KL}(\mu_t \| \pi)
    \le
    &-\frac{1}{2}\,\mathrm{FI}(\mu_t \| \pi) -
    \frac{6+26\gamma^2L_{\pi}^2\phi(p, \beta)}{1-(1-p)\beta}(e_{k+1}^2 - e_k^2)
    +\frac{13\sigma^2}{2\bigl(1-(1-p)\beta\bigr)}\left[\frac{p}{b}+(1-p)(1-\beta)^2\right] \\
    &+\frac{21}{2}\gamma L_{\pi}^2d\phi(p, \beta) + \frac{39}{4}\sigma_k^2C_1^2 + \frac{39}{4}\varepsilon_{\sigma_k}^2 + \frac{39C_2^2(\alpha_k-1)^{2}}{4\sigma_k^2}.
\end{align}
Next, we define
\(
    \mathcal{L}_k \coloneqq \mathrm{KL}(\mu_{k\gamma} \| \pi) + \frac{\gamma\bigl(6 + 26\gamma^2L_{\pi}^2\phi(p, \beta)\bigr)}{1-(1-p)\beta}e_k^2
\)
and integrate both sides over $t \in [k\gamma,(k+1)\gamma]$ to get
\begin{align*}
    \mathcal{L}_{k+1} - \mathcal{L}_k
    \le
    &-\frac{1}{2}\int_{k\gamma}^{(k+1)\gamma} \mathrm{FI}(\mu_t \| \pi)\,dt
    +\frac{13\gamma\sigma^2}{2\bigl(1-(1-p)\beta\bigr)}\left[\frac{p}{b}+(1-p)(1-\beta)^2\right]
    + \frac{21}{2}\gamma L_{\pi}^2d\phi(p, \beta) \\
    & + \frac{39}{4}\gamma\sigma_k^2C_1^2 + \frac{39}{4}\gamma\varepsilon_{\sigma_k}^2 + \frac{39\gamma C_2^2(\alpha_k-1)^{2}}{4\sigma_k^2}
\end{align*}
Note that $\mathcal{L}_0 = \mathrm{KL}(\mu_0 \| \pi) + \frac{\gamma\bigl(6 + 26\gamma^2L_{\pi}^2\phi(p, \beta)\bigr)}{1-(1-p)\beta}e_0^2 \leq \mathrm{KL}(\mu_0 \| \pi) + \frac{13\gamma e_0^2}{2(1-(1-p)\beta)} \eqqcolon C_0 $. Iterating the above inequality for $k=0,1,\dots,N-1$ and rearranging the terms, we obtain
\begin{align*}
    \frac{1}{N\gamma}\int_{0}^{N\gamma}\mathrm{FI}(\mu_t \| \pi)\,dt
    \le
    &\frac{2C_0}{N\gamma} + 13\sigma^2\left[\frac{\frac{p}{b}+(1-p)(1-\beta)^2}{\bigl(1-(1-p)\beta\bigr)}\right] + 
    \frac{84\gamma L_{\pi}^2d}{\bigl(1-(1-p)\beta\bigr)^2} + \bar{\sigma}^2 + \bar{\varepsilon}_{\sigma}^2 + \bar{\alpha}^2,
\end{align*} where $\bar{\sigma}^2$, $\bar{\varepsilon}_{\sigma}^2$, and $\bar{\alpha}^2$
are defined in (\ref{eq:ml-prior-defs}). Note that we use $\phi(p, \beta) = 1 + \frac{4(1-p)\beta^2}{(1-(1-p)\beta)^2} \le 1 + \frac{4(1-p)\beta}{(1-(1-p)\beta)^2} = \frac{(1+(1-p)\beta)^2}{(1-(1-p)\beta)^2} \leq \frac{4}{(1-(1-p)\beta)^2}$ to bound $\phi(p, \beta)$ in the above inequality. This completes the first part of the proof. 

To establish the iteration complexity, we note that, replacing $L$ by $L_\pi$, all the terms in the upper bound of Theorem~\ref{thm:ml-vrld-ficonv} (except the bias terms $\bar{\varepsilon}_{\sigma}^2$, $\bar{\alpha}^2$, $\bar{\sigma}^2$) are the same with different numerical coefficients. Therefore, choosing the same parameters (replacing $L$ with $L_\pi$) as in Theorem~\ref{thm:ml-vrld-ficonv}
\[
    \gamma=\frac{\sqrt{d}}{\sigma_mN^{\frac{3}{4}}\sqrt{L_\pi}},\qquad 
    p=1-\beta=\frac{\sqrt{L_\pi d}}{\sigma_mN^{\frac{1}{4}}},
    \quad \text{where}\quad \sigma_m \coloneqq \max\{\sigma,1\},
\]we have 
\begin{equation*}
    \frac{1}{N\gamma}\int_{0}^{N\gamma}\mathrm{FI}(\mu_t\|\pi)\,dt\le \BigO\left(\frac{\sigma_mL_\pi^{1/2}d^{1/2}}{N^{1/4}}\right) + \bar{\varepsilon}_{\sigma}^2+\bar{\alpha}^2+\bar{\sigma}^2. 
\end{equation*}We recall that the noise $(\sigma_k)_{k=0}^{N-1}$ and $(\alpha_k)_{k=0}^{N-1}$ schedules are defined as 
\begin{equation*}
    \alpha_k = \max\{\alpha_0\rho_1^k, 1\} \quad \text{and} \quad \sigma_k = \max\{\sigma_0\rho_2^k, \sigma_{\mathrm{min}}\},
\end{equation*}where $\rho_1,\rho_2\in(0,1)$ denote decay rates, $\sigma_0>0$, $\alpha_0>0$, and $\sigma_{\mathrm{min}}>0$ is the minimum noise level. By definition in~(\ref{def:annealing-smoothing-schedules}), there exists indices $K_\alpha < N-1$ and $K_\sigma< N-1$ independent of $N$ such that $\alpha_k =1$ for $\forall k \ge K_\alpha$ and $\sigma_k = \sigma_{\mathrm{min}}$ for $\forall k\ge K_\sigma$. Using this definition, we next analyze the convergence behavior of the bias contributions due to the noise schedule, annealing schedule, and SGM estimation error separately. 

To bound $\bar{\sigma}^2$, we proceed as 
\begin{align*}
    \bar{\sigma}^2 &= \frac{39C_1^2}{2N} \sum_{k=0}^{N-1} \max\{\sigma_0^2\rho_2^{2k}, \sigma_{\mathrm{min}}^2\}\\
    & \overset{(*)}{=} \frac{39C_1^2}{2N} \sum_{k=0}^{K_{\sigma}-1} \sigma_0^2\rho_2^{2k} + \frac{39C_1^2}{2N} \sum_{k=K_\sigma}^{N-1} \sigma_{\mathrm{min}}^2 \\
    &=\underbrace{\frac{39C_1^2\sigma_0^2(1-\rho_2^{2K_\sigma})}{2N(1-\rho_2^2)}}_{\BigO(N^{-1})}+\frac{39C_1^2(N-K_\sigma)\sigma_{\mathrm{min}}^2}{2N},
\end{align*}where $(*)$ follows from the definition of the annealing schedule~(\ref{def:annealing-smoothing-schedules}). Setting $\sigma_{\mathrm{min}} = \BigO(N^{-1/8})$, we obtain
\begin{equation}\label{thm3:sigma-bound-bias}
    \bar{\sigma}^2 = \BigO(N^{-1}) + \BigO(N^{-1/4}) = \BigO(N^{-1/4}). 
\end{equation}

To bound $\Bar{\alpha}^2$, we use the fact that $(\alpha_k)^{N-1}_{k=0}$ is nonincreasing sequence and that there exists a constant $K_\alpha < N-1$, independent of $N$, such that $\alpha_k=1$ for all $k\ge K_\alpha$. Thus, we obtain

\begin{align}
    \bar{\alpha}^2 & = \frac{39C_2^2}{2N}\sum_{k=0}^{N-1}\frac{(\alpha_k - 1)^2}{\sigma_k^2} \notag \\
    & = \frac{39C_2^2}{2N} \sum_{k=0}^{K_\alpha-1} \frac{(\alpha_0\rho_1^k -1)^2}{\sigma_k^2} \notag \\
    & = \frac{39C_2^2}{2N\sigma_{\mathrm{min}}^2} \sum_{k=0}^{K_\alpha-1}(\alpha_0\rho_1^k -1)^2  \notag \\
    & = \frac{39C_2^2}{2N\sigma_{\mathrm{min}}^2} \sum_{k=0}^{K_\alpha-1}(\alpha_0^2\rho_1^{2k}-2\alpha_0\rho_1^k + 1)  \notag \\
    & \le \frac{39C_2^2\alpha_0^2}{2N\sigma_{\mathrm{min}}^2}\sum_{k=0}^{K_\alpha-1}\rho_1^{2k} + \frac{39C_2^2 K_\alpha}{2N\sigma_{\mathrm{min}}^2} \notag \\
    & \le \underbrace{\frac{39C_2^2\alpha_0^2(1-\rho_1^{2K_\alpha})}{2N\sigma_{\mathrm{min}}^2(1-\rho_1^2)}}_{\BigO(N^{-3/4})} + \underbrace{\frac{39C_2^2 K_\alpha}{2N\sigma_{\mathrm{min}}^2}}_{\BigO(N^{-3/4})}\notag \\
    & = \BigO(N^{-3/4}). \label{thm3:alpha-bound-bias}
\end{align}

Finally, to bound $\bar{\varepsilon}_{\sigma}^2$, we use the assumption that the SGM estimation error decays as $\varepsilon_{\sigma_k}^2\le \frac{C'}{k}$ for some constant $C'>0$. We proceed as 
\begin{align}
    \bar \varepsilon_{\sigma}^2 & = \frac{39}{2N} \sum_{k=0}^{N-1}\varepsilon_{\sigma_k}^2 \notag \\ 
    & = \frac{39\varepsilon_{\sigma_0}^2}{2N} + \frac{39C'}{2N}\sum_{k=1}^{N-1}\frac{1}{k}\notag \\
    & = \BigO\left(\frac{\log N}{N}\right). \label{thm3:err-bound-bias}
\end{align}

Combining the upper bounds~(\ref{thm3:sigma-bound-bias}),~(\ref{thm3:alpha-bound-bias}), and~(\ref{thm3:err-bound-bias}), and using the Jensen's inequality with the convexity of $\mathrm{FI}(\cdot\|\pi)$, we obtain
\begin{align*}
\mathrm{FI}(\bar{\mu}_{N\gamma}\|\pi)& = \BigO\left(\frac{\sigma_mL_\pi^{1/2}d^{1/2}}{N^{1/4}}\right) + \BigO\left(\frac{1}{N^{1/4}}\right) + \BigO\left(\frac{1}{N^{3/4}}\right) + \BigO\left(\frac{\log N}{N}\right)\\
& = \BigO\left(\frac{\sigma_mL_\pi^{1/2}d^{1/2}}{N^{1/4}}\right).
\end{align*}This implies that $\mathrm{FI}(\bar{\mu}_{N\gamma}\|\pi) \le \varepsilon$ is achieved with $\BigO\left(\frac{\sigma_m^4L_\pi^{2}d^{2}}{\varepsilon^4}\right)$ iterations with $pb + (1-p) = \BigO(1)$ gradient computations per iteration. 
\qedbox

\subsubsection{Proof of Theorem~\ref{thm:ficonv-SGM-prior}\textit{b} (SL-VRLD with SGM prior: FI convergence)}\label{proof:sl-vrld:ficonv-SGM-prior}
\paragraph{Theorem~\ref{thm:ficonv-SGM-prior} (SL-VRLD with SGM prior: FI convergence)}\textit{
    Let $\pi(\vx|\vy)\propto \ell(\vy|\vx)p(\vx)$ be the posterior with the likelihood $\ell(\vy|\vx)\propto e^{-f(\vx)}$ and the prior $p(\vx)\propto e^{-h(\vx)}$. Suppose the likelihood potential \(f\) satisfies Assumptions~\ref{ass:sfo} and~\ref{ass:stochastic-potential-lipschitz} with Lipschitz constant $L_f$, the prior potential \(h\) satisfies Assumptions~\ref{ass:potential-lipschitz} and~\ref{ass:smoothing-err-bounded} with Lipschitz constant $L_h$, and the SGM satisfies Assumptions~\ref{ass:score-network} with decreasing error $\varepsilon_{\sigma_k}=\BigO(k^{-1/2})$ for $k\ge 1$. Define $L_\pi = L_f + L_h$. Let $(\mu_t)_{t\geq 0}$ denote the law of the interpolation generated by~(\ref{def:SGM-prior-interpolation}) with the estimator~(\ref{def:sl-vrld:estimator}) and the schedules defined in~(\ref{def:annealing-smoothing-schedules}). For any step size $\gamma \in \left(0, \frac{1}{L_\pi\sqrt{52\phi(p,\beta)}}\right]$, where $\phi(p,\beta)= 1+\frac{2(1-p)}{1-(1-p)\beta}$, and for any $N\ge 1$, it holds that  
    \begin{align}\label{eq:sl-sgm-prior-integral-bound}
        \frac{1}{N\gamma}\int_{0}^{N\gamma} \mathrm{FI}(\mu_t || \pi)\, dt\leq \frac{2C_0}{N\gamma} + \frac{13\sigma^2}{2}\left[\frac{\frac{p}{b} + 2(1-p)(1-\beta)^2}{(1-(1-p)\beta)}\right] + \frac{63\gamma L_\pi^2 d}{1-(1-p)\beta}+\Bar{\sigma}^2 + \Bar{\varepsilon}_{\sigma}^2 + \Bar{\alpha}^2,
    \end{align}where 
    \begin{equation}\label{sl-sgm-prior-defs}
        \Bar{\sigma}^2 = \frac{39C_1^2}{2N}\sum_{k=0}^{N-1}\sigma_k^2, \quad \Bar{\varepsilon}^2_{\sigma} = \frac{39}{2N}\sum_{k=0}^{N-1}\varepsilon_{\sigma_k}^2, \quad \Bar{\alpha}^2 = \frac{39C_2^2}{2
        N}\sum_{k=0}^{N-1}\frac{(\alpha_k - 1)^2}{\sigma_k^2},
    \end{equation}}
    \textit{
    \begin{equation}
        C_0 = \mathrm{KL}(\mu_0\|\pi) + \frac{13\gamma}{4(1-(1-p)\beta)}\E[\|\nabla f(\vx_0) - \vg_0\|^2],
    \end{equation}and $C_0,C_1,C_2$ are positive constants. Furthermore, let
    \begin{equation}
    \gamma=\left(\frac{d}{\sigma_m^2N^2L_\pi^2}\right)^{1/3},\qquad 
    p=1-\beta=\left(\frac{d^2L_\pi^2}{\sigma_m^4N}\right)^{1/3},\qquad 
    b=\Bigl\lceil \tfrac{1}{p}\Bigr\rceil,
    \quad \text{where } \sigma_m = \max\{\sigma,1\},
    \end{equation}and let the initial parameters satisfy $\sigma_0\ge\sigma_{\mathrm{min}}$, $\varepsilon_{\sigma_0}>0$, $\alpha_0\ge 1$, and $\sigma_{\mathrm{min}}> 0$ is the minimum noise level. Then, the time-averaged law $\bar{\mu}_{N\gamma}\coloneqq \frac{1}{N\gamma}\int_{0}^{N\gamma}\mu_t\,dt$ satisfies $\mathrm{FI}(\bar{\mu}_{N\gamma}\|\pi)\le \varepsilon$ with ~$\BigO\!\left(\frac{\sigma_m^{2}L_\pi^{2}d^{2}}{\varepsilon^{3}}\right)$ iterations and $\BigO(1)$ gradient computations per iteration.}

\textit{Proof.} Similar to the proof in Appendix~\ref{proof:ml-vrld:ficonv-SGM-prior}, we start by using Lemma \ref{lemma:sampling_first},
\begin{align} \label{eq:sl-prior-main-inequality}
    \frac{d}{dt}\mathrm{KL}(\mu_t \| \pi)
&\leq
-\frac{3}{4}\mathrm{FI}(\mu_t \| \pi) + 3L_{\pi}^2\Delta_k + 3e_k^2 + 3(\sigma_kC + \varepsilon_{\sigma_k} + (\alpha_k -1)C\sigma_k^{-1})^2.
\end{align}
From the analysis of Theorem~\ref{thm:sl-vrld-ficonv} in Appendix \ref{proof:sl-vrld-ficonv}, we have the upper bound~(\ref{eq:sl-vrld-ek-upper-bound}) on $e_k^2$
\begin{align}\label{eq:sl-prior-bound-on-ek}
    e_k^2 &\leq -\frac{1}{1-(1-p)\beta}(e_{k+1}^2-e_k^2) + 
    \frac{2(1-p)L_\pi^2}{1-(1-p)\beta}\Delta_k +
    \left[\frac{p}{b}+2(1-p)(1-\beta)^2\right]\frac{\sigma^2}{1-(1-p)\beta},
\end{align}where we use the fact that $L_\pi\ge L_f$. Plugging this bound into~(\ref{eq:sl-prior-main-inequality}), we obtain
\begin{align}
\label{eq:sl-sgm-prior-main-inequality-v2}
\nonumber
    \frac{d}{dt}\mathrm{KL}(\mu_t \| \pi) 
    \leq
    &-\frac{3}{4}\mathrm{FI}(\mu_t \| \pi)
    +
    3L_{\pi}^2\phi(p, \beta)\Delta_k
    -
    \frac{3}{1-(1-p)\beta}(e_{k+1}^2-e_k^2) \\
    &+
    \frac{3\sigma^2}{1-(1-p)\beta}\left[\frac{p}{b}+2(1-p)(1-\beta)^2\right]
    +
    3(\sigma_kC + \varepsilon_{\sigma_k} + (\alpha_k -1)C\sigma_k^{-1})^2,
\end{align} where $\phi(p, \beta) \coloneqq 1 + \frac{2(1-p)}{1-(1-p)\beta}$. Following the analysis of Theorem~\ref{thm:ficonv-SGM-prior}a, from the inequality~(\ref{eq:ml-prior-delta-k-intermediate-step}), we have the following upper bound on $\Delta_k$:
\begin{align*}
    \Delta_k 
    \leq 
    4\gamma^2e_k^2
    +
    4\gamma^2\E\left[\|\nabla \log \pi(\vx_t | \vy)\|^2\right]
    +
    4\gamma^2L_{\pi}^2\Delta_k
    +
    4\gamma^2(\sigma_kC_1 + \varepsilon_{\sigma_k} + (\alpha_k -1)C_2\sigma_k^{-1})^2
    +
    2\gamma d.
\end{align*}
Plugging in the upper~(\ref{eq:sl-prior-bound-on-ek}) on $e_k^2$ and rearranging the terms yields
\begin{align*}
    \left(1 - 4\gamma^2L_{\pi}^2\phi(p, \gamma) \right)\Delta_k
    \leq 
    &- \frac{4\gamma^2}{1-(1-p)\beta}(e_{k+1}^2-e_k^2)    
    +
    \frac{4\gamma^2\sigma^2}{1-(1-p)\beta}\left[\frac{p}{b}+2(1-p)(1-\beta)^2\right] \\
    &+
    4\gamma^2\E\left[\|\nabla \log \pi(\vx_t | \vy)\|^2\right]
    +
    4\gamma^2(\sigma_kC + \varepsilon_{\sigma_k} + (\alpha_k -1)C\sigma_k^{-1})^2
    +
    2\gamma d.
\end{align*}
Setting $\gamma$ such that $\gamma \le \frac{1}{L_\pi\sqrt{52\,\phi(p,\beta)}}$ and dividing both sides by $\frac{12}{13}$, we obtain
\begin{align}\label{eq:sl-sgm-prior-delta_k-bound}
    \nonumber
    \Delta_k
    \leq 
    &- \frac{13\gamma^2}{3\left(1-(1-p)\beta\right)}(e_{k+1}^2-e_k^2)    
    +
    \frac{13\gamma^2\sigma^2}{3\left(1-(1-p)\beta\right)}\left[\frac{p}{b}+2(1-p)(1-\beta)^2\right] \\
    &+
    \frac{13}{3}\gamma^2\E\left[\|\nabla \log p(\vx_t | \vy)\|^2\right]
    +
    \frac{13}{3}\gamma^2(\sigma_kC + \varepsilon_{\sigma_k} + (\alpha_k -1)C\sigma_k^{-1})^2
    +
    \frac{13}{6}\gamma d.
\end{align}Substituting the above bound into~(\ref{eq:sl-sgm-prior-main-inequality-v2}), applying Lemma~\ref{lemma:sampling_final}, using the step-size condition $\gamma \le \frac{1}{L_\pi\sqrt{52\phi(p,\beta)}}$, and using Young's inequality for $(\sigma_kC_1 + \varepsilon_{\sigma_k} + (\alpha_k - 1)C_2\sigma_k^{-1})^2$ yields
\begin{align}\label{eq:sl-vrld-sgm-prior-kl-bound-for-id}
\nonumber
    \frac{d}{dt}\mathrm{KL}(\mu_t \| \pi)
    \le
    &-\frac{1}{2}\,\mathrm{FI}(\mu_t \| \pi) 
    -
    \frac{3+13\gamma^2L_{\pi}^2\phi(p, \beta)}{1-(1-p)\beta}(e_{k+1}^2 - e_k^2) + \frac{13\sigma^2}{4(1-(1-p)\beta)}\left[\frac{p}{b}+2(1-p)(1-\beta)^2\right]\\
    &+ \frac{21}{2}\gamma L_{\pi}^2 d \phi(p, \beta) + \frac{39}{4}\sigma_k^2C_1^2 + \frac{39}{4}\varepsilon_{\sigma_k}^2 + \frac{39C_2^2(\alpha_k-1)^{2}}{4\sigma_k^2}
\end{align}
Next, we define 
\(
    \mathcal{L}_k \coloneqq \mathrm{KL}(\mu_{k\gamma} \| \pi) + \frac{\gamma\bigl(3 + 13\gamma^2L_{\pi}^2\phi(p, \beta)\bigr)}{1-(1-p)\beta}e_k^2
\)
and integrate both sides over $t \in [k\gamma,(k+1)\gamma]$.
\begin{align*}
    \mathcal{L}_{k+1} - \mathcal{L}_k
    \le
    &-\frac{1}{2}\int_{k\gamma}^{(k+1)\gamma} \mathrm{FI}(\mu_t \| \pi)\,dt
    +\frac{13\gamma\sigma^2}{4\bigl(1-(1-p)\beta\bigr)}\left[\frac{p}{b}+2(1-p)(1-\beta)^2\right]
    + \frac{21}{2}\gamma L_{\pi}^2d\phi(p, \beta) \\
    & + \frac{39}{4}\gamma\sigma_k^2C_1^2 + \frac{39}{4}\gamma\varepsilon_{\sigma_k}^2 + \frac{39\gamma C_2^2(\alpha_k-1)^{2}}{4\sigma_k^2}.
\end{align*}
Note that $\mathcal{L}_0 = \mathrm{KL}(\mu_0 \| \pi) + \frac{\gamma\bigl(3 + 13\gamma^2L_{\pi}^2\phi(p, \beta)\bigr)}{1-(1-p)\beta}e_0^2 \leq \mathrm{KL}(\mu_0 \| \pi) + \frac{13\gamma e_0^2}{4\left(1-(1-p)\beta\right)} \eqqcolon C_0 $. Then, iterating the above inequality for $k=0,1,\dots,N-1$, multiplying both sides by $\frac{2}{N\gamma}$, and rearranging the terms, we obtain
\begin{align*}
    \frac{1}{N\gamma}\int_{0}^{N\gamma}\mathrm{FI}(\mu_t \| \pi)\,dt
    \le
    &\frac{2C_0}{N\gamma} + \frac{13\sigma^2}{2\bigl(1-(1-p)\beta\bigr)}\left[\frac{p}{b}+(1-p)(1-\beta)^2\right] + 
    \frac{63\gamma L_{\pi}^2d}{1-(1-p)\beta} + \bar{\sigma}^2 + \bar{\varepsilon}_{\sigma}^2 + \bar{\alpha}^2,
\end{align*} where $\bar{\sigma}^2$, $\bar{\varepsilon}_{\sigma}^2$, and $\bar{\alpha}^2$
are defined in (\ref{sl-sgm-prior-defs}). Note that we use $\phi(p,\beta)= 1 + \frac{2(1-p)}{1-(1-p)\beta} \le 1 + \frac{2}{1-(1-p)\beta} \le \frac{3}{1-(1-p)\beta}$. Since the definitions of $\gamma$, $p$, $b$, and $\sigma_m$ are the same as in Theorem~\ref{thm:sl-vrld-ficonv}, we have
\begin{align*}
    \mathrm{FI}(\bar{\mu}_{N\gamma} \| \pi)\le \frac{1}{N\gamma}\int_{0}^{N\gamma}\mathrm{FI}(\mu_t \| \pi)\,dt
    \le
    &~\BigO\left(\frac{\sigma_m^{2/3}L_\pi^{2/3}d^{2/3}}{N^{1/3}}\right) + \bar{\sigma}^2 + \bar{\varepsilon}_{\sigma}^2 + \bar{\alpha}^2.
\end{align*}We note that the annealing and noise schedules~(\ref{def:annealing-smoothing-schedules}), and the SGM estimation error are the same as in Theorem~\ref{thm:ficonv-SGM-prior}a. Therefore, we set $\sigma_{\mathrm{min}}=\BigO(N^{-1/6})$ and follow the same steps in~(\ref{thm3:sigma-bound-bias}),~(\ref{thm3:alpha-bound-bias}), and~(\ref{thm3:err-bound-bias}) to obtain 
\begin{equation*}
    \bar{\sigma}^2 + \bar{\varepsilon}_{\sigma}^2 + \bar{\alpha}^2 = \BigO\left(\frac{1}{N^{1/3}}\right). 
\end{equation*}This implies $\mathrm{FI}(\bar{\mu}_{N\gamma}\|\pi) = \BigO\left(\frac{\sigma_m^{2/3}L_\pi^{2/3}d^{2/3}}{N^{1/3}}\right)$, that is, $\mathrm{FI}(\bar{\mu}_{N\gamma}\|\pi) \le \varepsilon$ is achieved with $\BigO\left(\frac{\sigma_m^2L_\pi^2 d^2}{\varepsilon^3}\right)$ iterations and $pb+(1-p) = \BigO(1)$ gradient computations per iteration. 
\qedbox

\subsubsection{Proof of Corollary~\ref{proofs:theorem-and-lemmas}.2 (ML-VRLD with SGM prior: squared TV distance convergence)}

\paragraph{Corollary~\ref{proofs:theorem-and-lemmas}.2 (ML-VRLD with SGM prior: squared TV distance convergence)}\label{thm:ml-vrld-tvconv-SGM-prior}
\textit{
    Let $\pi(\vx|\vy)\propto \ell(\vy|\vx)p(\vx)$ be the posterior with the likelihood $\ell(\vy|\vx)\propto e^{-f(\vx)}$ and the prior $p(\vx)\propto e^{-h(\vx)}$. Suppose that the likelihood potential \(f\) satisfies Assumptions~\ref{ass:potential-lipschitz} and~\ref{ass:sfo} with Lipschitz constant $L_f$, the prior potential \(h\) satisfies Assumption~\ref{ass:potential-lipschitz} and~\ref{ass:smoothing-err-bounded} with Lipschitz constant $L_h$, the target posterior $\pi$ satisfies Assumption~\ref{ass:poincare-inequality}, and the SGM satisfies Assumption~\ref{ass:score-network} with decreasing error $\varepsilon_{\sigma_k} = \BigO(k^{-1/2})$ for $k\ge 1$. Define $L_\pi= L_f + L_h$. Let $(\mu_t)_{t\geq 0}$ denote the law of the interpolation generated by~(\ref{def:SGM-prior-interpolation}) with the estimator~(\ref{def:ml-vrld:estimator}) and the schedules defined in~(\ref{def:annealing-smoothing-schedules}). For any step size $\gamma \in \left(0, \frac{1}{L_\pi\sqrt{52\phi(p,\beta)}}\right]$, where $\phi(p,\beta)= 1+\frac{4(1-p)\beta^2}{(1-(1-p)\beta)^2}$, and for any $N\ge 1$, it holds that
    \begin{align}
        \|\Bar{\mu}_{N\gamma} - \pi\|_{\mathrm{TV}}^2\leq \frac{8C_0C_{\mathrm{PI}}}{N\gamma} + 52C_{\mathrm{PI}}\sigma^2\left[\frac{\frac{p}{b} + (1-p)(1-\beta)^2}{1-(1-p)\beta}\right] + \frac{336C_{\mathrm{PI}}\gamma L_\pi^2 d}{(1-(1-p)\beta)^2}+4C_{\mathrm{PI}}(\Bar{\sigma}^2 + \Bar{\varepsilon}_{\sigma}^2 + \Bar{\alpha}^2),
    \end{align}where 
    \begin{equation}
        \Bar{\sigma}^2 = \frac{39C_1^2}{2N}\sum_{k=1}^N\sigma_k^2, \quad \Bar{\varepsilon}^2_{\sigma} = \frac{39}{2N}\sum_{k=1}^N\varepsilon_{\sigma_k}^2, \quad \Bar{\alpha}^2 = \frac{39C_2^2}{2N}\sum_{k=1}^N\frac{(\alpha_k - 1)^2}{\sigma_k^2},
    \end{equation}}
    \textit{
    \begin{equation}
        C_0 = \mathrm{KL}(\mu_0\|\pi) + \frac{13\gamma}{2(1-(1-p)\beta)}\E[\|\nabla f(\vx_0) - \vg_0\|^2],
    \end{equation}and $C_{\mathrm{PI}},C_0,C_1,C_2$ are positive constants. Furthermore, let
    \[
    \gamma=\frac{\sqrt d}{\sigma_m N^{3/4}\sqrt{L_\pi}},\qquad 
    p=1-\beta=\frac{\sqrt{L_\pi d}}{\sigma_m N^{1/4}},\qquad 
    b=\Bigl\lceil \tfrac{1}{p}\Bigr\rceil,
    \quad \text{where } \sigma_m = \max\{\sigma,1\},
    \]
    and let the initial parameters satisfy $\sigma_0\ge\sigma_{\mathrm{min}}$, $\varepsilon_{\sigma_0}>0$, $\alpha_0\ge 1$, and $\sigma_{\mathrm{min}}> 0$ is the minimum noise level. Then, the time-averaged law $\bar{\mu}_{N\gamma}=\frac{1}{N\gamma}\int_{0}^{N\gamma}\mu_t\,dt$ achieves $\|\mu_{N\gamma} - \pi\|_{\mathrm{TV}}^2\le \varepsilon$ with$~ \BigO\!\left(\frac{\sigma_m^{4}C_{\mathrm{PI}}^4L_\pi^{2}d^{2}}{\varepsilon^{4}}\right)$ iterations and $\BigO(1)$ gradient computations per iteration.}

\textit{Proof.} 
Applying Theorem~\ref{thm:ficonv-SGM-prior} from Appendix~\ref{proof:ml-vrld:ficonv-SGM-prior}, we obtain
\begin{align*}
    \mathrm{FI}(\bar{\mu}_{N\gamma}\|\pi) \leq
        \frac{1}{N\gamma}\int_{0}^{N\gamma} \mathrm{FI}(\mu_t || \pi)\, dt\leq \frac{2C_0}{N\gamma} + 13\sigma^2\frac{\left(\frac{p}{b} + (1-p)(1-\beta)^2\right)}{(1-(1-p)\beta)} + \frac{84\gamma L_\pi^2 d}{(1-(1-p)\beta)^2}+\Bar{\sigma}^2 + \Bar{\varepsilon}_{\sigma}^2 + \Bar{\alpha}^2.
    \end{align*}
Using Lemma \ref{lemma:tv_lemma} with Assumption~\ref{ass:poincare-inequality} yields
\begin{align*}
    \|\Bar{\mu}_{N\gamma} - \pi\|_{\mathrm{TV}}^2\leq 
    \frac{8C_0C_{\mathrm{PI}}}{N\gamma} + 52C_{\mathrm{PI}}\sigma^2\frac{\left(\frac{p}{b} + (1-p)(1-\beta)^2\right)}{(1-(1-p)\beta)} + 
    \frac{336C_{\mathrm{PI}}\gamma L_\pi^2 d}{(1-(1-p)\beta)^2} + 4C_{\mathrm{PI}}\left(\Bar{\sigma}^2 + \Bar{\varepsilon}_{\sigma}^2 + \Bar{\alpha}^2\right)
\end{align*}
Using the definitions of $\gamma$, $p$, $\beta$, $\sigma_k$, $\alpha_k$, and $\varepsilon_{\sigma_k}$ from the corollary statement and following the steps in Appendix~\ref{proof:ml-vrld:ficonv-SGM-prior}, we prove that an iteration complexity of order $\BigO\!\left(\frac{C_{\mathrm{PI}}^{4}\sigma_m^{4}L_\pi^{2} d^{2}}{\varepsilon^{4}}\right)$ suffices to ensure $\|\bar{\mu}_{N\gamma} - \pi\|_{\mathrm{TV}}^{2} \le \varepsilon$ with $pb + (1-p) = \BigO(1)$ gradient computations per iteration.
\qedbox

\subsubsection{Proof of Corollary~\ref{proofs:theorem-and-lemmas}.3 (SL-VRLD with SGM prior: squared TV distance convergence)}

\paragraph{Corollary~\ref{proofs:theorem-and-lemmas}.3 (SL-VRLD with SGM prior: squared TV distance convergence)}\label{thm:sl-vrld-tvconv-SGM-prior}
\textit{
    Let $\pi(\vx|\vy)\propto \ell(\vy|\vx)p(\vx)$ be the posterior with the likelihood $\ell(\vy|\vx)\propto e^{-f(\vx)}$ and the prior $p(\vx)\propto e^{-h(\vx)}$. Suppose that the likelihood potential \(f\) satisfies Assumptions~\ref{ass:sfo} and~\ref{ass:stochastic-potential-lipschitz} with Lipschitz constant $L_f$, the prior potential \(h\) satisfies Assumption~\ref{ass:potential-lipschitz} and~\ref{ass:smoothing-err-bounded} with Lipschitz constant $L_h$, the target posterior $\pi$ satisfies Assumption~\ref{ass:poincare-inequality}, and the SGM satisfies Assumption~\ref{ass:score-network} with decreasing error $\varepsilon_{\sigma_k} = \BigO(k^{-1/2})$ for $k\ge 1$. Define $L_\pi= L_f + L_h$. Let $(\mu_t)_{t\geq 0}$ denote the law of the interpolation generated by~(\ref{def:SGM-prior-interpolation}) with the estimator~(\ref{def:sl-vrld:estimator}) and the schedules defined in~(\ref{def:annealing-smoothing-schedules}). For any step size $\gamma \in \left(0, \frac{1}{L_\pi\sqrt{52\phi(p,\beta)}}\right]$, where $\phi(p,\beta)= 1+\frac{2(1-p)}{1-(1-p)\beta}$, and for any $N\ge 1$, it holds that 
    \begin{align}
        \|\mu_{N\gamma} - \pi\|_{\mathrm{TV}}^2\leq \frac{8C_0C_{\mathrm{PI}}}{N\gamma} + 26C_{\mathrm{PI}}\sigma^2\left[\frac{\frac{p}{b} + 2(1-p)(1-\beta)^2}{1-(1-p)\beta}\right] + \frac{252C_{\mathrm{PI}}\gamma L_\pi^2 d}{1-(1-p)\beta}+4C_{\mathrm{PI}}(\Bar{\sigma}^2 + \Bar{\varepsilon}_{\sigma}^2 + \Bar{\alpha}^2),
    \end{align}where 
    \begin{equation}
        \Bar{\sigma}^2 =\frac{39C_1^2}{2N}\sum_{k=1}^N\sigma_k^2, \quad \Bar{\varepsilon}^2_{\sigma} =\frac{39}{2N}\sum_{k=1}^N\varepsilon_{\sigma_k}^2, \quad \Bar{\alpha}^2 = \frac{39C_2^2}{
        2N}\sum_{k=1}^N\frac{(\alpha_k - 1)^2}{\sigma_k^2},
    \end{equation}}
    \textit{
    \begin{equation}
        C_0 = \mathrm{KL}(\mu_0\|\pi) + \frac{13\gamma}{4(1-(1-p)\beta)^2}\E[\|\nabla f(\vx_0) - \vg_0\|^2],
    \end{equation}and $C_\mathrm{PI},C_0,C_1,C_2$ are positive constants. Furthermore, let
    \[
    \gamma=\left(\frac{d}{\sigma_m^2N^2L_\pi^2}\right)^{1/3},\qquad 
    p=1-\beta=\left(\frac{d^2L_\pi^2}{\sigma_m^4 N}\right)^{1/3},\qquad 
    b=\Bigl\lceil \tfrac{1}{p}\Bigr\rceil,
    \quad \text{where } \sigma_m = \max\{\sigma,1\},
    \]
    and let the initial parameters satisfy $\sigma_0\ge\sigma_{\mathrm{min}}$, $\varepsilon_{\sigma_0}>0$, $\alpha_0\ge 1$, and $\sigma_{\mathrm{min}}> 0$ is the minimum noise level. Then, the time-averaged law $\bar{\mu}_{N\gamma}= \frac{1}{N\gamma}\int_{0}^{N\gamma}\mu_t\,dt$ achieves $\|\Bar{\mu}_{N\gamma} - \pi\|_{\mathrm{TV}}^2\le \varepsilon$ with$~ \BigO\!\left(\frac{C_{\mathrm{PI}}^3\sigma_m^{2}L_\pi^{2}d^{2}}{\varepsilon^{3}}\right)$ iterations and $\BigO(1)$ gradient computations per iteration.}

\textit{Proof.} Applying Theorem~\ref{thm:ficonv-SGM-prior}b, we have
\begin{align*}
    \mathrm{FI}(\bar{\mu}_{N\gamma}\|\pi) \leq
        \frac{1}{N\gamma}\int_{0}^{N\gamma} \mathrm{FI}(\mu_t || \pi)\, dt\leq \frac{2C_0}{N\gamma} + \frac{13\sigma^2\left(\frac{p}{b} + 2(1-p)(1-\beta)^2\right)}{2(1-(1-p)\beta)} + \frac{63\gamma L_\pi^2 d}{1-(1-p)\beta}+\Bar{\sigma}^2 + \Bar{\varepsilon}_{\sigma}^2 + \Bar{\alpha}^2.
        \end{align*}
Using Lemma \ref{lemma:tv_lemma} with Assumption~\ref{ass:poincare-inequality} yields
\begin{align*}
    \|\Bar{\mu}_{N\gamma} - \pi\|_{\mathrm{TV}}^2\leq 
    \frac{8C_0C_{\mathrm{PI}}}{N\gamma} + \frac{26C_{\mathrm{PI}}\sigma^2\left(\frac{p}{b} + (1-p)(1-\beta)^2\right)}{(1-(1-p)\beta)} + 
    \frac{252C_{\mathrm{PI}}\gamma L_\pi^2 d}{1-(1-p)\beta} + 4C_{\mathrm{PI}}\left(\Bar{\sigma}^2 + \Bar{\varepsilon}_{\sigma}^2 + \Bar{\alpha}^2\right)
\end{align*}
Using the definitions of $\gamma$, $p$, $\beta$, $\sigma_k$, $\alpha_k$, and $\varepsilon_{\sigma_k}$ from the corollary statement and following the steps in Appendix~\ref{proof:sl-vrld:ficonv-SGM-prior}, we prove that an iteration complexity of order $\BigO\!\left(\frac{C_{\mathrm{PI}}^{3}\sigma_m^{2}L_\pi^{2} d^{2}}{\varepsilon^{3}}\right)$ suffices to ensure $\|\bar{\mu}_{N\gamma} - \pi\|_{\mathrm{TV}}^{2} \le \varepsilon$ with $pb + (1-p) = \BigO(1)$ gradient computations per iteration.
\qedbox 

\subsubsection{Proof of Theorem~\ref{thm:weakconv-SGM-prior}\textit{a} (ML-VRLD with SGM prior: weak convergence)}\label{proof:ml-vrld:weakconv-SGM-prior}
\paragraph{Theorem~\ref{thm:weakconv-SGM-prior}\textit{a} (ML-VRLD with SGM prior: weak convergence)}
\textit{
    Let $\pi(\vx|\vy)\propto \ell(\vy|\vx)p(\vx)$ be the posterior with the likelihood $\ell(\vy|\vx)\propto e^{-f(\vx)}$ and the prior $p(\vx)\propto e^{-h(\vx)}$. Suppose that the likelihood potential \(f\) satisfies Assumptions~\ref{ass:potential-lipschitz} and~\ref{ass:sfo} with Lipschitz constant $L_f$, the prior potential \(h\) satisfies Assumption~\ref{ass:potential-lipschitz} and~\ref{ass:smoothing-err-bounded} with Lipschitz constant $L_h$, and the SGM satisfies Assumption~\ref{ass:score-network} with decreasing error $\varepsilon_{\sigma_k} = \BigO(k^{-1/2})$ for
    $k\ge 1$. Let $(\mu_t)_{t\geq 0}$ denote the law of the interpolation generated by~(\ref{def:SGM-prior-interpolation}) with the estimator~(\ref{def:ml-vrld:estimator}) and the schedules defined in~(\ref{def:annealing-smoothing-schedules}). Define the time-varying parameters as follows
        \[
        \gamma_k=\frac{C_\gamma}{k^{3/2}},\qquad 
        p_k=\frac{C_p}{k^{1/2}},\qquad 
        b_k=\left\lceil \frac{1}{p_k}\right\rceil, \qquad \forall k\geq 1,
        \]
        where \(0<C_p<1\) and \(C_\gamma>0\) are numerical constants. 
        Then, the time-averaged law $\bar{\mu}_{\tau_n} = \frac{1}{\tau_n}\int_{0}^{\tau_n}\mu_t\,dt$, where $\tau_n = \sum_{k=1}^n\gamma_k$ converges weakly to $\pi$. }

\textit{Proof.} We use the same definitions of the time-varying parameters and the definition of the estimator $\vg_k$ as in the proof of Theorem~\ref{thm:ficonv-SGM-prior}\textit{a}. Therefore, the time-varying step sizes $(\gamma_k)_{k\ge 1}$ satisfy the step-size condition of Theorem~\ref{thm:ficonv-SGM-prior}\textit{a}, that is
\begin{equation*}
    \gamma \in\left(0, \frac{1}{L_\pi\sqrt{52\phi(p,\beta)}}\right], \quad \text{where} \quad \phi(p_k,\beta_k)=1 + \frac{4(1-p)\beta^2}{(1-(1-p)\beta)^2}
\end{equation*}for all $k\ge 1$. Thus, we can use the same upper bound in~(\ref{eq:ml-vrld-sgm-prior-kl-bound-for-id}) (Theorem~\ref{thm:ficonv-SGM-prior}\textit{a}; Appendix~\ref{proof:ml-vrld:ficonv-SGM-prior}) with time-varying parameters, and integrate both sides of the inequality over \([\tau_{{n-1}},\tau_{n}]\) to get 
\begin{align*}
    \mathrm{KL}(\mu_{\tau_n} \| \pi) - \mathrm{KL}(\mu_{\tau_{n-1}} \| \pi)
    \leq
    &-\frac{1}{2}\int_{\tau_{n-1}}^{\tau_n}\mathrm{FI}(\mu_t \| \pi) \,dt-
    \frac{\gamma_n\bigl(6+26\gamma_n^2L_{\pi}^2\phi(p_n, \beta_n)\bigr)}{1-(1-p_n)\beta_n}(e_{n}^2 - e_{n-1}^2)\\
    &+\frac{13\gamma_n\sigma^2}{2\bigl(1-(1-p_n)\beta_n\bigr)}\left[p_n^2+(1-p_n)(1-\beta_n)^2\right]
    +\frac{21}{2}\gamma_n^2 L_{\pi}^2d\phi(p_n, \beta_n) \\
    &+ \frac{39}{4}\gamma_n\sigma_n^2C_1^2 + \frac{39}{4}\gamma_n\varepsilon_{\sigma_n}^2 + \frac{39C_2^2\gamma_n(\alpha_n-1)^{2}}{4\sigma_n^2},
\end{align*}
where we use $\frac{p_k}{b_k} \leq p_k^2 $. Iterating the above bound, we obtain 
\begin{align}\label{eq:ml-vrld-sgm-prior-id-kl-bound}
\nonumber
\mathrm{KL}(\mu_{\tau_n} \| \pi) - \mathrm{KL}(\mu_{{0}} \| \pi)
\leq
&-\frac{1}{2}\int_{0}^{\tau_n}\mathrm{FI}(\mu_t \| \pi) \,dt 
\underbrace{- \sum_{k=1}^{n} \frac{\gamma_k\bigl(6+26\gamma_k^2L_{\pi}^2\phi(p_k, \beta_k)\bigr)}{1-(1-p_k)\beta_k}(e_{k}^2 - e_{k-1}^2)}_\textup{(I)} \notag\\
& + \underbrace{\frac{13\sigma^2}{2}\sum_{k=1}^{n} \gamma_k \left(\frac{p_k^2 + (1-p_k)(1-\beta_k)^2}{1-(1-p_k)\beta_k} \right)}_\textup{(II)} +
\underbrace{\frac{21}{2}L_{\pi}^2d \sum_{k=1}^{n} \gamma_k^2 \phi(p_k, \beta_k)}_\textup{(III)}\notag \\
&+
\underbrace{\frac{39C^2}{4} \sum_{k=1}^{n} \gamma_k\sigma_k^2}_\textup{(IV)}
+
\underbrace{\frac{39}{4} \sum_{k=1}^{n} \gamma_k \varepsilon_{\sigma_k}^2}_\textup{(V)}
+
\underbrace{\frac{39C^2}{4} \sum_{k=1}^{n} \frac{\gamma_k(\alpha_k - 1)^2}{\sigma_k^2}}_\textup{(VI)}
\end{align}
In the proof of Theorem~\ref{thm:ml-vrld-weakconv} (Appendix~\ref{proof:ml-vrld-weakconv}), we show that with the same parameters for $\gamma_k$, $p_k$, $\beta_k$, $b_k$, we have
\begin{equation}\label{ml-vrld-sgm-prior-term-i}
\mathrm{(I)} = - \sum_{k=1}^{n} \frac{\gamma_k\bigl(6+26\gamma_k^2L_{\pi}^2\phi(p_k, \beta_k)\bigr)}{1-(1-p_k)\beta_k}(e_{k}^2 - e_{k-1}^2) \leq \underbrace{\frac{\gamma_1\bigl(6+26\gamma_1^2L_{\pi}^2\phi(p_1, \beta_1)\bigr)}{1-(1-p_1)\beta_1}}_{\eqqcolon c_1}e_0^2< \infty
\end{equation}
\begin{equation}\label{ml-vrld-sgm-prior-term-ii}
\mathrm{(II)} \le
\frac{13\sigma^2}{2}\sum_{k=1}^{\infty} \gamma_k \left(\frac{p_k^2 + (1-p_k)(1-\beta_k)^2}{1-(1-p_k)\beta_k} \right) \eqqcolon S_2 < \infty
\end{equation}
\begin{equation}\label{ml-vrld-sgm-prior-term-iii}
    \mathrm{(III)} \le
\frac{21L_{\pi}^2d}{2} \sum_{k=1}^{\infty} \gamma_k^2 \phi(p_k, \beta_k) \eqqcolon S_3
< \infty
\end{equation}
We next show that the terms $\mathrm{(IV)}$, $\mathrm{(V)}$, and $\mathrm{(VI)}$ are uniformly bounded. 

To bound the bias due to the noise schedule $(\sigma_k)_{k=1}^{n}$, we proceed as
\begin{align}
\mathrm{(IV)}=\frac{39C_1^2}{4} \sum_{k=1}^{n} \gamma_k\sigma_k^2
\le
\frac{39C_1^2C_{\gamma}}{4} \sum_{k=1}^{\infty} \frac{\max\{\sigma_0^2\rho_2^{2k}, \sigma_{\mathrm{min}}^2\}}{k^{3/2}}
\overset{(*)}{\le}
\frac{39C_1^2C_\gamma \sigma_0^2}{4} \sum_{k=1}^{\infty} k^{-3/2} \eqqcolon A_1 < \infty, \quad \label{ml-vrld-sgm-prior-term-iv}
\end{align}where we use the facts that $\sigma_0 \ge \sigma_{\mathrm{min}}$ and $\rho_2<1$ to get~$(*)$.

To bound the bias due to the SGM estimation errors $(\varepsilon_{\sigma_k})_{k=1}^n$, we use the fact that $\varepsilon_{\sigma_k}\le \frac{C_\varepsilon}{k^{-1/2}}$  for some $C_\varepsilon>0$ and proceed as 
\begin{align}
\mathrm{(V)} = \frac{39}{4} \sum_{k=1}^{n} \gamma_k \varepsilon_{\sigma_k}^2
\leq \frac{39}{4} \sum_{k=1}^{\infty} \gamma_k \varepsilon_{\sigma_k}^2
\leq
\frac{39C_\gamma C_\varepsilon^2}{4} \sum_{k=1}^{\infty} k^{-5/2} \eqqcolon A_2 < \infty. \quad \label{ml-vrld-sgm-prior-term-v}
\end{align}

To bound the bias due to the annealing schedule $(\sigma_k)_{k=1}^n$, we follow
\begin{align}
    \mathrm{(VI)} = \frac{39C_2^2}{4} \sum_{k=1}^{n} \frac{\gamma_k(\alpha_k - 1)^2}{\sigma_k^2}
    &= \frac{39C_2^2C_\gamma}{4} \sum_{k=1}^\infty\frac{1}{k^{\frac{3}{2}}} \frac{\mathrm{max}\{\alpha_0\rho_1^k-1,0\}^2}{\max\{\sigma_0^2\rho_2^{2k}, \sigma_{\mathrm{min}}^2\}} \notag \\
    & \overset{(*)}{\leq} \frac{39C_2^2C_\gamma(\alpha_0^2-1)}{4\sigma_{\mathrm{min}}^2}\sum_{k=1}^\infty k^{-3/2}  \eqqcolon A_3
    < \infty, \label{ml-vrld-sgm-prior-term-vi}
\end{align}where we use the facts that $\alpha_0\ge 1$, $\rho_1<1$, $\rho_2<1$, and $\sigma_0\ge \sigma_{\mathrm{min}}>0$ to obtain~$(*)$.
Plugging the bounds~$\mathrm{(I)}$--$\mathrm{(VI)}$ into~(\ref{eq:ml-vrld-sgm-prior-id-kl-bound}), using Jensen's inequality with the convexity of $\mathrm{FI}(\cdot\|\pi)$, and rearranging the terms, we have
\begin{align}\label{eq:ml-vrld-sgm-prior-id-avg-law-fi-unique}
\mathrm{FI}(\bar \mu_{\tau_n} \| \pi) \leq
\frac{1}{\tau_n}\int_{0}^{\tau_n}\mathrm{FI}(\mu_t \| \pi)\,dt
&\le
\frac{2\mathrm{KL}(\mu_{0} \| \pi)}{\tau_n} + 
\frac{2\left(c_1e_0^2 + S_2+S_3+A_1+A_2+A_3\right)}{\tau_n},
\end{align} 
where $\bar\mu_{\tau_n} \coloneqq \frac{1}{\tau_n}\int_0^{\tau_n}\mu_t\,dt$ is the time-averaged law up to time $\tau_n=\sum_{k=1}^n\gamma_k$. 

On the other hand, if $t\in[\tau_n,\tau_{n+1}]$, integrating~(\ref{eq:ml-vrld-sgm-prior-kl-bound-for-id}) (Theorem~\ref{thm:ficonv-SGM-prior}\textit{a}; Appendix~\ref{proof:ml-vrld:ficonv-SGM-prior}) from $\tau_n$ to $t$ and removing the negative Fisher information term yields
\begin{align}
    \mathrm{KL}(\mu_{t} \| \pi) 
    &\leq
    \mathrm{KL}(\mu_{\tau_{n}} \| \pi)
    -\frac{(t-\tau_{n})\bigl(6+26\gamma_{n+1}^2L_{\pi}^2\phi(p_{n+1}, \beta_{n+1})\bigr)}{1-(1-p_{n+1})\beta_{n+1}}(e_{n+1}^2 - e_n^2)\notag\\
    &\qquad\qquad\qquad +\frac{13(t-\tau_{n})\sigma^2}{2\bigl(1-(1-p_{n+1})\beta_{n+1}\bigr)}\left[\frac{p_{n+1}}{b_{n+1}}+(1-p_{n+1})(1-\beta_{n+1})^2\right] \notag \\
    &\qquad\qquad\qquad+\frac{21}{2}(t-\tau_n)\gamma_{n+1} L_{\pi}^2d\phi(p_{n+1}, \beta_{n+1}) + \frac{39}{4}(t-\tau_n)\sigma_{n+1}^2C_1^2 + \frac{39}{4}(t-\tau_n)\varepsilon_{\sigma_{n+1}}^2 \notag \\
    &\qquad\qquad\qquad+ \frac{39C_2^2(t-\tau_n)(\alpha_{n+1}-1)^{2}}{4\sigma_{n+1}^2} \notag\\
    &\le \mathrm{KL}(\mu_{\tau_n}\|\pi) -\frac{(t-\tau_{n})\bigl(6+26\gamma_{n+1}^2L_{\pi}^2\phi(p_{n+1}, \beta_{n+1})\bigr)}{1-(1-p_{n+1})\beta_{n+1}}(e_{n+1}^2 - e_n^2) + S_2 + S_3 + A_1 + A_2 + A_3 \notag \\ 
    & \le \mathrm{KL}(\mu_0\|\pi) - \sum_{k=1}^nc_{k}(e_k^2 - e_{k-1}^2) - c'_{n+1}(e_{n+1}^2 - e_n^2)+ 2S_2 + 2S_3 + 2A_1 + 2A_2 + 2A_3, \label{kl-t-upper-bound-terms}
\end{align}where
\begin{equation*}
    c'_{n+1}\coloneqq \frac{(t-\tau_{n})\bigl(6+26\gamma_{n+1}^2L_{\pi}^2\phi(p_{n+1}, \beta_{n+1})\bigr)}{1-(1-p_{n+1})\beta_{n+1}}.
\end{equation*}As shown in~(\ref{thm2:decreasing-cn-e0}) (Theorem~\ref{thm:ml-vrld-weakconv}; Appendix~\ref{proof:ml-vrld-weakconv}), $(c_k)_{k=1}^n$ is a nonnegative and decreasing sequence. We note that $c_n > c_{n+1}\ge c'_{n+1}$. Using this property, we have 
\begin{equation*}
    -\sum_{k=1}^nc_k(e_k^2 - e_{k-1}^2) - c'_{n+1}(e_{n+1}^2 - e_n^2) = c_1 e_0^2 + \sum_{k=1}^{n-1}(c_{k+1} - c_k)e_k^2 + (c'_{n+1} - c_n)e_n^2 - c'_{n+1}e_{n+1} \le c_1e_0^2.
\end{equation*}Applying this upper bound to~(\ref{kl-t-upper-bound-terms}), we obtain
\begin{align*}
    \mathrm{KL}(\mu_{t} \| \pi) 
    &\le 
    \mathrm{KL}(\mu_{0} \| \pi) + c_1e_0^2 + 2S_2 + 2S_3 + 2A_1 + 2A_2 + 2A_3 < \infty. 
\end{align*}Note that the initial mean estimation error $e_0^2 = \E[\|\vg_0 - \nabla f(\vx_0)\|^2]$ is bounded.
This proves $\mathrm{KL}(\mu_t \| \pi)$ is uniformly bounded for all $t \ge 0$. By the convexity of the $\mathrm{KL}$ divergence, it implies that $\{\mathrm{KL(\bar\mu_{\tau_n} \| \pi)}\,|\,n\in\mathbb{N}\}$ is uniformly bounded as well. Since the sub-level sets of $\mathrm{KL}(\cdot \|\pi)$ are weakly compact, $(\bar \mu_{\tau_n})_{n\in\mathbb{N}}$ is tight. To show that $\bar \mu_{\tau_n} \rightarrow \pi$ weakly, it is sufficient to show that every cluster point of $(\bar \mu_{\tau_n})_{n\in\mathbb{N}}$ is equal to $\pi$. 

Consider a subsequence of $(\bar{\mu}_{\tau_n})_{n\in\sN}$ converging to some limit $\bar{\mu}$. As $n\rightarrow\infty$, $\tau_n \rightarrow \infty$, and by~(\ref{eq:ml-vrld-sgm-prior-id-avg-law-fi-unique}) we also have $\mathrm{FI}(\bar \mu_{\tau_n} \| \pi) \rightarrow0$. Therefore, this is still true along the subsequence. By the weak lower semi-continuity of the Fisher information along the subsequence, $\mathrm{FI}(\bar \mu_{\tau_n} \| \pi)=0$. That implies; for $\psi\coloneqq \frac{d\mu}{d\pi}$, we have $\sqrt{\psi} \in \mathrm{dom(\mathcal{E})}$ and $\mathcal{E}(\sqrt{\psi}) = 0$, where $\mathcal{E}$ denotes the Dirichlet energy (i.e. the squared $L^2(\pi)$-norm of the gradient; see Section 3 in~\citet{balasubramanian2022towards}). Since $\nabla f$ is Lipschitz by Assumption \ref{ass:potential-lipschitz}, then $\pi$ has a continuous and strictly positive density on $\mathbb{R}^d$, so $\mathcal{E}(\sqrt{\psi}) = 0$ implies that $\psi$ is a constant $\pi$ with probability 1. Hence, $\bar \mu = \pi$ as $n\rightarrow\infty$.
\qedbox 

\subsubsection{Proof of Theorem~\ref{thm:weakconv-SGM-prior}\textit{b} (SL-VRLD with SGM prior: weak convergence)}\label{proof:sl-vrld:weakconv-SGM-prior}

\paragraph{Theorem~\ref{thm:weakconv-SGM-prior}\textit{b} (SL-VRLD with SGM prior: weak convergence)}
    \textit{
    Let $\pi(\vx|\vy)\propto \ell(\vy|\vx)p(\vx)$ be the posterior with the likelihood $\ell(\vy|\vx)\propto e^{-f(\vx)}$ and the prior $p(\vx)\propto e^{-h(\vx)}$. Suppose that the likelihood potential \(f\) satisfies Assumptions~\ref{ass:sfo} and~\ref{ass:stochastic-potential-lipschitz} with Lipschitz constant $L_f$, the prior potential \(h\) satisfies Assumption~\ref{ass:potential-lipschitz} and~\ref{ass:smoothing-err-bounded} with Lipschitz constant $L_h$, and the SGM satisfies Assumption~\ref{ass:score-network} with decreasing error $\varepsilon_{\sigma_k} = \BigO(k^{-1/2})$ for
    $k\ge 1$. Let $(\mu_t)_{t\geq 0}$ denote the law of the interpolation generated by~(\ref{def:SGM-prior-interpolation}) with the estimator~(\ref{def:sl-vrld:estimator}) and the schedules defined in~(\ref{def:annealing-smoothing-schedules}). Define the time-varying parameters as follows
        \[
        \gamma_k=\frac{C_\gamma}{k^{3/2}},\qquad 
        p_k=\frac{C_p}{k^{1/2}},\qquad 
        b_k=\left\lceil \frac{1}{p_k}\right\rceil, \quad \forall k\ge 1,
        \]
        where $0<C_p <1$ and $C_\gamma >0$ are numerical constants. Then, the time-averaged law $\bar{\mu}_{\tau_n}= \frac{1}{\tau_n}\int_{0}^{\tau_n} \mu_t\,dt$, where $\tau_n= \sum_{k=1}^n\gamma_k$ converges weakly to $\pi$. }

\textit{Proof.} We use the same definitions of the time-varying parameters and the definition of the estimator $\vg_k$ as in the proof of Theorem~\ref{thm:ficonv-SGM-prior}\textit{b}. Therefore, the time-varying step sizes $(\gamma_k)_{k\ge 1}$ satisfy the step-size condition of Theorem~\ref{thm:ficonv-SGM-prior}\textit{b}, that is
\begin{equation*}
    \gamma \in\left(0, \frac{1}{L_\pi\sqrt{52\phi(p,\beta)}}\right], \quad \text{where} \quad \phi(p_k,\beta_k)=1 + \frac{4(1-p)\beta^2}{(1-(1-p)\beta)^2}
\end{equation*}for all $k\ge 1$.Thus, we can use the same upper bound in~(\ref{eq:sl-vrld-sgm-prior-kl-bound-for-id}) (Theorem~\ref{thm:ficonv-SGM-prior}\textit{b}; Appendix~\ref{proof:sl-vrld:ficonv-SGM-prior}), integrate both sides of the inequality over $[\tau_{n-1},\tau_n]$ to get
\begin{align*}
    \mathrm{KL}(\mu_{\tau_n} \| \pi) - \mathrm{KL}(\mu_{\tau_{n-1}} \| \pi)
    \leq
    &-\frac{1}{2}\int_{\tau_{n-1}}^{\tau_n}\mathrm{FI}(\mu_t \| \pi) \,dt-
    \frac{\gamma_n\bigl(3+13\gamma_n^2L_{\pi}^2\phi(p_n, \beta_n)\bigr)}{1-(1-p_n)\beta_n}(e_{n}^2 - e_{n-1}^2)\\
    &+\frac{13\gamma_n\sigma^2}{4\bigl(1-(1-p_n)\beta_n\bigr)}\left[p_n^2+2(1-p_n)(1-\beta_n)^2\right]
    +\frac{21}{2}\gamma_n^2 L_{\pi}^2d\phi(p_n, \beta_n) \\
    &+ \frac{39}{4}\gamma_n\sigma_n^2C_1^2 + \frac{39}{4}\gamma_n\varepsilon_{\sigma_n}^2 + \frac{39C_2^2\gamma_n(\alpha_n-1)^{2}}{4\sigma_n^2},
\end{align*}
where we use the fact that $b_n\ge 1/p_n$. Iterating the above bound, we obtain 
\begin{align}\label{eq:sl-vrld-sgm-prior-id-kl-bound}
\nonumber
\mathrm{KL}(\mu_{\tau_n} \| \pi) - \mathrm{KL}(\mu_{0} \| \pi)
\leq
&-\frac{1}{2}\int_{0}^{\tau_n}\mathrm{FI}(\mu_t \| \pi) \,dt
\underbrace{- \sum_{k=1}^{n} \frac{\gamma_k\bigl(3+13\gamma_k^2L_{\pi}^2\phi(p_k, \beta_k)\bigr)}{1-(1-p_k)\beta_k}(e_{k}^2 - e_{k-1}^2)}_\textup{(I)}\\
\nonumber
&+\underbrace{\frac{13\sigma^2}{4}\sum_{k=1}^{n} \gamma_k \left(\frac{p_k^2 + 2(1-p_k)(1-\beta_k)^2}{1-(1-p_k)\beta_k} \right)}_\textup{(II)}+
\underbrace{\frac{21}{2}L_{\pi}^2d \sum_{k=1}^{n} \gamma_k^2 \phi(p_k, \beta_k)}_\textup{(III)}\\
&+
\underbrace{\frac{39C_1^2}{4} \sum_{k=1}^{n} \gamma_k\sigma_k^2}_\textup{(IV)}
+
\underbrace{\frac{39}{4} \sum_{k=1}^{n} \gamma_k\varepsilon_{\sigma_k}^2}_\textup{(V)}
+
\underbrace{\frac{39C_2^2}{4} \sum_{k=1}^{n} \frac{\gamma_k(\alpha_k - 1)^2}{\sigma_k^2}}_\textup{(VI)}. 
\end{align}In the proof of Theorem~\ref{proofs:theorem-and-lemmas}.1 (Appendix~\ref{proof:sl-vrld-weakconv}), we show that with the same parameters for $\gamma_k,p_k,\beta_k, b_k$, we have
\begin{equation}\label{sl-vrld-sgm-prior-term-i}
\mathrm{(I)} = -\sum_{k=1}^{n} \frac{\gamma_k\bigl(3+13\gamma_k^2L_{\pi}^2\phi(p_k, \beta_k)\bigr)}{1-(1-p_k)\beta_k}(e_{k}^2 - e_{k-1}^2) \leq \underbrace{\frac{\gamma_1\bigl(3+13\gamma_1^2L_{\pi}^2\phi(p_1, \beta_1)\bigr)}{1-(1-p_1)\beta_1}}_{\eqqcolon c_1}e_0^2< \infty
\end{equation}
\begin{equation}\label{sl-vrld-sgm-prior-term-ii}
\mathrm{(II)} \le
\frac{13\sigma^2}{4}\sum_{k=1}^{\infty} \gamma_k \left(\frac{p_k^2 + (1-p_k)(1-\beta_k)^2}{1-(1-p_k)\beta_k} \right) \eqqcolon S_2 < \infty
\end{equation}
\begin{equation}\label{sl-vrld-sgm-prior-term-iii}
    \mathrm{(III)} \le
\frac{21L_{\pi}^2d}{2} \sum_{k=1}^{\infty} \gamma_k^2 \phi(p_k, \beta_k) \eqqcolon S_3
< \infty
\end{equation}
Following the same steps in the proof of Theorem~\ref{thm:weakconv-SGM-prior}\textit{a} (Appendix~\ref{proof:ml-vrld:weakconv-SGM-prior}), we have
\begin{align}
\mathrm{(IV)} = \frac{39C_1^2}{4} \sum_{k=1}^{n} \gamma_k\sigma_k^2
=
\frac{39C_1^2C_\gamma \sigma_0^2}{4} \sum_{k=1}^{\infty} k^{-3/2} \eqqcolon A_1 < \infty,\label{eq:sl-vrld-weak-convergence-term-iv}
\end{align}
\begin{align}
\mathrm{(V)} = \frac{39}{4} \sum_{k=1}^{n} \gamma_k \varepsilon_{\sigma_k}^2
\leq
\frac{39C_\gamma C_\varepsilon^2}{4} \sum_{k=1}^{\infty} k^{-5/2}\eqqcolon A_2 < \infty, \label{eq:sl-vrld-weak-convergence-term-v}
\end{align}
\begin{align}
    \mathrm{(VI)} = \frac{39C_2^2}{4} \sum_{k=1}^{\infty} \frac{\gamma_k(\alpha_k - 1)^2}{\sigma_k^2}
    & \leq \frac{39C_2^2C_\gamma(\alpha_0^2 - 1)}{4\sigma_{\mathrm{min}}^2}\sum_{k=1}^\infty k^{-3/2}\eqqcolon A_3
    < \infty, \label{eq:sl-vrld-weak-convergence-term-vi}
\end{align}
Applying these upper bounds to~(\ref{eq:sl-vrld-sgm-prior-id-kl-bound}), dropping the negative $\mathrm{KL}$ term, and multiplying both sides by $2/\tau_n$, we obtain
\begin{align}\label{eq:ml-vrld-sgm-prior-id-avg-law-fi}
\mathrm{FI}(\bar \mu_{\tau_n} \| \pi) \leq
\frac{1}{\tau_n}\int_{0}^{\tau_n}\mathrm{FI}(\mu_t \| \pi)\,dt
&\le
\frac{2\mathrm{KL}(\mu_{0} \| \pi)}{\tau_n} + 
\frac{2\left(c_1e_0^2 + S_2+S_3+A_1+A_2+A_3\right)}{\tau_n},
\end{align} 
where $\bar\mu_{\tau_n} \coloneqq \frac{1}{\tau_n}\int_0^{\tau_n}\mu_t\,dt$ and $\tau_n\coloneqq \sum_{k=1}^n\gamma_k$.  

On the other hand, if $t\in[\tau_n,\tau_{n+1}]$, integrating~(\ref{eq:sl-vrld-sgm-prior-kl-bound-for-id}) (Theorem~\ref{thm:ficonv-SGM-prior}\textit{b}; Appendix~\ref{proof:sl-vrld:ficonv-SGM-prior}) from $\tau_n$ to $t$ and removing the negative Fisher information term yields
\begin{align}
    \mathrm{KL}(\mu_{t} \| \pi) 
    &\leq
    \mathrm{KL}(\mu_{\tau_{n}} \| \pi)
    -\frac{(t-\tau_n)\bigl(3+13\gamma_{n+1}^2L_{\pi}^2\phi(p_{n+1}, \beta_{n+1})\bigr)}{1-(1-p_{n+1})\beta_{n+1}}(e_{n+1}^2 - e_n^2)\notag\\
    &\qquad\qquad\qquad+\frac{13(t-\tau_n)\sigma^2}{4\bigl(1-(1-p_{n+1})\beta_{n+1}\bigr)}\left[p_{n+1}^2+2(1-p_{n+1})(1-\beta_{n+1})^2\right]
    \notag\\
    &\qquad\qquad\qquad+\frac{21}{2}(t-\tau_n)\gamma_{n+1} L_{\pi}^2d\phi(p_{n+1}, \beta_{n+1})+ \frac{39}{4}(t-\tau_n)\sigma_{n+1}^2C_1^2 + \frac{39}{4}(t-\tau_n)\varepsilon_{\sigma_{n+1}}^2 \notag\\
    &\qquad\qquad\qquad+ \frac{39C_2^2(t-\tau_n)(\alpha_{n+1}-1)^{2}}{4\sigma_{n+1}^2}.\notag\\
    &\leq
    \mathrm{KL}(\mu_{\tau_{n}} \| \pi)
    -\frac{(t-\tau_n)\bigl(3+13\gamma_{n+1}^2L_{\pi}^2\phi(p_{n+1}, \beta_{n+1})\bigr)}{1-(1-p_{n+1})\beta_{n+1}}(e_{n+1}^2 - e_n^2) + S_2 + S_3 + A_1 + A_2 + A_3\notag \\
    &\leq
    \mathrm{KL}(\mu_0 \| \pi)
    -\sum_{k=1}^nc_k(e_k^2 - e_{k-1}^2) - c'_{n+1}(e_{n+1}^2-e_n^2) + 2S_2 + 2S_3 + 2A_1 + 2A_2 + 2A_3,\label{sl-vrld-kl-bound-lastproof}
\end{align}where 
\begin{equation*}
    c'_{n+1} \coloneqq \frac{(t-\tau_n)\bigl(3+13\gamma_{n+1}^2L_{\pi}^2\phi(p_{n+1}, \beta_{n+1})\bigr)}{1-(1-p_{n+1})\beta_{n+1}}.
\end{equation*}As shown in~(\ref{sl-vrld-weak-term-i-is-bounded}) (Theorem~\ref{proofs:theorem-and-lemmas}.1; Appendix~\ref{thm:sl-vrld-weakconv}), $(c_k)_{k=1}^n$ is a nonnegative and decreasing sequence. We note that $c_n > c_{n+1}\ge c'_{n+1}$. Using this property, we have 
\begin{equation*}
    -\sum_{k=1}^nc_k(e_k^2 - c_{k-1}^2) - c'_{n+1}(e_{n+1}^2 - e_n^2) = c_1e_0^2 + \sum_{k=1}^{n-1}(c_{k+1} - c_k)e_k^2+ (c'_{n+1} - c_n)e_n^2 - c'_{n+1}e_{n+1}\le c_1 e_0^2. 
\end{equation*}Applying this upper bound to~(\ref{sl-vrld-kl-bound-lastproof}), we obtain
\begin{equation*}
    \mathrm{KL}(\mu_t\|\pi)\le \mathrm{KL}(\mu_0 \| \pi)
    + c_1e_0^2 + 2S_2 + 2S_3 + 2A_1 + 2A_2 + 2A_3 < \infty. 
\end{equation*}Thus $\{\mathrm{KL(\mu_t \| \pi)}\,|\,t\geq 0\}$ is bounded. Following the same argument as in the final part of the proof of Theorem~\ref{thm:weakconv-SGM-prior}~\textit{a} (Appendix~\ref{proof:sl-vrld:weakconv-SGM-prior}), we conclude that $\bar{\mu}_{\tau_n}$ converges weakly to $\pi$ as $n\to\infty$. \qedbox

\section{ADDITIONAL EXPERIMENTAL DETAILS}\label{appendix:additional-experiments}

We provide additional experimental details and results for the synthetic experiments, MRI reconstruction, and sparse-angle CT reconstruction tasks. Our code is available on the \href{https://github.com/mberk-sahin/variance-reduced-sampling.git}{\textcolor{blue}{project page}.}

%\subsection{Synthetic Gaussian Mixture Model Sampling}\label{appendix:synthetic-gmm8}

\subsection{Synthetic Experiments}\label{appendix:synthetic-experiments}

\begin{figure}[t]
    \centering
    \includegraphics[width=0.95\linewidth]{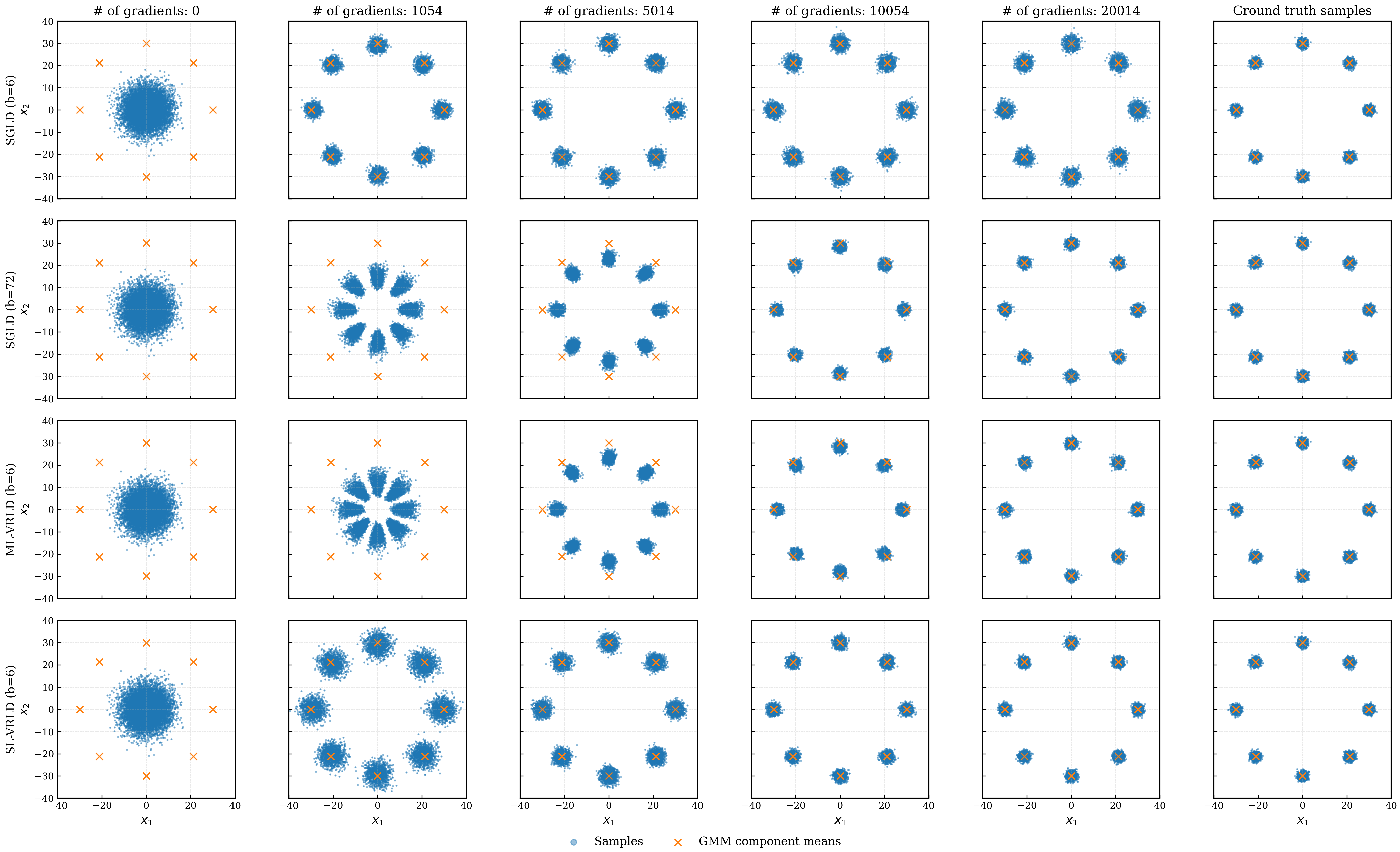}
    \caption{Visualization of samples generated by SGLD, ML-VRLD, and SL-VRLD on the 8-mode Gaussian mixture model at different numbers of gradient computations. Rows correspond to different methods and batch sizes, while columns show the evolution of the generated samples as the total number of gradient computations increases. The final column displays ground truth samples from the target GMM. SGLD with a fixed batch size ($b=6$) remains trapped in a suboptimal distribution and fails to recover the target modes. Increasing the SGLD batch size to $b=72$ enables convergence to the target distribution, but requires substantially more memory. In contrast, both ML-VRLD and SL-VRLD recover all target modes using the same fixed batch size ($b=6$), demonstrating their ability to operate under fixed memory requirements. Moreover, SL-VRLD reaches the target distribution with fewer gradient computations, consistent with its improved theoretical convergence rate.
}
    %\caption{Comparison of relative Fisher information convergence as a function of the total number of gradient computations for SGLD with batch sizes $b=6$ and $b=72$, and ML-VRLD and SL-VRLD with $b=6$. SL-VRLD achieves faster convergence than SGLD and ML-VRLD. Under a fixed batch size (memory budget) of $b=6$, ML-VRLD and SL-VRLD converge to a lower relative FI values, whereas SGLD converges to a suboptimal level. Increasing the SGLD batch size to $b=72$ enables comparable accuracy, but at the cost of substantially higher memory requirements.}
    \label{fig:gmm-visualization}
\end{figure}

%\subsection{Synthetic Inverse Problems}\label{appendix:synthetic-inverse-problems}

In this section, we give additional results and experimental details on our synthetic experiments, including computation of Fisher information, hyperparameters, and preprocessing steps. 

\textbf{Numerical Validation with GMM Sampling. }We run SGLD, ML-VRLD and SL-VRLD estimators with the following iterations:
\begin{equation}
    \vx_{(k+1)\gamma} = \vx_{k\gamma} - \gamma\vg_k + \sqrt{2\gamma}\mZ_k,
\end{equation}where $\mZ_k\sim \gN(0,I)$ and $\vg_k$ is defined according to the corresponding estimator definitions. All algorithms are run for $N=2000$ iterations to generate $10^4$ samples, which are initialized over $[-50,50]^2$ grid by sampling from a uniform distribution of $\mathcal{U}([-50,50]^2)$. We use a step size of $\gamma=0.02$ for SGLD and SL-VRLD, and $\gamma=0.0015$ for ML-VRLD. We set the momentum parameter to $\beta=0.9$ for ML-VRLD and $\beta=0.999$ for SL-VRLD, and set $p=0$ for both algorithm. We note that jointly using $\beta < 1$ and $p < 1$ does not improve the theoretical convergence rates. These parameters are combined primarily to provide a unified analysis of the underlying algorithms. Therefore, in our experiments, we set $p=0$. We approximately compute the relative Fisher information at each iteration. Specifically, we use kernel density estimation (KDE) to estimate the sampling distribution from the generated samples of each algorithm and then evaluate the relative Fisher information with respect to the target GMM using ground truth samples. In our experiments, we assign equal probability 1/8 to each mixture component and simulate stochastic gradients by adding Gaussian noise with standard deviation 30 to the gradient of the potential function.

In addition to the quantitative results in Fig.~\ref{fig:fi-convergence-gmm}, Fig.~\ref{fig:gmm-visualization} provides a qualitative comparison between the samples generated by each algorithm and ground truth samples drawn from the target GMM. SGLD with batch size $b=6$ fails to accurately recover the target distribution, exhibiting a more dispersed sample distribution than the ground truth. This behavior is consistent with the $\BigO(Ld\sigma^2/\varepsilon)$ batch size requirement of SGLD, which necessitates larger batch sizes to attain higher accuracy. Indeed, increasing the SGLD batch size to $b=72$ enables recovery of the target distribution. In contrast, both ML-VRLD and SL-VRLD closely match the ground truth distribution while operating with the same fixed batch size ($b=6$), highlighting their ability to achieve convergence under fixed memory requirements. Moreover, SL-VRLD reaches the target distribution with fewer gradient computations than SGLD, consistent with their respective computational complexities of $\BigO(L^2d^2\sigma^2/\varepsilon^3)$ and $\BigO(L^3d^3\sigma^2/\varepsilon^3)$.

\begin{figure}[t]
    \centering
    \includegraphics[width=0.5\linewidth]{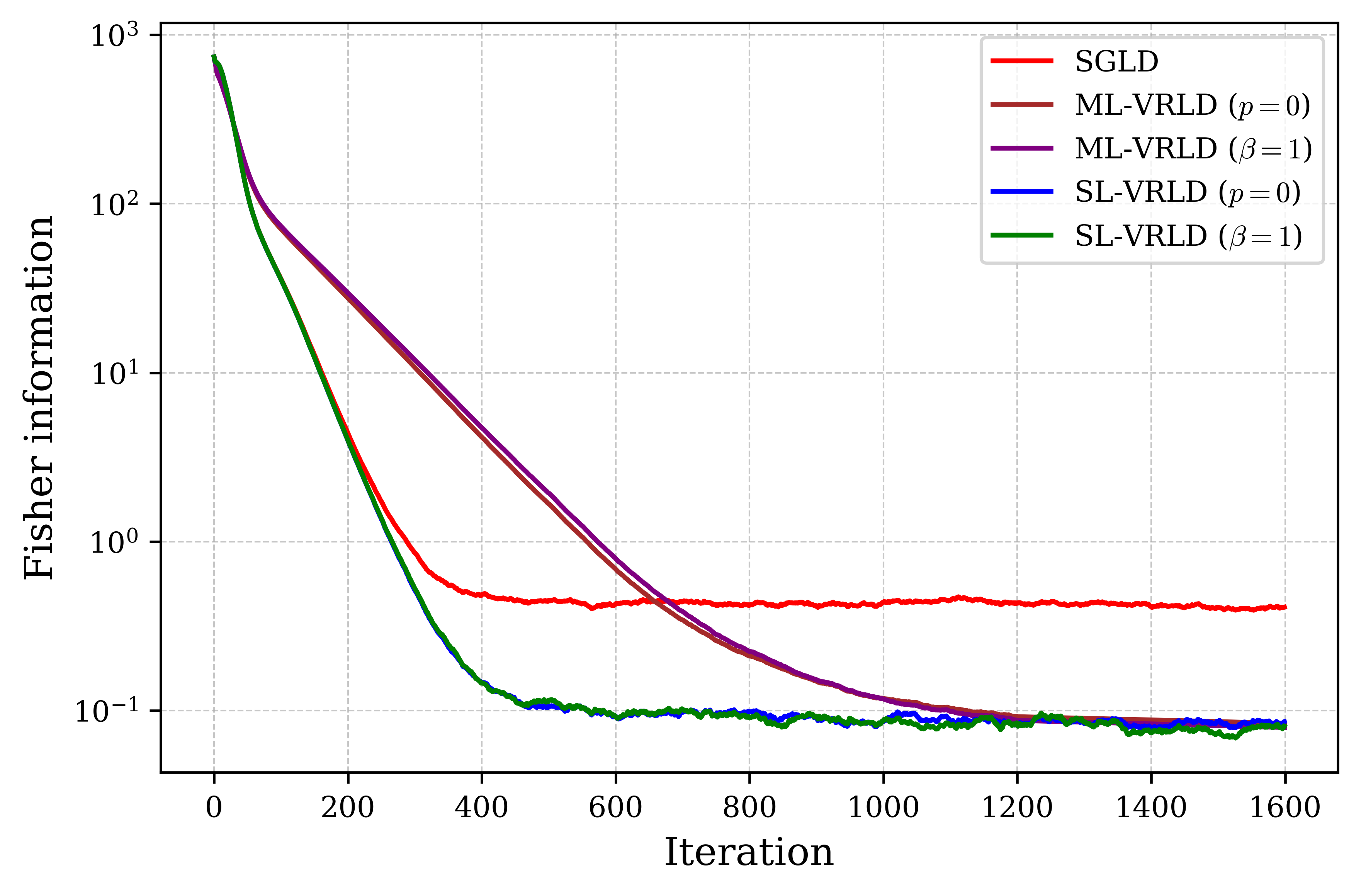}
    \caption{Results on a synthetic 2D inverse problem showing FI convergence for each method with batch size $b\!=\!6$. SGLD converges to a suboptimal solution, whereas ML-VRLD and SL-VRLD attain lower FI, consistent with our theory.}
    \label{fig:fi-convergence}
\end{figure}

\textbf{Numerical Validation with Compressed Sensing. } We run SGLD, ML-VRLD and SL-VRLD estimators with the following iterations:
\begin{equation}
    \vx_{(k+1)\gamma} = \vx_{k\gamma} - \gamma (\vg_k + \alpha_k\nabla h_{\sigma_k}(\vx_{k\gamma})) + \sqrt{2\gamma} \mZ_k,
\end{equation}where $\mZ_k\sim \gN(0,I)$ and $\vg_k$ is defined according to the estimator definition. We use the following annealing schedule defined in~(\ref{def:annealing-smoothing-schedules}):
\begin{equation}
    \alpha_k \coloneqq \max\{\alpha_0 \rho_1^k, 1\} \quad \text{and} \quad \sigma_k \coloneqq \max\{\sigma_0 \rho_2^k, \sigma_{\text{min}}\}, 
\end{equation}for $k\in\{0,\ldots,N\}$, where $\alpha_0,\sigma_0>0$ are initial values, $\sigma_{\mathrm{min}}>0$ is the minimum smoothing level, and $\rho_1,\rho_2\in(0,1)$ are decay rates. We recall that we consider a 2D Gaussian mixture prior for $p(\vx)$, thus $\nabla h_{\sigma_k}(\vx_{k\gamma})$ is analytically tractable by the following formula
\begin{equation}
    p_{\sigma_k}(\vx)\coloneqq \int_{\sR^d}p(\vz)\gN(\vx|\vz,\sigma_k^2I),
\end{equation}where $p_{\sigma_k}(\vx)\propto e^{-h_{\sigma_k}(\vx)}$. In our experiments, we set $\sigma_0=10$, $\alpha_0=10$, $\rho_2=0.975$, $\sigma_{\mathrm{min}}=0$, and define $\rho_1^k\coloneqq \sigma_k^2$. To simulate SGM approximation error, we assume $\|\gS_{\theta}(\vx, \sigma_k) + \nabla h_{\sigma_k} (\vx)\|\leq \varepsilon_{\sigma_k}$ and perturb the ground-truth score $-\nabla h_{\sigma_k}(\vx)$ with additive Gaussian noise, i.e., $-\nabla h_{\sigma_k}(\vx) + \varepsilon_{err}$, $\varepsilon_{err}\sim\gN(0, I)$, followed by element-wise clipping of the noise to a maximum magnitude of $\varepsilon_{max}=2.5$. We consider a linear inverse problem $\vy = \mA\vx + \vn$ with AWGN $\vn\sim\gN(0, I)$. The measurement operator $\mA\in\sR^{1\times 2}$ is constructed by sampling its entries independently from $\gU(-1,1)$, followed by normalization with respect to its Frobenius norm. We run each method (i.e. SGLD, ML-VRLD, and SL-VRLD) with 1000 sample points initialized according to a uniform distribution $\gU([-50,50]^2)$ over $[-50,50]^2$ grid and run for $N=1600$ iterations. Moreover, we add synthetic AWGN with standard deviation $\sigma=10.0$ to the likelihood score gradient to simulate stochastic gradient estimates. To estimate the Fisher information at each iteration, we fit a two-component Gaussian mixture model (GMM) to the generated samples, thereby obtaining a parametric approximation of the underlying distribution of samples. This fitted density enables evaluation of the probability at any point on a predefined grid over $[-50,50]^2$. Then, for each intermediate sample distribution, we compute the Fisher information for each unit area. The total sum over the grid gives the approximate relative Fisher information. 

For SGLD, ML-VRLD $(p=0)$, and SL-VRLD $(p=0)$, we use a batch size of $b=6$ and a step size of $\gamma = 0.01$. The momentum parameter is set to $\beta=0.9$ for ML-VRLD $(p=0)$ and $\beta=0.99$ for SL-VRLD $(p=0)$. For ML-VRLD ($\beta=1$) and SL-VRLD ($\beta=1$), we use large and small batch sizes of 8 and 5, respectively, with large batch size computation probability $p=0.4$, and set the step size to $\gamma=0.004$. This yields an expected batch size of 6. We intentionally use different step sizes for SL-VRLD and ML-VRLD, as ML-VRLD exhibits improved performance with a smaller step size, consistent with the theoretical results in Theorems~\ref{thm:ml-vrld-ficonv} and~\ref{thm:sl-vrld-ficonv}. Each method is evaluated over 20 independent realizations of $\mathbf{A}$, with all other settings fixed. The mean performance across runs is reported in Fig.~\ref{fig:fi-convergence-synthetic-problem}.

\textbf{Statistical Validation. } We validate that ML-VRLD and SL-VRLD can estimate the posterior statistics more accurately than SGLD under the same batch size. We construct a unimodal Gaussian prior by using the sample statistics of female images in CelebA~\citep{liu2018large}. Each image is downscaled to $32\times 32$ and normalized to $[-1,1]$. The measurement operator $\mathbf{A}\in\sR^{115\times 1024}$ is linear, with entries sampled independently from $\mathcal{N}(0,0.01)$. The measurement noise in~(\ref{def:inverse_problem}) is modeled as $\mathbf{n} \sim \mathcal{N}(0,0.01)$. Similar to our numerical experiments, we use the exact smoothed score $\nabla h_{\sigma_k}(\vx)$ with the following annealing schedule parameters: $\alpha_0=1$, $\sigma_0=1$, and $\sigma_{\min}=0.015$. For each method, 1000 initial samples are drawn uniformly from the hypercube $[-50,50]^{1024}$, i.e., $\mathbf{x}_0 \sim \mathcal{U}([-50,50]^{1024})$, and each algorithm is run for $N=5000$ iterations. To simulate stochastic gradients within a mini-batch, we add AWGN with standard deviation $\sigma=30.0$ to the likelihood score.

\begin{figure}[t]
    \centering
    \includegraphics[width=0.8\linewidth]{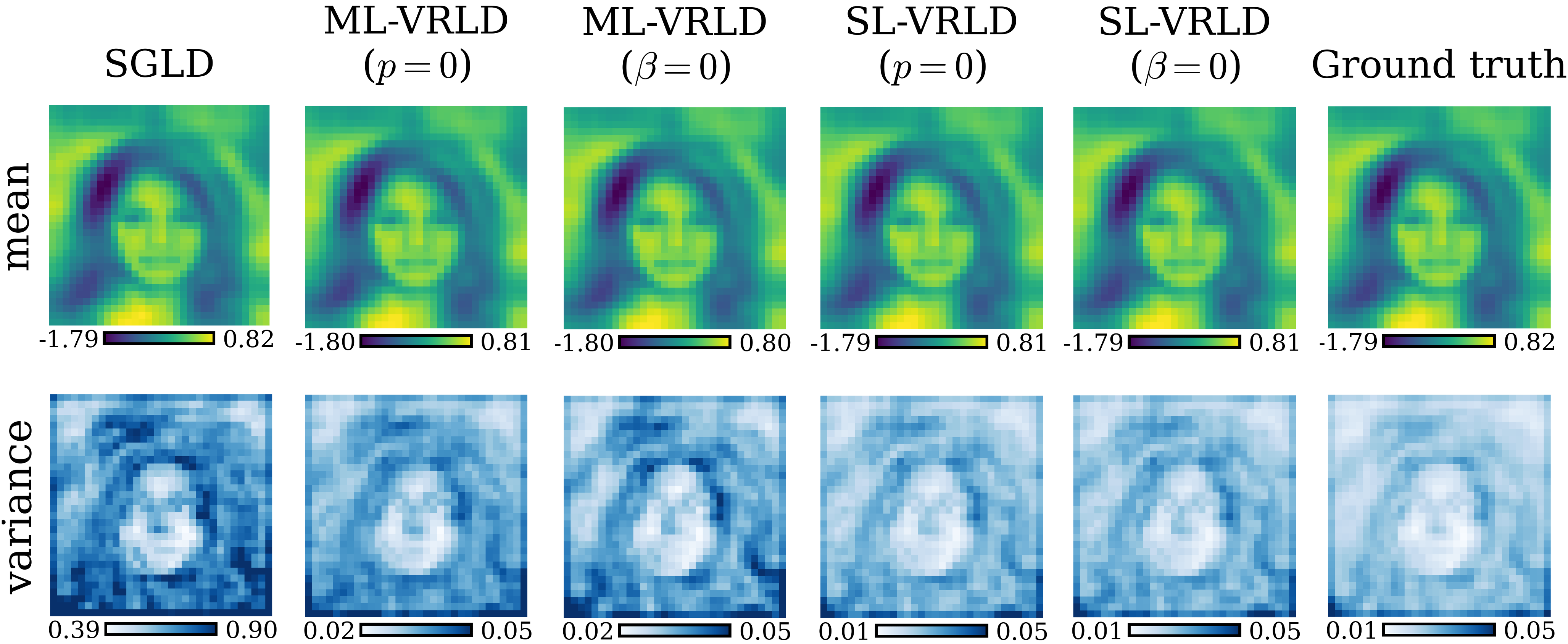}
    \caption{Results on a synthetic inverse problem used for statistical validation, where the posterior mean and variance are known in closed form. ML-VRLD and SL-VRLD yield more accurate variance estimates than SGLD under both the $\beta=1$ and $p=0$ configurations, whereas SGLD produces samples with substantially inflated variance.}
    \label{fig:statistical-validation-appendix}
\end{figure}

Although Fig.~\ref{fig:statistical-validation} reports results for the setting $p=0$, we also conduct experiments with $\beta=1$ and present those results in Fig.~\ref{fig:statistical-validation-appendix}. For SGLD, ML-VRLD $(p=0)$, and SL-VRLD $(p=0)$, we use a batch size of $b=6$ and a step size of $\gamma = 4\times 10^{-4}$. For ML-VRLD and SL-VRLD with $\beta=1$, we use large and small batch sizes $b_L=8$ and $b_S=5$, respectively, with large batch size computation probability $p=0.4$, yielding an expected batch size of $\E_p[b]=6$. The step size is set to $\gamma=10^{-4}$. As shown in Fig.~\ref{fig:statistical-validation-appendix}, SGLD produces samples with significantly higher variance than the ground truth variance. ML-VRLD and SL-VRLD substantially reduce this variance, with SL-VRLD achieving the closest match to the ground-truth posterior variance under both the $\beta=1$ and $p=0$ configurations, highlighting the effectiveness of the variance-reduction mechanism.

\subsection{Accelerated Parallel MRI Reconstruction}\label{appendix:mri-reconstruction}We consider a multi-coil MRI reconstruction task with a radial subsampling pattern. The inverse problem is defined as
\begin{equation}
    \vy = \sum_{i=1}^K \mP\mF\mS_i \vx + \vn,
\end{equation}
where $\mA_i \coloneqq \mP\mF\mS_i$ denotes the measurement operator corresponding to the $i$-th coil. 
Here, $\mP$ is a diagonal subsampling matrix, $\mF$ denotes the Fourier transform matrix, and $\mS_i$ represents the receiver coil sensitivity map. We set $\vn\sim\gN(0,\sigma_{\vn}^2I)$ such that the input signal-to-noise ratio (SNR) corresponds to 40 dB. Then the likelihood is $p(\vy|\vx)=\gN(\mA\vx, \sigma_{\vn}^2I)$, so the score of the likelihood at $k^{th}$ iteration is
\begin{equation}
    \nabla f(\vx_{k\gamma}) = -\frac{1}{\sigma_{\vn}^2}\sum_{i=1}^K\mA_i^H\left(\vy - \sum_{j=1}^K \mA_j\vx_{k\gamma}\right), 
\end{equation}where $\mA^H$ denotes the Hermitian adjoint (conjugate transpose) of $\mA$. In our experiments, for all methods, we sample a subset of coils $\xi_k\subset \{1,\ldots,K\}$ with $|\xi_k| = b$. Then, an unbiased mini-batch stochastic gradient estimate is given by
\begin{equation}
    \nabla f(\mathbf{x}_{k\gamma}, \xi_k)
    =
    -\frac{K}{b \sigma_{\vn}^2}
    \sum_{i \in \xi_k}
    \mA_i^H \left( \vy_i - \mA_i \vx_{k\gamma} \right),
\end{equation}
where $\vy_i$ denotes the measurements from the $i$-th coil, and 
$\xi_k \subset \{1,\ldots,K\}$ is a uniformly sampled subset of size $b$, i.e.,
\begin{equation}
\mathbb{P}(\xi_k = S)
=
\frac{1}{\binom{K}{b}},
\quad
\forall S \subset \{1,\ldots,K\},\ |S| = b.
\end{equation} We conduct our experiments with two different acceleration factors $R\in\{8, 12\}$, representing highly undersampled regimes used in fast and volumetric MRI acquisitions. We synthesize the multiple coils by using \texttt{sigpy.mri.birdcage\_maps} function from \texttt{SigPy}~\citep{ong2019sigpy}. We compare ML-VRLD and SL-VRLD against SGLD, used as a stochastic gradient baseline, as well as a diverse set of competing approaches spanning diffusion-based, denoiser-based, and classical regularization frameworks. Specifically, we include DPS, PnP-DM, PnP-FISTA, RED, and TV as representative state-of-the-art baselines. We use brain MRI data from the fastMRI dataset~\citep{zbontar1811fastmri} to train the denoiser and diffusion models used in the baseline methods, and to construct the test set. The training and test sets consist of 70,684 and 100 slices, respectively, where each slice has a shape of $256\times 256$. For ML-VRLD, SL-VRLD, and SGLD methods, we use the annealing schedule defined in~(\ref{def:annealing-smoothing-schedules}) with the following hyperparameters: $\alpha_0=10^4$, $\sigma_0=348$, $\rho_2 = 0.99$, $\sigma_{\mathrm{min}}=0.01$, and define $\rho_1^k \coloneqq \sigma_k^2$. These hyperparameters are selected based on dataset-specific characteristics, following the guidelines provided in~\citep{song2020improved}. We adopt these parameters from~\citep{song2020improved, sun2024provable}. In our experiments, we fix $p=0$ for both ML-VRLD and SL-VRLD, and tune the step size $\gamma$ and momentum parameter $\beta$ using \texttt{Optuna}, based on performance evaluated on a validation set of 5 samples. Specifically, for an acceleration factor of $8$, we set the step sizes for SGLD, ML-VRLD, and SL-VRLD to $\gamma=5\times10^{-6}$, $\gamma=7.787\times10^{-6}$, and $\gamma=1.83\times10^{-5}$, respectively. The corresponding momentum parameters for ML-VRLD and SL-VRLD are $\beta=0.318$ and $\beta=0.155$. For an acceleration factor of $12$, the step sizes are $\gamma=5\times10^{-6}$, $\gamma=7.899\times10^{-6}$, and $\gamma=3.452\times10^{-5}$ for SGLD, ML-VRLD, and SL-VRLD, respectively, with corresponding momentum parameters $\beta=0.208$ and $\beta=0.922$ for ML-VRLD and SL-VRLD. For each posterior sampling method, we generate 5 samples by using batch size of $b=32$ coils at each iteration and report the mean sample results. Furthermore, we implement the denoiser-based methods and TV regularization using \texttt{DeepInverse}~\citep{tachella2025deepinverse}. For the training and evaluation of DPS and PnP-DM, we use \texttt{InverseBench}~\citep{zheng2025inversebench}. Visual and quantitative results are presented in Fig.~\ref{fig:visual-comparison} and Table~\ref{tab:quantitative-comparison}, respectively.

```latex
\begin{table}[t]
\centering
\renewcommand\arraystretch{1.1}
\setlength{\tabcolsep}{5pt}

\caption{Peak GPU memory consumption (GB) per iteration for generating a single sample for each method. Lower values indicate better memory efficiency. All results were obtained from a single A100 NVIDIA GPU.}
\label{tab:memory}

\begin{tabular}{lcc cc}
\toprule
& \multicolumn{2}{c}{\textbf{MRI}}
& \multicolumn{2}{c}{\textbf{CT}} \\

\cmidrule(r{20pt}){2-3}
\cmidrule(r){4-5}

\textbf{Method}
& Accel. = $8\times$
& Accel. = $12\times$
& $360$ views
& $180$ views \\
\midrule

TV
& 0.38 & 0.38 & 1.6 & 0.81 \\

RED
& 0.71 & 0.71 & 1.96 & 1.16 \\

PnP-FISTA
& 0.71 & 0.71 & 1.96 & 1.16 \\

DPS
& 2.53 & 2.53 & 3.76 & 2.96 \\

PnP-DM
& 0.71 & 0.71 & 1.96 & 1.16 \\

SGLD
& 0.71 & 0.71 & 1.68 & 1.15 \\

\midrule

ML-VRLD
& 0.71 & 0.71
& 1.68 & 1.15 \\

SL-VRLD
& 0.71 & 0.71
& 1.68 & 1.15 \\

\bottomrule
\end{tabular}
\label{tab:mem-results}
\end{table}

\subsection{Sparse-angle CT Reconstruction}\label{appendix:ct-reconstruction}

We study sparse-angle CT reconstruction, where the goal is to recover high-quality images from a restricted set of projection views. The limited angular coverage renders the problem ill-posed due to incomplete sampling. We simulate parallel-beam CT measurements using 360 and 180 projection views, each with 258 detector elements. The forward operator $\mA$ is implemented via \texttt{deepinv.physics.Tomography} from the \texttt{DeepInverse} package. Additional details regarding the acquisition geometry and the implementation of the measurement operator are provided in the \texttt{DeepInverse} documentation. To state the inverse problem formally, let $\vx\in \sR^d$ denote the unknown CT image and let $\mR\in\sR^{(KM)\times N}$ denote the full discrete Radon (parallel-beam) projection operator over $M$ projection views and $K$ detector elements per view. We assume $\mR$ is stacked by views 
\begin{equation}
    \mR = \begin{bmatrix}
        \mR_1\\
        \mR_2\\
        \vdots\\
        \mR_M
    \end{bmatrix}, \quad \mR_m\in \sR^{K\times N}.
\end{equation}Thus the full sinogram is $\mR\vx \in \sR^{KM}$. To define the sparse-angle sampling operator, let $\xi_k \subset \{1, \ldots, M\}$ denote the index set of measured views at the $k$-th iteration, where $|\xi_k| = b \ll M$, and $b$ denotes the batch size. The measured vies are selected according to a uniform distribution given as 
\begin{equation}
    \sP(\xi_k = S) = \frac{1}{\binom{M}{b}}\quad \text{for every } \quad S\subset\{1,\ldots,M\}, |S|=b,
\end{equation}and the view-selection matrix $P_\xi\in\{0,1\}^{b\times M}$ is defined by
\begin{equation}
    (\mP_{\xi_k})_{i,m} = \begin{cases}
    1, & m = \omega_i, \\
    0, & \text{otherwise},
    \end{cases}
    \qquad
    \text{where } \xi_k = \{\omega_1, \ldots, \omega_{b}\}.
\end{equation}Then the sampling operator for projection views is
\begin{equation}
    \mS_{\xi_k}\coloneqq \mP_{\xi_k} \otimes \mI_K \in \{0,1\}^{(Kb)\times (KM)},
\end{equation}where $\otimes$ denotes the Kronecker product and $I_k$ is the $K\times K$identity matrix. Then the sparse-angle measurement operator is
\begin{equation}
    \mA \coloneqq \mS_{\xi_k}\mR\in \sR^{(Kb)\times d}. 
\end{equation} We consider the linear inverse problem in~(\ref{def:inverse_problem}) with AWGN $\vn\sim\gN(0, \sigma_{\vn}^2I)$. Thus the likelihood is given as $p(\vy|\vx)=\gN(\mA\vx, \sigma_{\vn}^2I)$ and the likelihood score at step $k$ is given as
\begin{equation}
    \nabla f(\vx_{k\gamma}) = -\frac{1}{\sigma_{\vn}^2}\mA^T(\vy - \mA\vx). 
\end{equation} In our experiments, we use mini-batch stochastic gradient estimates so we consider the following decompositions of $\mA$ and $\vy$:
\begin{equation}
    \mA = \begin{bmatrix}
        \mA_1\\
        \vdots\\
        \mA_M
    \end{bmatrix} \quad \text{and} \quad 
    \vy = \begin{bmatrix}
        \vy_1\\
        \vdots\\
        \vy_M
    \end{bmatrix},
\end{equation}where $\mA_i\in \sR^{K\times d}$ and $\vy_i\in \sR^K$. Then the likelihood score becomes a sum
\begin{equation}
    \nabla f(\vx_{k\gamma}) = -\frac{1}{\sigma_{\vn}^2}\sum_{i=1}^M\mA_i^T(\vy_i - \mA_i \vx).
\end{equation}Hence, given a mini-batch views $\xi_k\subset \{1,\ldots,M\}$ with $|\xi_k| = b$, sampled uniformly at random, we obtain an unbiased stochastic gradient estiamte
\begin{equation}
    \nabla f(\vx_{k\gamma}, \xi_k) = -\frac{M}{b\sigma_{\vn}^2}\sum_{i\in\xi_k}\mA_i^T(\vy_i - \mA_i\vx).
\end{equation}In our experiments, we use this formulation for mini-batch stochastic likelihood score estimates. Moreover, we adopt the same baselines as in the MRI experiments, except that zero-filled reconstruction is replaced by filtered back-projection (FBP) with a Ramp filter, which is more suitable for CT reconstruction. For all methods, we consider two batch sizes: 60 views out of 360 and 30 views out of 180 projection views, respectively. We use abdominal CT scans downscaled to $256\times256$ from the 2DeteCT dataset~\citep{kiss20232detect} for our training and test sets. The training set consists of 4,500 slices and is used to train: (i) the denoiser for RED and PnP-FISTA, (ii) the diffusion model for DPS and PnP-DM, and (iii) the SGM for SGLD, ML-VRLD, and SL-VRLD. All methods are evaluated on 100 slices and mean of 5 samples are reported for each posterior sampling algorithm. We use the same annealing schedule and parameters in MRI reconstruction experiments. For ML-VRLD and SL-VRLD, we use the configuration with $p=0$ and optimize the step size $\gamma$ and momentum parameter $\beta$ using \texttt{Optuna} over 5 samples as a validation set. Specifically, for total number of projection views 360, we set the step sizes for SGLD, ML-VRLD, and SL-VRLD, to $\gamma=5.76\times 10^{-6}$, $\gamma=1.11\times 10^{-5}$, and $\gamma=1.35\times 10^{-5}$, respectively. The corresponding momentum parameters for ML-VRLD and SL-VRLD are $\beta=0.583$ and $\beta=0.774$. For the experiments with total number of projection views 180, we set the step sizes for SGLD, ML-VRLD, and SL-VRLD, to $\gamma=5.16\times 10^{-6}$, $\gamma=2.56\times10^{-5}$, and $\gamma=1.7\times 10^{-5}$, respectively. The corresponding momentum parameters are for ML-VRLD and SL-VRLD are $\beta=0.41$ and $\beta=0.988$. We use the same implementations of baseline methods in MRI experiments. Visual and quantitative results are presented in Fig.~\ref{fig:visual-comparison} and Table~\ref{tab:quantitative-comparison}. The implementations of SGLD, ML-VRLD, and SL-VRLD are provided in the supplementary material, along with representative test cases to facilitate reproducibility.

\end{document}